\DeclareMathOperator*{\argmax}{arg\,max\;\;}
\DeclareMathOperator*{\argmin}{arg\,min\;\;}
\definecolor{mygreen}{rgb}{.05,.4,.05}
\definecolor{myblue}{rgb}{.05,.05,.4}
\lstdefinelanguage{Julia}%
  {morekeywords={abstract,struct,function,end,using,return,for,in},%
   sensitive=true,%
   morestring=[s]{"}{"},%
}[keywords,comments,strings]%
\bfseries\color{myblue},
\DeclareMathOperator{\Tr}{Tr}
\newcommand\munderbar[1]{%
  \underaccent{\bar}{#1}}
\newcommand\fakesection[1]{%
 \paragraph{#1}}
\newtheorem{problem}{Problem}
\theoremstyle{plain}
\newtheorem{theorem}{Theorem}[section]
\newtheorem{lemma}[theorem]{Lemma}
\newtheorem{corollary}[theorem]{Corollary}
\theoremstyle{definition}
\theoremstyle{remark}
\begin{document}

%
\runningtitle{Dimensionality Collapse: Optimal Measurement Selection for Low-Error Infinite-Horizon Forecasting}

%

\twocolumn[

\aistatstitle{Dimensionality Collapse: \\ 
                Optimal Measurement Selection for Low-Error Infinite-Horizon Forecasting}

\aistatsauthor{ Helmuth Naumer \And Farzad Kamalabadi }

\aistatsaddress{ University of Illinois Urbana-Champaign \And University of Illinois Urbana-Champaign} ]

\begin{abstract}
    This work introduces a method to select linear functional measurements of a vector-valued time series optimized for forecasting distant time-horizons.
    By formulating and solving the problem of sequential linear measurement design as an infinite-horizon problem with the time-averaged trace of the Cram\'{e}r--Rao lower bound (CRLB) for forecasting as the cost, the most informative data can be collected irrespective of the eventual forecasting algorithm.
    By introducing theoretical results regarding measurements under additive noise from natural exponential families, we construct an equivalent problem from which a local dimensionality reduction can be derived.
    This alternative formulation is based on the future collapse of dimensionality inherent in the limiting behavior of many differential equations and can be directly observed in the low-rank structure of the CRLB for forecasting.
    Implementations of both an approximate dynamic programming formulation and the proposed alternative are illustrated using an extended Kalman filter for state estimation, with results on simulated systems with limit cycles and chaotic behavior demonstrating a linear improvement in the CRLB as a function of the number of collapsing dimensions of the system.
\end{abstract}

\section{INTRODUCTION}
\label{sec:intro}
There is a long history of optimal experimental design in statistics.
In such work, rather than directly design estimators, sequences of experiments are designed to optimize some information criteria.
This criteria in turn implies the existence of accurate estimators.
These optimization objectives often take the form of functions of the inverse Fisher information matrix such as the alphabet optimality criteria including minimizing the trace, determinant, or maximum eigenvalue of the matrix \citep{chaloner_1995, Dette_designing}.
In the past two decades, a significant motivation of optimal experimental design has been found in optimal sensor placement \citep{martinez_2006, ranieri_2014, xygkis_2018}.
In control theory, sequential experimental design appears in the design of informative inputs to a parameterized system for the purpose of identifying the unknown parameters \citep{fohring_adaptive, asprey_designing, Huan_sequential, titterington_aspects}.
Despite this history, it has recently been observed that optimizing for correct system parameters often produces poor forecasts of the future system state, and that predictive performance should be considered directly \citep{casey_optimal, Transtrum_2012, letham_2016}.
These works typically assume the ability to control the system itself, observing multiple full trajectories and driving the system into informative regions of the state space.
Contrary to this, in many time-series forecasting problems, we only observe a single trajectory of which we have no control.

Time series forecasting is a long-studied problem often motivated by numerous applications such as weather prediction \citep{penland_prediction_1993, Ehrendorfer_1994, ehrendorfer_2006}, finance \citep{Abu-Mostafa_1996, Cao_2001}, and control \citep{Bertsekas_2017_1}.
Oftentimes, algorithms focus only on short-term predictions, intrinsically coupled with state estimation as seen in classical techniques such as the Kalman filter and particle filter.
In the extreme short-term, there have been significant developments in recent years applying machine learning tools such as generative adversarial networks to form the predictions \citep{shi_2017,Ravuri_2021}.

On the other extreme timescale, there has been growing interest in reconstructing information about the limiting behavior of the time series from data.
The approaches vary dramatically from the application of embeddings theorems to reconstruct the shape of attractors from lower-dimensional measurements \citep{Gilphi_2020}, to regression techniques that learn sparse governing equations \citep{Brunton_2016,Kaiser_2018,Zhang_2019,messenger_2021}, to Koopman operator and Dynamic Mode Decomposition (DMD) based techniques which form inherently linear approximations in higher dimensional spaces \citep{schmid_2010,williams_2015,proctor_2016,erichson_2019}.
Furthermore, the scientific computing community has been recently experimenting with using neural networks to approximate the solutions of differential equations through physics-informed neural networks \citep{raissi_2019} as a more computationally efficient alternative to Gaussian process regression \citep{raissi_2018}.

Even when the underlying data is not governed by differential equations, many modern techniques seek to learn a differential equation which well-represents the phenomena of interest as a new general form of regression --- enabling the application of dynamical systems tools to broader learning problems.
While some of these techniques are focused on using the expressive nature of differential equations in lieu of neural networks \citep{chen_2018,dupont_2019,dandekar_2020,massaroli_2020,avelin_2021}, others instead use the extensive understanding of differential equations to constrain the geometry of the estimates based on integrals of motion \citep{greydanus_2019,cranmer_2020}.
Furthermore, there have been attempts to construct implicitly deep networks by fitting the fixed points of an ODE in deep equilibrium models \citep{bai_2019, bai_2020, pabbaraju_2021,gilton_2021}.

A fortunate consequence of this trend in modeling is that the generative assumptions involving differential equations often implicitly imbue the future with significant structure.
Even Lyapunov stability, one of the weakest forms, immediately implies that the feasible set of future points monotonically shrinks over time \citep{khalil_2002}.
In many cases though, this additional structure takes the form of the guarantee that the long-term behavior is actually confined to a lower dimensional space than the ambient dimensionality.
In this work, we propose methods based on this dimensionality collapse for informed data collection in forecasting time series modeled by differential equations.


Our key observation is that, by incorporating the future structure of the dynamical system into the present, many linear measurements become entirely uninformative --- no meaningful prediction regarding distant time-horizons can be made from the observation.
By avoiding these uninformative measurements in favor of observing the informative subspace, we introduce a method for reducing the Cram\'{e}r--Rao lower bound (CRLB) for forecasting.
This reduction scales linearly with the dimensionality of the uninformative space, reducing the errors in estimation resulting from embedding a low-dimensional manifold of feasible data in a high-dimensional ambient observation space.

\subsection*{Contributions}
\begin{itemize}
    \item We analyze the problem of sequential linear measurement design in systems governed by differential equations with low-dimensional limiting behavior, providing both theoretical results and practical algorithms.
    \item We prove that the time-averaged infinite horizon optimal sampling problem is equivalent to maximizing the sum of $K$ squared inner products subject to a norm constraint, where $K$ is the dimensionality of the limiting behavior, thereby reducing the dimensionality of the problem.
    \item We specialize our results to construct a computationally efficient solution to the problem when the system converges to an isolated limit cycle.
    \item We demonstrate the algorithm on simulated chaotic systems and limit cycles systems: illustrating the broad applicability of the model assumptions and the linear improvement of the CRLB as a function of the number of collapsing dimensions.
\end{itemize}

\section{PROBLEM FORMULATION}
\label{sec:model}
In this work, we construct a method to optimize a policy for linear functional sampling which minimizes the CRLB for forecasting future values of the time series.
There are many closely related variations of this problem, all built upon the same theme.
We begin this section by defining the model assumptions, before explicitly defining the variations of the problem solved in the manuscript.
The section concludes with a brief note on the dynamic programming approach to the problems, which represent a clear baseline.

\begin{table*}[t]
    \vskip 0.15in
    \begin{center}
    \begin{small}
    \begin{sc}
    \begin{tabular}{|c|ccc|}
    \toprule
    System   & Parameter Conditions & Ambient Space & Limiting Space \\
    \midrule
    Gradient Flow & Morse Function & $\mathbb{R}^M$  & Countable Set \\
    Van Der Pol             & Known $\mu>0$ & $\mathbb{R}^2$ & $\mathbb{S}^1$ \\
    Hopf Bifurcation        & Unknown $\alpha, \lambda$  & $\mathbb{R}^4$ & $\mathbb{R}_+ \times \mathbb{S}^1$ \\
    Lorenz                  & Known $\sigma, \rho$  & $\mathbb{R}^3$ & Fractal Dimension 2.06 \\
    Heat Equation           & Known $\alpha>0$ & $W^{1,2}(\Omega)$      & $\mathbb{R}$ \\
    \bottomrule
    \end{tabular}
    \end{sc}
    \end{small}
    \end{center}
    \vskip -0.1in
    \caption{Example dynamical systems with larger ambient space than limiting space.}
    \label{tab:examples}
\end{table*}

\fakesection{System Model}
This work analyzes the state space model of the form
\begin{align}
    \dot{x} &= f(x),\label{eq:ssdyn}\\
    y_i &= \langle u_i, x_{\tau_i} + \xi_i \rangle \label{eq:ssmodel}
\end{align} 
where $x \in \mathbb{R}^M$ is the state with temporal derivative $\dot{x} = \frac{dx}{dt}$, $f: \mathbb{R}^M \rightarrow \mathbb{R}^M$ is Lipschitz on some domain containing the system trajectory with a Lipschitz continuous derivative, $y_i$ is the $i$'th measurement, $x_{\tau_i}$ is the state at time $\tau_i$, $\langle u_i, \cdot \rangle$ represents the $i$'th chosen measurement functional based on the Euclidean inner product, and $\xi_i$ is additive noise during observation $i$, independent between timesteps with further assumptions to be described in detail in the following paragraph.
In general, we can combine unknown constant parameters of the dynamical system with the state.
The Lipschitz dynamics constraint is common when working with differential equations because it enables the existence and uniqueness theorem, guaranteeing a unique solution to a differential equation for a given initial state \citep{khalil_2002}.
The uniqueness of trajectories immediately implies the existence of a set of time-advancing functions $\varphi^\tau: \mathbb{R}^M \rightarrow \mathbb{R}^M$ which satisfies the semigroup property $\varphi^{\tau_1} \circ \varphi^{\tau_2} = \varphi^{\tau_1 + \tau_2},$ where $\circ$ represents function composition.
Such a group of functions is known as the flow of the dynamical system.
We denote the state at time $\tau$ as $x_\tau$, and the flow is defined such that $x_\tau = \varphi^\tau(x_0)$ for any given initial state $x_0$.
If additionally $f$ is differentiable in $x$, then $\varphi^\tau$ can be directly shown to be continuously differentiable in $x$ for all $\tau$ through the construction of the sensitivity equation \citep{khalil_2002}.
Differentiability is only a very mild additional assumption for Lipschitz dynamical systems, as Lipschitz functions from $\mathbb{R}^N$ onto $\mathbb{R}^N$ can always be uniformly approximated by smooth Lipschitz functions [see, for instance, \citep{hajek_2010} for a more general case].

\fakesection{Measurement Family}
The family of measurements is defined through two steps.
First, we assume that the state of the system represents some physically meaningful quantities for which we have some method of controlled linear functional observation under additive noise, as shown in Equation \eqref{eq:ssmodel}.
Linear functional observations capture many common types of measurements.
If our state space is extended, for instance, to almost everywhere bounded continuous functions, i.e. $(C(\Omega),\|\cdot\|_{\infty})$, then line integrals representing tomography and Dirac delta functions representing discrete sensor placement both exist in our measurement family.
Furthermore, in finite dimensional spaces, linear combinations of measurements represent common applications such as beamforming or discretized versions of the previous continuous examples.

We further assume that each $\xi_i$ is in a natural exponential family (NEF).
A random variable $\xi$ is said to be in a NEF if the probability density function can be written as
\begin{equation}
    p(\xi; \theta) = \exp\{\theta^\top \xi - A(\theta) + B(\xi)\}
\end{equation}
where $A$ is the log-partition function, $\theta$ is an unknown parameter vector, and $B$ is the log-base function.
Many common distributions can be reparameterized into this form, such as Gaussian, Gamma, or Poisson distributions \citep{Morris_2006}. 
\textit{Note that the elements of the noise vector need not come from the same probability distribution, nor even the same family of distributions.}
For instance, one element could represent Gaussian noise, while another could be Poisson noise.

\fakesection{Forecasting Bound Objective}
The CRLB is a well-known matrix inequality for the covariance of unbiased estimators. 
While a full description is provided in Appendix \ref{app:crlb}, we briefly describe the key properties in this section.
Through the well-known decomposition of mean squared error (MSE) into the sum of squared bias and variance, $\text{MSE} = \mathbb{E}\left[\|\hat{x} - \mathbb{E}[\hat x]\|_2^2\right]$ for all $\hat{x}$ such that $\mathbb{E}[\hat{x} - x] = 0$.
By expressing the squared norm as $\|\hat{x} - \mathbb{E}[\hat x]\|_2^2 = (\hat{x} - \mathbb{E}[\hat x])^\top(\hat{x} - \mathbb{E}[\hat x])$ and applying the cyclic permutation invariance and linearity of the trace, we observe that the MSE is lower bounded as
\begin{align}
    \mathbb{E}[\|\hat{x} - x\|_2^2] &= \text{Tr}\left(\mathbb{E}[(\hat{x} - \mathbb{E}[\hat{x}])(\hat{x} - \mathbb{E}[\hat{x}])^\top ]\right)\\
    &= \text{Tr}\left(\Sigma_{\hat{x}}\right) \geq \text{Tr}(\munderbar{\Sigma}),
\end{align}
where $\Sigma_{\hat{x}}$ is the covariance of $\hat{x}$ and $\munderbar{\Sigma}$ is the CRLB, and $\Tr$ denotes the trace of a matrix.
By considering the future state to be the unknown parameter, the same bound may be applied to forecasting.
In forecasting deterministic Lipschitz dynamical systems based on measurements in exponential families of random variables, the CRLB undergoes the linear transformation
\begin{equation}
    \munderbar{\Sigma}_{\tau} = \left[d \varphi^\tau_{x_0}\right] \munderbar{\Sigma}_{0} \left[d \varphi^\tau_{x_0}\right]^\top,\label{eq:reparam}
\end{equation}
where $\munderbar{\Sigma}_\tau$ denotes the CRLB for predicting the state at time $\tau$, $\munderbar{\Sigma}_0$ denotes the CRLB for estimating the current state, and $d \varphi^\tau_{x_0}$ represents the differential of $\varphi^\tau$ at $x_0$.
Brackets have been included to emphasize that $d \varphi^\tau_{x_0}$ is a matrix and represents the Jacobian of the flow.
This transformation comes directly from the observation that advancing the state forward in time is a diffeomorphic reparameterization of the probability distribution \citep{lehmann_1998}.

We assert that in many stable systems, $d \varphi^\tau_{x_0}$ becomes approximately low-rank as $\tau$ grows large, and thus we can consider optimal data collection to be done by making measurements which minimize $\munderbar{\Sigma}_0$ along the non-zero modes of the Jacobians.
This low-rank structure occurs due to systems converging to lower-dimensional limiting spaces.
In Table \ref{tab:examples}, we provide a number of examples of dynamical systems with their ambient space and limiting space.
In the most extreme case, the heat equation reduces from a Sobolev space $W^{1,2}(\Omega)$, which is countably infinite, to the real line which is one dimensional.
Full system definitions are provided in Appendix \ref{app:table}.

\fakesection{Measurement Optimization}
Representing the expected squared magnitude of the error of potential estimators, we consider our cost to be the trace of the CRLB.
We consider both the time averaged cost which is applicable to problems such as limit cycles, as well as a discounted cost formulation more strongly applicable to chaotic dynamics due to the exponential growth of the CRLB based on the largest Lyapunov exponent \citep{Bertsekas_2017, strogatz_2015}.

To simplify the computational aspects while retaining the meaningful structure, we approximate our cost through sampling.
Then, we have the following two infinite-horizon dynamic programming problems, where $\gamma$ represents the discount factor and $\mathcal{S}$ represents the set of indices for the measurement times.
\begin{problem}[Discounted Cost Optimal Design]
    \begin{equation}
        \argmin_{\{u_i\}_{i \in \mathcal{S}}}\lim_{N \rightarrow \infty} \sum_{j=1}^{N} \gamma^{j} \Tr \left(\munderbar{\Sigma}_{\tau_j}(\{u_i\}_{i \in \mathcal{S}})\right) \label{eq:discount}
    \end{equation}
\end{problem}

\begin{problem}[Average Cost Optimal Design]
    \begin{equation}
        \argmin_{\{u_i\}_{i \in \mathcal{S}}} \lim_{N \rightarrow \infty} \frac{1}{N} \sum_{j=1}^{N} \Tr \left( \munderbar{\Sigma}_{\tau_j}(\{u_i\}_{i \in \mathcal{S}})\right)
    \end{equation}
\end{problem}

Both discounted and average cost infinite-horizon problems are well-studied in stochastic control and in reinforcement learning, where numerous variants of value and policy iteration are applied \citep{Bertsekas_2017}.

An important note is that in unstable and chaotic systems, we still observe an approximate low dimensionality.
Consider a system for which the two largest eigenvalues of $d\varphi^{\tau}$ are $\lambda_1, \lambda_2 > 1$.
Without additional sampling, the ratio of the two largest eigenvalues of the geometric sum $\sum_j \gamma^j d\varphi^{\tau j}$ in the discounted cost formulation will become
\begin{equation}
    \frac{1 - \lambda_2 \gamma}{1 - \lambda_1 \gamma}.
\end{equation}
There always exists a discount factor $\gamma<1$ for which this gap becomes arbitrarily large.
Hence, there will always be some discounted cost formulation for which the cost matrix corresponding to the CRLB is approximately singular.

\fakesection{Dynamic Programming Formulation}
The standard formulation to infinite-horizon discounted and average cost problems is through the Bellman equation, enabling common algorithms such as value iteration and policy iteration.
For a known state, the discounted cost Bellman equation becomes
\begin{equation}
    \begin{split}
        x_+ &= \varphi^\tau(x) \\
        \munderbar{\Sigma}_+(u) &= \left[d \varphi^\tau_x\right]\left(\munderbar{\Sigma}^{-1} + \|u\|_{\Lambda_\sigma}^{-2} u u^\top\right)^{-1}\left[d \varphi^\tau_x\right]^\top\\
        J(x, \munderbar{\Sigma}) &= \Tr \munderbar{\Sigma} + \gamma \max_{u \in \mathcal{A}}\left\{J\left(x_+,  \munderbar{\Sigma}_+(u)\right)\right\} ,
    \end{split}
    \label{eq:update}
\end{equation}
where $x_+$ and $\munderbar{\Sigma}_+$ represent the updated state and CRLB respectively, $\|u\|_{\Lambda_\sigma}^2 = u^\top \Lambda_\sigma u$ represents the squared norm induced by the measurement noise covariance matrix $\Lambda_\sigma$, $J$ represents the value function, and $\mathcal{A}$ denotes the set of possible normalized measurements functionals \citep{Bertsekas_2017}. 
The update to the CRLB comes from the total Fisher information of two independent observations being the sum of the Fisher information matrices.
The CRLB update in Equation \eqref{eq:update} comes directly from the form of the Fisher Information in Theorem \ref{thm:rank}.

To apply most common iterative algorithms for solving the Bellman equation, the state space must be discretized.
Without strengthening the assumptions beyond the typical, already asserted, dynamical systems assumptions, the number of required samples can be shown to grow exponentially with the dimensionallity of the system.
As a baseline in this work, we include a local-averaging approximation, the details of which including sample complexity are available in Appendix \ref{app:approx} of the supplemental information.

\section{DIMENSIONALITY REDUCTION TECHNIQUE}
\label{sec:oracle}

The computational cost of the dynamic programming approach with sampled state space approximation becomes prohibitively expensive as the ambient dimensionality of the system grows (See Appendix \ref{app:approx}).
However, the future of a dynamical system will often be low dimensional, and in many cases approximately one dimensional.
In this section, we analyze the problem when the governing dynamical system converges to a low dimensional limit set, enabling a computationally efficient alternative to the dynamic programming formulation for the average cost formulation.
We then briefly discuss the extension to arbitrary low-dimensional embedded limit sets.
While proofs for all results appear in Appendix \ref{app:oracle}, we include numerous lemmas to aid in the exposition.

The core ideas of this section can be seen in Figure \ref{fig:DimensionCollapse} applied to the Van Der Pol oscillator, a common example of a system exhibiting a limit cycle.
In all panels, the plot axes represent the state of the 2D system.
In panels A and B, we see the vector field and some example trajectories of system.
Panel D illustrates $\varphi^\infty := \lim_{\tau \rightarrow \infty} \varphi^\tau$ computed numerically, where the color indicates a representation of the phase along the limit cycle.
As the mapping $\varphi^\infty$ illustrated in panel D maps onto a one-dimensional space in a smooth manner outside of the origin, the Jacobian of the limiting flow in the ambient space is at most rank 1.
The row space of $d \varphi^\infty_x$ is plotted in panel C, representing the subspace that does not vanish as time goes to infinity.
Our proposed method seeks to minimize the CRLB with respect to this subspace.

\begin{figure}[ht]
    \centering
    \includegraphics[width=.45\textwidth]{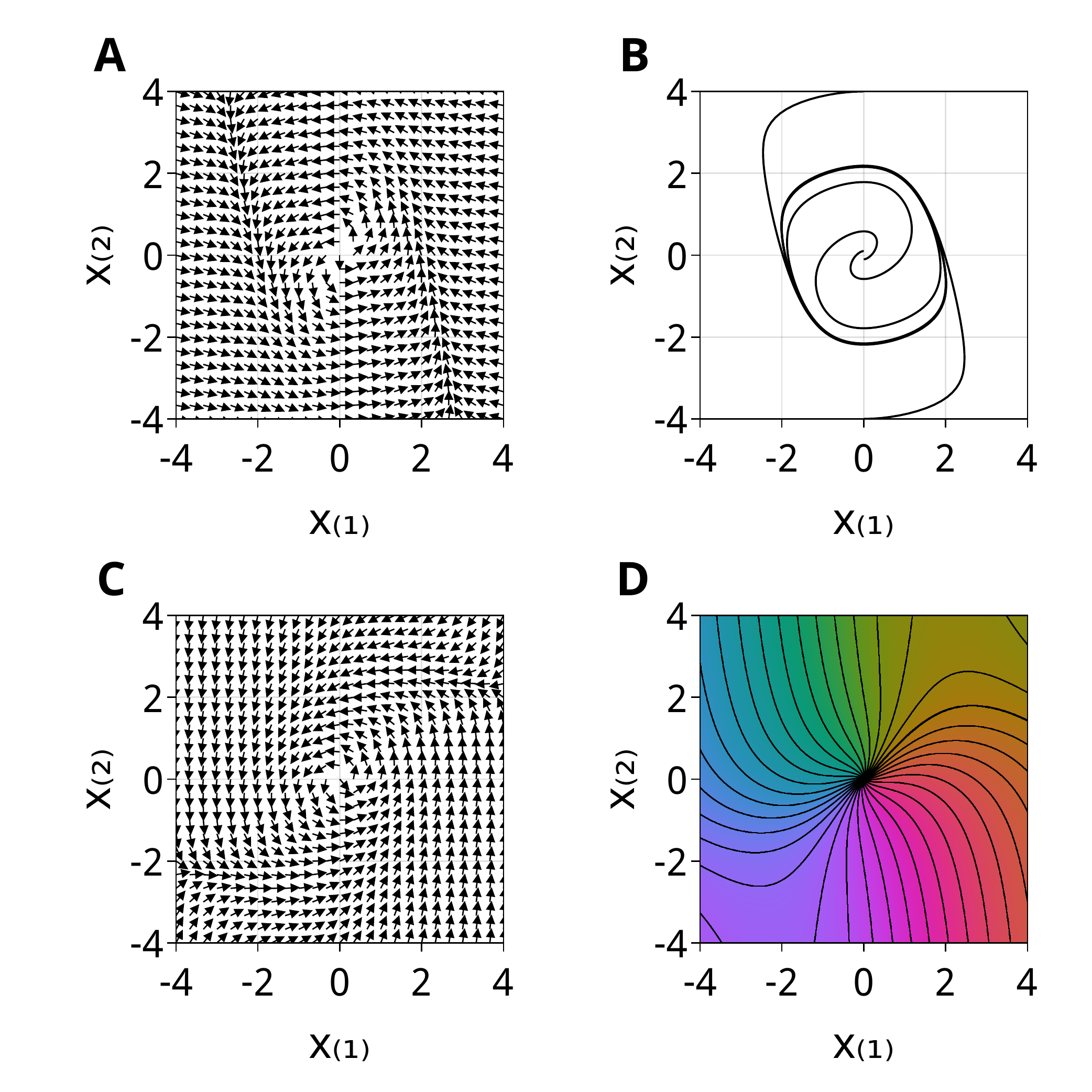}
    \vspace*{-3mm}
    \caption{A: Vector field of Van Der Pol oscillator; B: Four example trajectories;  C: Row space of limiting Jacobian; D: A smooth map $\varphi^\infty: (\mathbb{R}^2 - \{0\}) \rightarrow \mathbb{S}^1$ defined by the dynamics.}
    \label{fig:DimensionCollapse}
\end{figure}

In Section \ref{sec:fish}, we introduce a series of statements characterizing the Fisher information matrix of the state vector for a single measurement of the form described in Equation \eqref{eq:ssmodel}.
We show that the contribution of each measurement is, under a mild condition, the outer product of the measurement vector $u$, normalized by a norm defined by the noise distribution covariance matrix.
Then, in Section \ref{sec:flow} we show that this matrix structure enables a simple reformulation of the infinite horizon objective in order to simplify the problem of sequential optimal experimental design for forecasting.

\subsection{Linear Measurements And Fisher Information}
\label{sec:fish}
In this section, we show that independent NEFs preserve essential structures under a weighted average, resulting in a simple expression for the Fisher information of a linear functional measurement.

First, we must observe that the weighted average of independent random variables in NEFs is itself in another NEF.
\begin{lemma}[Proof: See Lemma \ref{prf:nef}]
    \label{lemma:nef}
    Given a finite set of independent random variables $\{\xi_{(i)}\}$ in NEFs with log-partition functions $\{A_i\}$, parameters $\{\theta_i\}$, and a set of real-valued weights $\{\alpha_i\}$, then $\bar \xi = \sum_i \alpha_i \xi_{(i)}$ is in a NEF with a log partition function
    \begin{equation}
        A(\theta) := \sum_i A_i\left(\theta +  \alpha_i\theta_i\right).
    \end{equation}
\end{lemma}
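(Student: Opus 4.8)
The plan is to exhibit the distribution of $\bar\xi$ as the member at the origin of the natural exponential family obtained by exponentially tilting it, and then to identify that family's log-partition function with the cumulant generating function of $\bar\xi$, which independence factors into a sum of shifted copies of the $A_i$. I would run the argument through the generating-measure description of a NEF rather than through densities: a family is a NEF precisely when its members are the exponential tilts $dP_\theta/d\nu(s)=\exp\{\theta^\top s-A(\theta)\}$ of a single $\sigma$-finite base measure $\nu$, with $A(\theta)=\log\int e^{\theta^\top s}\,d\nu(s)$ on the domain $\{\theta:A(\theta)<\infty\}$; when $\nu$ is dominated by Lebesgue or counting measure this is exactly the density form $\exp\{\theta^\top\xi-A(\theta)+B(\xi)\}$ used in the statement. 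Taking $\nu$ to be the distribution of $\bar\xi$ itself makes the $\theta=0$ member coincide with $\bar\xi$, sidesteps any explicit convolution, and handles uniformly the case --- emphasized in the text --- where the summands mix continuous and discrete laws.

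The concrete steps are short. First I would record the one-variable tilting identity: directly from the definition of $A_i$ as a log-normalizer, $\mathbb{E}[\exp\{t^\top\xi_{(i)}\}]=\exp\{A_i(\theta_i+t)-A_i(\theta_i)\}$ whenever $\theta_i+t$ lies in the natural parameter set $\Theta_i$ of the $i$-th family, which holds for all $t$ in a neighborhood of the origin since $\theta_i$ is an interior point of $\Theta_i$. Specializing to $t=\alpha_i\theta$ computes the moment generating function of each scaled summand $\alpha_i\xi_{(i)}$, again a shift of $A_i$. Then, because the index set is finite and the $\xi_{(i)}$ are independent, the moment generating function of $\bar\xi$ factorizes as the product of these, so that $A(\theta):=\log\mathbb{E}[\exp\{\theta^\top\bar\xi\}]$ is a sum of the individual log-partition functions evaluated at the appropriate translates --- the stated $\sum_i A_i(\theta+\alpha_i\theta_i)$, up to an additive constant that may be absorbed into the log-base function without changing the family. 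Finiteness and smoothness of $A$ on an open neighborhood of the origin, inherited from the source families, then certify that the distribution of $\bar\xi$ indeed belongs to a NEF with this log-partition function.

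The part that needs care is not the calculation but the bookkeeping that makes the output an honest NEF with an open parameter domain: one has to check that the common domain $\bigcap_i\{\theta:\theta_i+\alpha_i\theta\in\Theta_i\}$ contains an open neighborhood of $0$, which uses finiteness of the index set together with interiority of each $\theta_i$; and one has to set up the measure theory so that ``base measure $=$ distribution of $\bar\xi$'' is legitimate even when the summands are defined against different dominating measures. If instead one wants the explicit $\exp\{\theta^\top\xi-A(\theta)+B(\xi)\}$ density form in full generality, a secondary and purely cosmetic step is to obtain the log-base function of $\bar\xi$ from a convolution of the $B_i$; this adds nothing to the conclusion, since the generating-measure argument already places $\bar\xi$ in a NEF with the stated log-partition.
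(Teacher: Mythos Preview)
Your approach is essentially the same as the paper's: both compute the moment generating function of $\bar\xi$ via the NEF identity $\mathbb{E}[e^{t\xi_{(i)}}]=\exp\{A_i(\theta_i+t)-A_i(\theta_i)\}$, factor by independence, and read off the new log-partition function from the form of the resulting MGF. The paper reduces to two summands and gestures at induction while you treat the finite sum directly and add careful bookkeeping on the openness of the parameter domain and the choice of base measure, but the core argument is identical.
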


While we still lack a significant amount of knowledge regarding the specific distribution, fundamental properties of the log-partition function allow us to directly compute the Fisher information of the observation about the parameter in the new family of probability distribution.
\begin{lemma}[Proof: See Lemma \ref{prf:fisher}]
    \label{lemma:fisher}
The Fisher information of a random variable in a NEF is the variance of the random variable.
\end{lemma}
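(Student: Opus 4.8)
The plan is to work directly from the defining form of the NEF density and identify the Fisher information about the natural parameter $\theta$ with the negative expected Hessian of the log-likelihood. Writing $\log p(\xi;\theta) = \theta^\top \xi - A(\theta) + B(\xi)$, the score is $\nabla_\theta \log p(\xi;\theta) = \xi - \nabla A(\theta)$, which is affine in $\xi$, and the Hessian $\nabla^2_\theta \log p(\xi;\theta) = -\nabla^2 A(\theta)$ does not depend on $\xi$ at all. Hence $I(\theta) = -\mathbb{E}\bigl[\nabla^2_\theta \log p(\xi;\theta)\bigr] = \nabla^2 A(\theta)$, and the statement reduces to the classical fact that $\nabla^2 A(\theta) = \operatorname{Cov}(\xi)$ — the variance in the scalar case that the lemma refers to.

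To establish that fact, I would derive the two standard moment identities for the log-partition function. Starting from the normalization $\int \exp\{\theta^\top \xi + B(\xi)\}\,d\xi = \exp\{A(\theta)\}$, differentiating once under the integral sign gives $\nabla A(\theta)\exp\{A(\theta)\} = \int \xi \exp\{\theta^\top\xi + B(\xi)\}\,d\xi$, i.e. $\nabla A(\theta) = \mathbb{E}[\xi]$; differentiating a second time gives $\bigl(\nabla^2 A(\theta) + \nabla A(\theta)\nabla A(\theta)^\top\bigr)\exp\{A(\theta)\} = \int \xi\xi^\top \exp\{\theta^\top\xi + B(\xi)\}\,d\xi$, hence $\nabla^2 A(\theta) = \mathbb{E}[\xi\xi^\top] - \mathbb{E}[\xi]\mathbb{E}[\xi]^\top = \operatorname{Cov}(\xi)$. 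Combining this with the first paragraph yields $I(\theta) = \operatorname{Cov}(\xi)$. As a consistency check one can instead evaluate $I(\theta) = \mathbb{E}\bigl[(\xi - \nabla A(\theta))(\xi - \nabla A(\theta))^\top\bigr]$ straight from the score and recover the same expression using $\nabla A(\theta) = \mathbb{E}[\xi]$.

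The only genuine subtlety is the interchange of differentiation and integration underpinning the moment identities, and this is the step I would be most careful about. I would dispatch it by invoking the standard regularity of exponential families: on the interior of the natural parameter space the integral $\int \exp\{\theta^\top\xi + B(\xi)\}\,d\xi$ is finite and, via dominated convergence with a local exponential envelope, is smooth in $\theta$ with derivatives obtained by differentiating under the integral. Since the noise models considered in this work — Gaussian, Gamma, Poisson, and coordinatewise mixtures of these — all have their parameters in the interior of the corresponding natural parameter spaces, this hypothesis is satisfied; I would state it explicitly and cite a standard reference. Everything else in the argument is purely algebraic and presents no difficulty.
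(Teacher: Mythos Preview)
Your argument is correct. The paper's proof is precisely your ``consistency check'': it writes the log-likelihood $\theta\xi - A(\theta) + B(\xi)$, differentiates once to get the score $\xi - A'(\theta)$, recalls $A'(\theta) = \mathbb{E}[\xi]$, and reads off $I(\theta) = \mathbb{E}[(\xi - A'(\theta))^2] = \operatorname{Var}(\xi)$ directly from the score-variance definition of Fisher information. Your primary route instead passes through the information equality $I(\theta) = -\mathbb{E}[\nabla^2_\theta \log p] = A''(\theta)$ and then establishes the cumulant identity $A''(\theta) = \operatorname{Var}(\xi)$ by differentiating the normalizer twice. Both are standard and short; the paper's version is marginally more economical since it avoids invoking the information equality and the second moment identity separately, while yours makes the role of the log-partition function's convexity and the underlying regularity (differentiation under the integral) more explicit, which is a virtue given how the lemma is used downstream.
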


\begin{figure}[t]
\[\begin{tikzcd}
	{\mu} & {\bar \mu} \\
	{\theta} & {\bar \theta}
	\arrow["h", from=1-1, to=1-2]
	\arrow["{g^{-1}}", from=1-2, to=2-2]
	\arrow["{\{g_i^{-1}\}}"', from=1-1, to=2-1]
	\arrow["h"', from=2-1, to=2-2]
	\arrow["T"{description}, dashed, from=1-1, to=2-2]
\end{tikzcd}\]
    \caption{Commutative diagram of key maps and variables.}
    \label{fig:commute}
\end{figure}
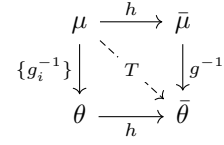

The remaining task is to relate the Fisher information about the new parameter to the Fisher information regarding the mean values of the original distributions.
The connections between the key parameters of interest are illustrated in the commutative diagram shown in Figure \ref{fig:commute}.
In the illustration, $\mu$ represents the mean of the noise vector $\xi$, $\bar \mu$ represents the mean of $\bar \xi$, $\theta$ represents the parameter vector of $\xi$, and $\bar \theta$ represents the parameter of $\bar \xi$ in the new model family.
We can immediately observe $h: \mu \mapsto \langle u, \mu \rangle$ by construction, and $g: \theta \mapsto \left.\frac{\partial}{\partial \Theta} A(\Theta) \right|_{\Theta = \theta}$ with the inverse denoted $g^{-1}$.
Next, $g_i$ and $g$ are defined identically, but for $\xi_{(i)}$ rather than $\bar \xi$.
The diagram commutes, and so we denote $T$ to be the implicitly defined mapping from $\mu$ to $\bar \theta$.

\begin{lemma}[Proof: See Lemma \ref{prf:diagram}]
    \label{lemma:diagram}
    The diagram in Figure \ref{fig:commute} commutes for any vector $\xi$ of independent random variables from NEFs.
\end{lemma}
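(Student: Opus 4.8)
The plan is to prove commutativity in the standard way: make each of the four arrows explicit and then check that the two directed paths from $\mu$ to $\bar\theta$ induce the same map, which simultaneously exhibits the dashed diagonal $T$. The top arrow needs only linearity of expectation: since $\bar\xi=\sum_i u_i\xi_{(i)}$ we have $\mathbb{E}[\bar\xi]=\sum_i u_i\mathbb{E}[\xi_{(i)}]=\langle u,\mu\rangle$, so $h$ indeed sends the tuple of component means to $\bar\mu$. For the two vertical arrows I would use the standard fact that the log-partition function of a NEF is strictly convex on the interior of its natural-parameter domain; combined with Lemma~\ref{lemma:fisher}, which identifies $A_i''$ and $A''$ with $\mathrm{Var}(\xi_{(i)})$ and $\mathrm{Var}(\bar\xi)$ and hence guarantees positivity, this makes each mean map $g_i=A_i'$ and the mean map $g=A'$ of the NEF produced by Lemma~\ref{lemma:nef} strictly increasing, hence bijective onto the interior of the corresponding mean domain (a mild non-degeneracy and interior-regularity condition is all that is used here). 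In particular $\{g_i^{-1}\}$, acting coordinatewise, and $g^{-1}$ are well defined.

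The remaining arrow is the bottom one, and reading off its concrete form from Lemma~\ref{lemma:nef} is where the content sits. Differentiating the log-partition $A$ supplied there writes $\bar\mu=A'(\bar\theta)$ as an additive combination of the component mean maps $A_i'$ evaluated at affine functions of $\bar\theta$ and of the component parameters; solving this relation for $\bar\theta$ presents it as a function of $\theta$, which is the bottom arrow $h:\theta\mapsto\bar\theta$. With all four arrows in hand, the commutativity check is a short diagram chase: starting from $\mu$, the down-then-across composite $h\bigl(\{g_i^{-1}\}(\mu)\bigr)$ routes the canonical parameters $\theta_i=g_i^{-1}(\mu_i)$ back through the mean maps $g_i$ and the linear functional $\langle u,\cdot\rangle$, reconstructing $\langle u,\mu\rangle=\bar\mu$, and then applies $g^{-1}$; this is precisely the across-then-down composite $g^{-1}\bigl(h(\mu)\bigr)$. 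Hence both paths equal the single map $T:=g^{-1}\circ h=h\circ\{g_i^{-1}\}$ and the square commutes. The argument used nothing beyond independence of the $\xi_{(i)}$ (to apply Lemma~\ref{lemma:nef}) and strict convexity of each individual log-partition, so it is insensitive to whether the components belong to the same NEF or to different ones, which is the generality asserted.

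The step I expect to be the genuine obstacle is identifying the bottom arrow correctly. By analogy with the top arrow it is tempting to guess $\theta\mapsto\langle u,\theta\rangle$, but this is false in general: the canonical parameter of a weighted sum of independent NEF variables is a nonlinear, essentially variance-weighted, function of the summands' canonical parameters, coinciding with $\langle u,\theta\rangle$ only in degenerate special cases such as equal-variance Gaussians. The correct map is the one dictated by Lemma~\ref{lemma:nef}, obtained by passing through the mean coordinates, and once that identification---together with the bijectivity of every mean map on the relevant interior---is made precise, the remaining verification is routine bookkeeping. Accordingly I would spend most of the write-up on these two points and keep the diagram chase itself brief.
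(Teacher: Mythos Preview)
Your route diverges from the paper's precisely at the point you flag as the obstacle. The paper does not derive the bottom arrow from Lemma~\ref{lemma:nef} as a possibly nonlinear map; it asserts that the bottom arrow is literally the same linear functional $h:\theta\mapsto\langle u,\theta\rangle$ as the top, and justifies this in one line by invoking the freedom to shift the natural parameter of the NEF for $\bar\xi$ by an additive constant---since translating $\bar\theta$ does not change the family, one ``centers'' the parameterization so that $\bar\theta=h(\theta)$. Your suspicion that this is too convenient is well founded: a single additive shift can force $\bar\theta=\langle u,\theta\rangle$ at one fixed $\theta$ but not simultaneously for all $\theta$, as one already sees for independent Poissons (where $\bar\theta=\log(\mu_1+\mu_2)$ while $\langle u,\theta\rangle=\log(\mu_1\mu_2)$) or Gaussians with unequal variances. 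So the paper's argument really delivers a pointwise statement rather than a commuting diagram of maps. That said, the only downstream consumer is Theorem~\ref{thm:rank}, which uses just the Jacobian of $T=g^{-1}\circ h$ at a single point via the right-then-down path, so the pointwise version suffices there and the bottom arrow is never actually invoked.

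Your alternative is more honest about the global picture but, as you half-acknowledge, somewhat circular on the commutativity claim itself: once you define the bottom arrow as the map obtained by passing through mean coordinates---effectively $g^{-1}\circ\langle u,\cdot\rangle\circ\{g_i\}$---the square closes by construction, and the content collapses to the invertibility of the mean maps, which you handle cleanly via strict convexity. In short: the paper's shortcut is quicker and adequate for its local purpose but does not yield a globally commuting square; your version is globally coherent but the diagram chase carries no independent content beyond well-definedness of $T$.
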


Before using the above quantities, we need to establish when $g^{-1}$ exists.
While full proofs are available in the appendix, invertibility comes from proving $g$ is strictly monotonic.

\begin{lemma}[Proof: See Lemma \ref{prf:inverse}]
    \label{lemma:inverse}
    Assume the log-partition function $A$ of a NEF is twice differentiable and that the second moment of the associated random variable exists.
    Then the inverse of $g(\theta) := \left.\frac{\partial}{\partial \Theta} A(\Theta) \right|_{\Theta = \theta}$ exists if and only if the random variable is non-degenerate for almost all $\theta$.
\end{lemma}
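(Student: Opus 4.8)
The plan is to identify $g$ with the mean-value (moment) parameterization of the NEF and to exploit the convexity of the log-partition function. First I would recall the standard exponential-family calculus: on the interior of the natural parameter space one may differentiate the normalization identity $\int \exp\{\theta\xi - A(\theta) + B(\xi)\}\,d\nu(\xi) = 1$ under the integral sign (the assumed finiteness of the second moment supplies the dominating functions needed locally), which yields $g(\theta) = A'(\theta) = \mathbb{E}_\theta[\xi]$ and $g'(\theta) = A''(\theta) = \operatorname{Var}_\theta(\xi)$; by Lemma \ref{lemma:fisher} the latter is precisely the Fisher information. Since $A$ is twice differentiable, $g = A'$ is (continuously) differentiable, and since $\operatorname{Var}_\theta(\xi) \ge 0$ we conclude that $g$ is nondecreasing on the (interval) parameter domain.

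Then I would reduce invertibility to strict monotonicity: a continuous nondecreasing function on an interval is injective — hence invertible onto its image — if and only if it is strictly increasing, and this holds iff it is constant on no subinterval, i.e. iff $g' = A'' = \operatorname{Var}_\theta(\xi)$ vanishes identically on no subinterval. So the target equivalence becomes: $\operatorname{Var}_\theta(\xi)$ vanishes identically on no subinterval of parameters $\iff$ $\xi$ is non-degenerate for almost every $\theta$.

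The crux is the observation that all measures $\{P_\theta\}$ in a NEF are mutually absolutely continuous, since the density $\exp\{\theta\xi - A(\theta) + B(\xi)\}$ relative to the base measure is strictly positive; hence the support of $P_\theta$ does not depend on $\theta$. Consequently $\xi$ has a singleton support (equivalently, zero variance, i.e. is degenerate) under one $P_\theta$ iff under every $P_\theta$, so the set $\{\theta : \operatorname{Var}_\theta(\xi) = 0\}$ is either empty or the whole parameter domain. In the first case $g$ is strictly increasing, $g^{-1}$ exists, and $\xi$ is non-degenerate for every (a fortiori almost every) $\theta$; in the second case $g$ is constant, $g^{-1}$ does not exist, and $\xi$ is degenerate for every $\theta$, so "non-degenerate for almost all $\theta$" is false because the parameter interval has positive measure. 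This dichotomy gives the stated equivalence, with "almost all" in fact being slack that could be read as "some" or "all".

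The main obstacle — and the step deserving the most care — will be the justification of differentiation under the integral to obtain $A' = \mathbb{E}_\theta[\xi]$ and $A'' = \operatorname{Var}_\theta(\xi)$ from only the hypotheses that $A$ is twice differentiable and the second moment exists, and pinning down the precise sense of "non-degenerate for almost all $\theta$" so that the mutual-absolute-continuity argument genuinely collapses it to an all-or-nothing statement; the monotonicity and invertibility steps are then elementary. In the appendix I would additionally record the vector-valued version for completeness: there $g = \nabla A$, $Dg = \nabla^2 A = \operatorname{Cov}_\theta(\xi)$, $A$ is convex, and $g$ is injective iff $A$ is strictly convex iff $\operatorname{Cov}_\theta(\xi)$ is nonsingular — again a $\theta$-independent condition by support invariance — recovering the same conclusion with "non-degenerate" meaning "not supported on an affine hyperplane".
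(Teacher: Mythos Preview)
Your proposal is correct and follows essentially the same route as the paper: both identify $g' = A'' = \operatorname{Var}_\theta(\xi)$ via Lemma~\ref{lemma:fisher} and reduce invertibility of $g$ to its strict monotonicity. Your mutual-absolute-continuity argument (making the ``almost all $\theta$'' clause collapse to an all-or-nothing statement) and the care about differentiation under the integral in fact go beyond the paper's brief treatment, which simply asserts $\operatorname{Var}(\xi)>0$ in the non-degenerate case and $\operatorname{Var}(\xi)=0$ on an interval in the degenerate case without further justification.
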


Finally, observe that $T$ is an injective function, and is thus invertible on the image.
Furthermore, while $g_i^{-1}$ is model specific, the Jacobian is directly related to the Fisher information.
By recalling that Jacobian of $g_i$ is the Fisher information of the probability distribution in the family, we can conclude the following theorem as the result of a reparameterization by $T^{-1}$.

\begin{theorem}[Proof: See Theorem \ref{prf:rank}]
    \label{thm:rank}
    Suppose $\xi$ is a vector of independent random variables from NEFs with twice differentiable log-partition functions, mean vector $\mu$, and a diagonal covariance matrix $\Lambda_\sigma$.
    Then the Fisher information of the observation $\langle u, \xi\rangle$ is 
    \begin{equation}
        I(\mu) = \frac{u u^\top}{u^\top \Lambda_\sigma u}
    \end{equation}
\end{theorem}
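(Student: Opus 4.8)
The plan is to chain the four preparatory lemmas: first identify the scalar observation $\langle u,\xi\rangle$ as living in a NEF, compute its Fisher information with respect to that family's natural parameter, and then transport that scalar quantity back into the coordinates of the mean vector $\mu$ using the maps in the commutative diagram of Figure~\ref{fig:commute}. Concretely, I would apply Lemma~\ref{lemma:nef} with weights $\alpha_i = u_i$ to conclude that $\bar\xi := \langle u,\xi\rangle = \sum_i u_i \xi_{(i)}$ lies in a NEF whose log-partition function $A$ is assembled from the $A_i$; since each $A_i$ is twice differentiable by hypothesis, so is $A$. Lemma~\ref{lemma:fisher} then identifies the Fisher information of $\bar\xi$ about its natural parameter $\bar\theta$ as $\mathrm{Var}(\bar\xi)$, and because the $\xi_{(i)}$ are independent and $\Lambda_\sigma$ is diagonal with $[\Lambda_\sigma]_{ii} = \mathrm{Var}(\xi_{(i)})$, this evaluates to $\mathrm{Var}(\bar\xi) = \sum_i u_i^2\,\mathrm{Var}(\xi_{(i)}) = u^\top \Lambda_\sigma u$.

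Next I would compute the differential of the reparameterization $T:\mu\mapsto\bar\theta$. From the diagram $T = g^{-1}\circ h$ with $h(\mu) = \langle u,\mu\rangle$, so $dh_\mu = u^\top$, while $g(\bar\theta) = A'(\bar\theta)$ is the mean of $\bar\xi$, so $g'(\bar\theta) = A''(\bar\theta) = \mathrm{Var}(\bar\xi)$. Lemma~\ref{lemma:inverse}, applied to $\bar\xi$ (which is non-degenerate precisely when $u^\top\Lambda_\sigma u > 0$), guarantees that $g$ is invertible near the true parameter, and by the inverse function theorem $(g^{-1})'(\bar\mu) = 1/g'(\bar\theta) = 1/(u^\top\Lambda_\sigma u)$. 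Hence, by the chain rule,
\begin{equation}
    dT_\mu = (g^{-1})'(\bar\mu)\, dh_\mu = \frac{u^\top}{u^\top \Lambda_\sigma u}.
\end{equation}
The theorem then follows from the standard transformation rule for Fisher information under reparameterization: pulling the scalar information $I(\bar\theta) = \mathrm{Var}(\bar\xi)$ back through $T$ yields
\begin{equation}
    I(\mu) = (dT_\mu)^\top\, I(\bar\theta)\, dT_\mu = \frac{u}{u^\top\Lambda_\sigma u}\,\bigl(u^\top\Lambda_\sigma u\bigr)\,\frac{u^\top}{u^\top\Lambda_\sigma u} = \frac{u u^\top}{u^\top \Lambda_\sigma u}.
\end{equation}

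The step I expect to be the main obstacle is this final transport: justifying that pushing the one-dimensional Fisher information of $\bar\xi$ through $T$ gives the correct information about the vector-valued parameter $\mu$, and in particular that the result is genuinely rank one. This is exactly where Lemma~\ref{lemma:diagram} and the NEF structure earn their keep: a single scalar measurement can resolve only the one-dimensional combination $\langle u,\cdot\rangle$ of the parameters, so the observation law depends on $\mu$ through $T$ to the order relevant for the Fisher information (the score of the $\bar\xi$ model aligns with $u$ after the change of variables), forcing the rank-one conclusion. One must also check carefully that $T$ is $C^1$ with the differential claimed above — this uses twice-differentiability of the $A_i$ together with Lemma~\ref{lemma:inverse} for the inverse function theorem — and keep the orientation of the diagram's arrows straight so that $g^{-1}$, not $g$, enters $dT_\mu$. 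By comparison, the variance computation and the appearance of the diagonal $\Lambda_\sigma$ through $\mathrm{Var}(\bar\xi) = u^\top\Lambda_\sigma u$ are routine.
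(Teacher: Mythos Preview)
Your proof is correct and follows essentially the same route as the paper: identify $\bar\xi=\langle u,\xi\rangle$ as a NEF via Lemma~\ref{lemma:nef}, compute its Fisher information about $\bar\theta$ as $\mathrm{Var}(\bar\xi)=u^\top\Lambda_\sigma u$ via Lemma~\ref{lemma:fisher}, factor $T=g^{-1}\circ h$ using the commutative diagram, and pull the scalar information back through $dT_\mu$ with the inverse function theorem supplying $(g^{-1})'=1/\mathrm{Var}(\bar\xi)$. The only cosmetic difference is a transpose convention on $dT_\mu$ (the paper treats it as a column vector and writes $[dT_\mu]I(\bar\theta)[dT_\mu]^\top$), and you are slightly more explicit than the paper about why the reparameterization rule applies despite $T$ being non-injective.
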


As $\|u\|_{\Lambda_\sigma} = \sqrt{u^\top \Lambda_\sigma u}$ is a valid norm, we additionally immediately observe that the Fisher information is invariant to the scaling of the measurement functional.
This corollary allows us to restrict our search for informative measurements to unit spheres.
\begin{corollary}[Proof: See Corollary \ref{prf:invariant}]
    \label{cor:invariant}
    The Fisher information for state estimation under independent noise is invariant to the scaling of the linear functional.
\end{corollary}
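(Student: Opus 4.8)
The plan is to obtain this as an immediate consequence of the closed form in Theorem~\ref{thm:rank}. Under the hypotheses of that theorem, the Fisher information of the observation $\langle u, \xi\rangle$ about $\mu$ is $I(\mu) = uu^\top / (u^\top \Lambda_\sigma u)$. Fix any nonzero scalar $c$ and replace the measurement functional $u$ by $cu$; the scaled functional satisfies the same hypotheses (it is still a linear functional, and the noise model is unchanged). Substituting, the numerator becomes $(cu)(cu)^\top = c^2\, uu^\top$ and the denominator becomes $(cu)^\top \Lambda_\sigma (cu) = c^2\, u^\top \Lambda_\sigma u$, so the factor $c^2$ cancels and $I(\mu)$ is unchanged. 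This is the whole argument.

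The one point requiring care is that the denominator is nonzero so the ratio is well defined, and that we are allowed to exclude $c = 0$. For the former, I would note that $\Lambda_\sigma$ is diagonal with strictly positive entries: each $\xi_i$ is non-degenerate (its variance, which by Lemma~\ref{lemma:fisher} equals its Fisher information, is positive), so $\|u\|_{\Lambda_\sigma} = \sqrt{u^\top \Lambda_\sigma u}$ is a genuine norm and vanishes only at $u = 0$. For the latter, $c = 0$ corresponds to making no measurement at all (zero functional), which carries no information and is excluded from $\mathcal{A}$; for every other $c$ the computation above applies. Hence the Fisher information depends on $u$ only through its direction, and in particular we may restrict attention to functionals on the unit sphere $\{u : \|u\|_{\Lambda_\sigma} = 1\}$, or equivalently $\{u : \|u\|_2 = 1\}$, without loss of generality.

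I do not anticipate any real obstacle here: the result is a one-line corollary of Theorem~\ref{thm:rank}, and the only substantive content is the positive-definiteness remark, which itself follows from the non-degeneracy assumption already invoked in Lemma~\ref{lemma:inverse} and Theorem~\ref{thm:rank}.
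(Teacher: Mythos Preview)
Your argument is correct and is essentially the same as the paper's: both observe that the closed-form Fisher information from Theorem~\ref{thm:rank} is homogeneous of degree zero in $u$, so rescaling cancels. If anything, your version is slightly cleaner, since you work directly with the general $\Lambda_\sigma$ form $uu^\top/(u^\top \Lambda_\sigma u)$, whereas the paper's written proof invokes the i.i.d.\ specialization $\sigma^{-2} uu^\top/\|u\|^2$ and appeals to absolute homogeneity of the norm; your explicit $c^2$ cancellation and the remark that $\Lambda_\sigma \succ 0$ ensures a nonzero denominator are welcome additions.
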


Finally, when the noise is identically distributed, the form of the Fisher information matrix simplifies further.
\begin{corollary}[Proof: See Corollary \ref{prf:iid}]
    \label{cor:iid}
    Suppose the elements of the noise vector are i.i.d. with variance $\sigma^{2}$.
    Then
    \begin{equation}
        I(\mu) = \sigma^{-2} \frac{u u^\top}{\|u\|_2^2}
    \end{equation}
\end{corollary}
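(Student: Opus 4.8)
The plan is to obtain this as an immediate specialization of Theorem \ref{thm:rank}. First I would check that the i.i.d.\ hypothesis is a special case of the hypotheses there: i.i.d.\ noise is in particular independent, and i.i.d.\ noise with common variance $\sigma^2$ has covariance matrix $\Lambda_\sigma = \sigma^2 \mathbb{I}$, which is diagonal; moreover the existence of the variance together with membership in a NEF gives twice differentiability of the (shared) log-partition function on the interior of the natural parameter set, as already invoked in Lemma \ref{lemma:inverse}. Hence all conditions of Theorem \ref{thm:rank} are met and its conclusion $I(\mu) = uu^\top / (u^\top \Lambda_\sigma u)$ applies.

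Next I would simply evaluate the denominator under $\Lambda_\sigma = \sigma^2 \mathbb{I}$, namely $u^\top \Lambda_\sigma u = \sigma^2\, u^\top u = \sigma^2 \|u\|_2^2$, and substitute to get $I(\mu) = uu^\top / (\sigma^2 \|u\|_2^2) = \sigma^{-2}\, uu^\top / \|u\|_2^2$, as claimed. One could alternatively note that this also follows from Corollary \ref{cor:invariant}: scale $u$ to have unit Euclidean norm, in which case $\|u\|_{\Lambda_\sigma}^2 = \sigma^2$ and the formula of Theorem \ref{thm:rank} reads $I(\mu) = \sigma^{-2} uu^\top$, and then undo the normalization using scale invariance.

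There is essentially no substantive obstacle here; the only point requiring any care is the bookkeeping that the i.i.d.\ assumption indeed implies the diagonal-covariance and regularity hypotheses of Theorem \ref{thm:rank}, so that the corollary is a genuine instantiation rather than requiring a separate argument. Everything else is a one-line substitution.
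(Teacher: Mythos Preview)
Your proposal is correct and matches the paper's proof essentially line for line: observe that i.i.d.\ noise gives $\Lambda_\sigma = \sigma^2 \mathbb{I}$, invoke Theorem~\ref{thm:rank}, and substitute $u^\top \Lambda_\sigma u = \sigma^2 \|u\|_2^2$. The paper's version is even terser and does not spell out the regularity check or the alternative via Corollary~\ref{cor:invariant}, but the argument is the same.
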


\subsection{Transporting the CRLB Through Time}
\label{sec:flow}
We now use the form of the Fisher information matrix in Theorem \ref{thm:rank} to construct an alternative optimization objective in Theorem \ref{thm:weakened} with the same solution as the original infinite-horizon optimal experimental design problem.
We observe that this reformulation implies the existence of local informative subspaces.
Finally, we show that when the system converges to a one dimensional space, the reformulation can be solved exactly.

A key requirement for this analysis is a notion of stability of the nullspace of the Jacobian of the flow $\varphi^\tau$.
For this lemma, sensitivities to normal perturbations in the neighborhood of the attracting manifold must decay monotonically.
This allows the construction of an $M-K$ dimensional vectorspace at each $x$, commonly called a vector bundle, for which the local sensitivity of the flow converges to zero for all vectors.

\begin{lemma}[Proof: See Lemma \ref{prf:lipschitz}]
    \label{prop:lipschitz}
    If the system in Equation \eqref{eq:ssdyn} converges to a $K$-dimensional smooth manifold $\mathcal{M}$ such that there exists some $\alpha > 0$ for which $v^\top\left(\left[df_x\right] + \left[df_x\right]^\top\right)v + \alpha \|v||^2 < 0$ for each $v \in T_p^\perp\mathcal{M}$ normal to the manifold at each $p \in \mathcal{M}$, then there exists an $M-K$ dimensional subspace $S(x_0) \subset \mathbb{R}^M$ for which $\left[d\varphi^\tau_{x_0}\right] u \rightarrow 0$ as $\tau \rightarrow \infty$ for each $u \in S(x_0)$.
\end{lemma}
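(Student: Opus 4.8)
The plan is to identify $S(x_0)$ with the stable subspace of the linearized flow along the orbit. From the sensitivity (variational) equation, $v(t) := [d\varphi^t_{x_0}]u$ solves the linear non-autonomous system $\dot v = [df_{x_t}]v$ with $v(0)=u$ along $x_t = \varphi^t(x_0)$, so it suffices to produce an $(M-K)$-dimensional subspace of initial data $u$ for which $v(t)\to 0$ as $t\to\infty$.

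First I would push the hypothesis off of $\mathcal M$ into a tubular neighbourhood. Since $x_t\to\mathcal M$, there is $t_0$ with $x_t$ in a tubular neighbourhood $U$ of $\mathcal M$ for all $t\ge t_0$; let $\pi:U\to\mathcal M$ be the nearest-point projection and $\Pi_x$ the orthogonal projection onto $T^\perp_{\pi(x)}\mathcal M$. Because $[df_x]$ depends continuously (indeed Lipschitz-continuously) on $x$ and the inequality in the statement is strict with uniform gap $\alpha$, after shrinking $U$ we obtain, for all $x\in U$,
\[
  v^\top\big([df_x]+[df_x]^\top\big)v \le -\tfrac{\alpha}{2}\,\|v\|^2 \quad\text{for } v\in\mathrm{range}(\Pi_x),
\]
together with a uniform bound $\|[df_x]\|\le c$ on $U$. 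Consequently, whenever $v(t)\in\mathrm{range}(\Pi_{x_t})$ one has $\tfrac{d}{dt}\|v\|^2\le -\tfrac{\alpha}{2}\|v\|^2$: normal perturbations contract monotonically at a definite rate, which is the ``monotone decay of sensitivities to normal perturbations'' referred to in the text preceding the lemma.

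The subspace itself I would build as a Grassmannian limit of pull-backs of the normal spaces. Set $W_T := [d\varphi^T_{x_0}]^{-1}\big(T^\perp_{\pi(x_T)}\mathcal M\big)$ --- an $(M-K)$-dimensional subspace for each $T\ge t_0$ since the flow Jacobian is invertible --- and take $S(x_0)$ to be a limit of $\{W_T\}$ in $\mathrm{Gr}(M-K,M)$. To see this limit exists and is the right object, write $v(t)=n(t)+p(t)$ with $n(t)=\Pi_{x_t}v(t)$, $p(t)=(I-\Pi_{x_t})v(t)$, and run a cone-field argument: the estimates above, plus a bound on $\|\tfrac{d}{dt}\Pi_{x_t}\|$ (controlled by the speed of $\pi(x_t)$ along $\mathcal M$ and the curvature of $\mathcal M$), show that once $t$ is large the ``normal cone'' $\{\|p\|\le\|n\|\}$ is forward-contracted by the cocycle and its pre-images under $[d\varphi^T_{x_0}]$ shrink to an $(M-K)$-plane; any $u$ in that plane has $v(t)$ eventually trapped in the normal cone, and then a Gronwall estimate off the inequality for $\tfrac{d}{dt}\|v\|^2$ gives $\|v(t)\|\to0$. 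Alternatively, the hypothesis makes $\mathcal M$ normally contracting with rate at least $\alpha/2$, so it carries a continuous strong-stable foliation with $(M-K)$-dimensional leaves, and $S(x_0)$ may be taken to be the tangent space at $x_0$ to the leaf through $x_0$.

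The main obstacle is precisely this cone contraction / convergence of the $W_T$: one must dominate the normal--tangential coupling term $n^\top[df_{x_t}]p$ and the reference-frame drift $\tfrac{d}{dt}\Pi_{x_t}$ by the contraction rate $\alpha/2$, which holds once $x_t$ is sufficiently close to $\mathcal M$ but requires care to turn into a genuine contraction on $\mathrm{Gr}(M-K,M)$ rather than merely a finite-time perturbation bound. The remaining ingredients --- the variational equation, the neighbourhood extension, and the final decay estimate --- are routine.
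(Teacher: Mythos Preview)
Your proposal is essentially correct and shares the same skeleton as the paper's argument---both start from the sensitivity equation $\dot v = [df_{x_t}]v$, pass to a tubular neighbourhood of $\mathcal{M}$, decompose $v$ into normal and tangential parts, and recover $S(x_0)$ as a limit of the pullbacks $[d\varphi^T_{x_0}]^{-1}(T^\perp_{\pi(x_T)}\mathcal{M})$. The differences are in how two technical obstacles are dispatched. First, the paper uses Fermi coordinates on the tubular neighbourhood, which makes the projection $\Pi_\mathcal{M}^\perp$ \emph{constant in time} and so eliminates the reference-frame drift term $\tfrac{d}{dt}\Pi_{x_t}$ that you instead propose to bound; this is a cleaner simplification than your route, though yours is more standard. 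Second, for the normal--tangential coupling $v_\perp^\top[df_x]v_\parallel$ that you flag as the main obstacle, the paper does not merely dominate it by $\alpha/2$: it argues via osculating circles that on $\mathcal{M}$ this cross term actually vanishes (the derivative of a normal along a tangent direction has no normal component), so only the $O(\epsilon L)$ perturbation survives. Finally, for the existence and stability of the limiting subspace itself, the paper invokes the adjoint sensitivity equation $\dot u = -[df_x]^\top u$ run backward in time and argues that the normal space is asymptotically invariant for it, whereas you use Grassmannian compactness plus cone contraction (or, alternatively, the strong-stable foliation of a normally contracting manifold). Your cone-field route is the more classical dynamical-systems argument and arguably more robust; the paper's combination of Fermi coordinates and the osculating-circle observation buys shorter bookkeeping at the cost of being less standard.
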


We now introduce the main theorem in this work.
By recalling that when a probability distribution is reparameterized the CRLB undergoes a linear transformation based on the Jacobian of the reparameterization, previously shown in Equation \eqref{eq:reparam}, the information in measurements is restricted to that contained in a K-dimensional subspace.

\begin{theorem}[Proof: See Theorem \ref{prf:weakened}]
    \label{thm:weakened}
    Suppose the state space model in Equations \eqref{eq:ssdyn} and \eqref{eq:ssmodel} with noise satisfying the assumptions in Theorem \ref{thm:rank} converges to a $K$-dimensional smooth manifold such that the Jacobian of the limiting flow is rank $K$.
    Then the linear functional which minimizes the CRLB for prediction in the infinite-horizon average cost formulation is the solution of 
    \begin{equation}
        \argmax_{u: \|u\|_{(\Lambda_\sigma + \munderbar{\Sigma})} = 1} \sum_{i=1}^K \alpha_i \langle v_i, u \rangle_{\munderbar{\Sigma}}^2 \label{eq:weakformmain}
    \end{equation}
    for some $\{\alpha_i\}$, where ${v_i}$ is the set of right singular vectors of the Jacobian of the limiting flow around the current state with nonzero singular vectors, $\munderbar{\Sigma}$ is the CRLB for estimating the current state based on the past measurements, $\langle v, u \rangle_{\munderbar{\Sigma}} = v^\top \munderbar{\Sigma} u$, and $\|u_i\|_{(\Lambda_\sigma + \munderbar{\Sigma})} = \sqrt{u^\top (\Lambda_\sigma + \munderbar{\Sigma})u}$.
\end{theorem}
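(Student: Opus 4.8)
The plan is to reduce the infinite-horizon average-cost problem to a finite-dimensional quadratic program on an ellipsoid, by combining three ingredients: the linearity of the time-transport \eqref{eq:reparam} and of the trace; the asymptotic rank collapse supplied by Lemma \ref{prop:lipschitz}; and the closed rank-one form of the post-measurement CRLB coming from Theorem \ref{thm:rank}.

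\emph{Step 1 (local CRLB update).} Let $\munderbar{\Sigma}$ be the CRLB for the current state and $u$ the functional chosen for the next measurement. By Theorem \ref{thm:rank} the added Fisher information is $uu^\top/(u^\top\Lambda_\sigma u)$, so by the matrix inversion lemma
\[
  \munderbar{\Sigma}_+(u) = \Bigl(\munderbar{\Sigma}^{-1}+\tfrac{uu^\top}{u^\top\Lambda_\sigma u}\Bigr)^{-1}
   = \munderbar{\Sigma} - \frac{\munderbar{\Sigma}\,uu^\top\munderbar{\Sigma}}{u^\top(\Lambda_\sigma+\munderbar{\Sigma})u},
\]
which is exactly the update in \eqref{eq:update}. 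By Corollary \ref{cor:invariant} this depends on $u$ only through its direction, so I may impose the normalization $u^\top(\Lambda_\sigma+\munderbar{\Sigma})u=1$, under which $\munderbar{\Sigma}_+(u)=\munderbar{\Sigma}-\munderbar{\Sigma}uu^\top\munderbar{\Sigma}$.

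\emph{Step 2 (collapsing the time average).} Transporting forward by \eqref{eq:reparam}, each future CRLB is $\munderbar{\Sigma}_{\tau_j}=[d\varphi^{\tau_j}_{x_0}]\munderbar{\Sigma}_+(u)[d\varphi^{\tau_j}_{x_0}]^\top$, so by cyclicity of the trace
\[
  \frac1N\sum_{j=1}^N \Tr\munderbar{\Sigma}_{\tau_j} = \Tr\bigl(W_N\,\munderbar{\Sigma}_+(u)\bigr),\qquad
  W_N := \tfrac1N\sum_{j=1}^N [d\varphi^{\tau_j}_{x_0}]^\top[d\varphi^{\tau_j}_{x_0}].
\]
Letting $N\to\infty$ gives the objective $\Tr\bigl(W\munderbar{\Sigma}_+(u)\bigr)$ for a fixed positive-semidefinite $W=\lim_{N\to\infty}W_N$; substituting the normalized update, $\Tr\bigl(W\munderbar{\Sigma}_+(u)\bigr)=\Tr(W\munderbar{\Sigma})-u^\top\munderbar{\Sigma}W\munderbar{\Sigma}u$, so minimizing the average cost over the ellipsoid $\{u:u^\top(\Lambda_\sigma+\munderbar{\Sigma})u=1\}$ is equivalent to maximizing $u^\top\munderbar{\Sigma}W\munderbar{\Sigma}u$. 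Writing the spectral decomposition of $W$ on its range as $W=\sum_i\alpha_i v_i v_i^\top$ yields $u^\top\munderbar{\Sigma}W\munderbar{\Sigma}u=\sum_i\alpha_i\langle v_i,u\rangle_{\munderbar{\Sigma}}^2$, which is \eqref{eq:weakformmain}, provided the eigenvectors $v_i$ can be taken to be the right singular vectors of the limiting Jacobian with nonzero singular value.

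\emph{Step 3 (identifying the surviving subspace), and the main obstacle.} Here is where the work lies. By Lemma \ref{prop:lipschitz} there is an $(M-K)$-dimensional subspace $S(x_0)$ with $[d\varphi^\tau_{x_0}]w\to0$ for every $w\in S(x_0)$; hence $w^\top W w=\lim_{N\to\infty}\frac1N\sum_j\|[d\varphi^{\tau_j}_{x_0}]w\|^2=0$ (a Ces\`{a}ro average of a null sequence), so $Ww=0$ and $\mathrm{range}(W)\subseteq S(x_0)^\perp$. Under the rank-$K$ hypothesis on the limiting Jacobian one checks that $S(x_0)^\perp$ is precisely the row space $\mathrm{span}(v_1,\dots,v_K)$ — the $K$ directions not asymptotically annihilated — so $W$ is diagonalized by an orthonormal basis of that span, and such a basis is a legitimate choice of the $\{v_i\}$. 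I expect the two genuinely delicate points to be: (a) establishing that the Ces\`{a}ro limit $W=\lim_{N\to\infty}W_N$ exists at all — for limit cycles and quasiperiodic limit sets this should follow from (quasi-)periodicity of $\tau\mapsto[d\varphi^\tau_{x_0}]$ on the limit set together with equidistribution of $\{\tau_j\}$, but it needs a mild regularity assumption on the sampling schedule; and (b) reconciling the analytic description of the collapsing subspace from Lemma \ref{prop:lipschitz} with the geometric one as the kernel of the limiting flow's Jacobian, since on oscillatory limit sets the "limiting flow" is an asymptotic phase map rather than a pointwise limit of $\varphi^\tau$. A secondary check is that the reduction survives in the genuinely sequential setting: because $[d\varphi^\tau_{x_0}]$ annihilates $S(x_0)$ asymptotically, the cost-to-go depends on $\munderbar{\Sigma}_+$ only through its compression to $\mathrm{span}(v_1,\dots,v_K)$, and monotonicity of the CRLB update in the Loewner order should propagate the per-step characterization to the full policy.
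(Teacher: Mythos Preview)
Your proposal is correct and follows essentially the same route as the paper: Sherman--Morrison for the rank-one update, linearity/cyclicity of the trace to separate the time average from the $u$-dependence, and Lemma~\ref{prop:lipschitz} to confine the surviving quadratic form to the $K$-dimensional row space. Your packaging is slightly different---you introduce the averaged Gram matrix $W=\lim_N\frac1N\sum_j[d\varphi^{\tau_j}]^\top[d\varphi^{\tau_j}]$ and then diagonalize it, whereas the paper carries the SVD $[d\varphi^\tau]=U_\tau D_\tau R_\tau^* V^*$ through the computation and defines $\alpha_k$ as the Ces\`aro average of the diagonal entries of $R_\tau D_\tau^2 R_\tau^*$. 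Your formulation is arguably cleaner: the paper's intermediate step $\Tr(D_\tau R_\tau^*[V^*\munderbar{\Sigma} uu^\top\munderbar{\Sigma} V]R_\tau D_\tau)=\sum_k\alpha_{k,k}^{(\tau)}\langle v_k,u\rangle_{\munderbar{\Sigma}}^2$ silently drops the off-diagonal cross terms, which is only justified after averaging and rediagonalizing---exactly what your $W$-formulation makes explicit. You also correctly flag, as genuine technical obligations, the existence of the Ces\`aro limit and the identification of the eigenbasis of $W$ with a choice of right singular vectors of the limiting Jacobian; the paper's proof takes both of these for granted.
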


It is worth noting that this problem reformulation additionally holds for minimizing the trace of the covariance of biased estimators, as the bias introduces an additional linear transformation on the low-dimensional subspace \citep{lehmann_1998}.
The $\{\alpha_i\}$ coefficients become dependent on the bias of the estimator, but dimensionality reduction insights still hold.
Through Lagrange multipliers, we can compute the informative subspace exactly.

\begin{corollary}[Proof: See Corollary \ref{prf:basis}]
    \label{cor:basis}
    Under the assumptions of Theorem \ref{thm:weakened}, the optimal measurement vector exists in the subspace
    \begin{equation}
        \textup{Span}\,\left\{(\munderbar{\Sigma} + \Lambda_\sigma)^{-1} \munderbar{\Sigma} v_i \right\}_{i=1}^K.
    \end{equation}
\end{corollary}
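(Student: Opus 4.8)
The plan is to apply the method of Lagrange multipliers to the constrained optimization problem \eqref{eq:weakformmain} and read off the form of the stationary points. First I would rewrite the objective $\sum_{i=1}^K \alpha_i \langle v_i, u\rangle_{\munderbar\Sigma}^2 = \sum_i \alpha_i (v_i^\top \munderbar\Sigma u)^2 = u^\top \munderbar\Sigma \left(\sum_i \alpha_i v_i v_i^\top\right)\munderbar\Sigma u$, so the problem becomes maximizing $u^\top B u$ subject to $u^\top(\munderbar\Sigma + \Lambda_\sigma) u = 1$, where $B := \munderbar\Sigma V_\alpha \munderbar\Sigma$ and $V_\alpha := \sum_{i=1}^K \alpha_i v_i v_i^\top$. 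This is a generalized Rayleigh quotient problem. Writing the Lagrangian $\mathcal L(u,\nu) = u^\top B u - \nu\big(u^\top(\munderbar\Sigma + \Lambda_\sigma)u - 1\big)$ and setting $\nabla_u \mathcal L = 0$ yields the generalized eigenvalue equation $B u = \nu(\munderbar\Sigma + \Lambda_\sigma) u$, i.e. $(\munderbar\Sigma + \Lambda_\sigma)^{-1} B u = \nu u$ (invertibility of $\munderbar\Sigma + \Lambda_\sigma$ follows since $\Lambda_\sigma \succ 0$ as a noise covariance and $\munderbar\Sigma \succeq 0$).

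Next I would observe that any optimal $u$ satisfying this equation must lie in the column space of $(\munderbar\Sigma + \Lambda_\sigma)^{-1} B$. Since $B = \munderbar\Sigma V_\alpha \munderbar\Sigma$ and $V_\alpha = \sum_{i=1}^K \alpha_i v_i v_i^\top$ has column space contained in $\mathrm{Span}\{v_i\}_{i=1}^K$, we get $\mathrm{colspace}(B) \subseteq \mathrm{Span}\{\munderbar\Sigma v_i\}_{i=1}^K$, and therefore $u \in \mathrm{colspace}\big((\munderbar\Sigma+\Lambda_\sigma)^{-1}B\big) \subseteq \mathrm{Span}\{(\munderbar\Sigma + \Lambda_\sigma)^{-1}\munderbar\Sigma v_i\}_{i=1}^K$, which is exactly the claimed subspace. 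From the eigen-equation, $u = \nu^{-1}(\munderbar\Sigma+\Lambda_\sigma)^{-1}Bu$, so as long as the maximizing generalized eigenvalue $\nu$ is nonzero the containment is immediate; I would handle the $\nu = 0$ case by noting it corresponds to a zero objective value, which is not the maximum whenever the $\alpha_i$ are not all zero (and if they are all zero the statement is vacuous or any vector in the span trivially works).

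The main obstacle I anticipate is bookkeeping around the weighted inner products and norms rather than anything conceptually deep: one has to be careful that $\langle v,u\rangle_{\munderbar\Sigma}$ uses $\munderbar\Sigma$ (possibly only positive semidefinite, not a genuine inner product if $\munderbar\Sigma$ is singular) while the constraint norm uses $\munderbar\Sigma + \Lambda_\sigma$ (genuinely positive definite), so the Lagrange-multiplier argument should be phrased in terms of the quadratic form $u^\top B u$ directly to avoid degeneracy issues. I would also want to confirm that a maximizer exists — this is clear since the constraint set $\{u : u^\top(\munderbar\Sigma+\Lambda_\sigma)u = 1\}$ is compact (an ellipsoid) and the objective is continuous — so that the first-order stationarity condition is genuinely attained and the conclusion about the span is non-vacuous. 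The remainder is routine linear algebra, and the identification of $\{\alpha_i\}$ with the coefficients appearing in Theorem \ref{thm:weakened} is inherited directly from that theorem's statement.
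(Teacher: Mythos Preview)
Your proposal is correct and follows essentially the same Lagrange-multiplier argument as the paper: form the Lagrangian for \eqref{eq:weakformmain}, set the gradient in $u$ to zero, and read off that $u$ is a linear combination of the vectors $(\munderbar\Sigma+\Lambda_\sigma)^{-1}\munderbar\Sigma v_i$. Your repackaging of the objective as a generalized Rayleigh quotient $u^\top B u$ with $B=\munderbar\Sigma V_\alpha\munderbar\Sigma$ is a compact rephrasing of the paper's explicit sum, and your added care about existence of the maximizer and the $\nu=0$ case is more thorough than the paper's version, which simply rearranges the stationarity condition without addressing these points.
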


We can consider this to be a local informative subspace, or an informative vector bundle.
By further observing that the information content is invariant to scaling of the measurement vector, we can construct an exact solution to the time-averaged infinite-horizon optimal experimental design problem.

\begin{corollary}[Proof: See Corollary \ref{prf:closed}]
    \label{cor:closed}
    If, in addition to the assumptions of Theorem \ref{thm:weakened}, the dynamical system converges to an isolated limit cycle, then 
    \begin{equation}
        u = (\munderbar{\Sigma} + \Lambda_\sigma)^{-1} \munderbar{\Sigma} v
    \end{equation}
    is an optimal measurement vector, where $v$ is the right singular vector.
\end{corollary}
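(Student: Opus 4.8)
The plan is to specialize Theorem~\ref{thm:weakened} and Corollary~\ref{cor:basis} to the case $K=1$ and then strip off the normalization using scale invariance. An isolated limit cycle $\Gamma$ is a compact one-dimensional embedded submanifold of $\mathbb{R}^M$, so once the hypotheses of Theorem~\ref{thm:weakened} are verified we have $K=1$, a single right singular vector $v$ of the limiting Jacobian with nonzero singular value, and the objective in Equation~\eqref{eq:weakformmain} collapses to the single-term problem
\begin{equation}
    \argmax_{u:\,\|u\|_{(\Lambda_\sigma+\munderbar{\Sigma})}=1}\; \alpha_1\,\langle v, u\rangle_{\munderbar{\Sigma}}^2 .
\end{equation}
The first task is to check that an isolated limit cycle actually meets those hypotheses. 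Because $\Gamma$ is isolated it is hyperbolic, so all nontrivial Floquet multipliers lie strictly inside the unit disk; after passing, if necessary, to the smooth Lipschitz approximation guaranteed in Section~\ref{sec:model}, the normal directions along $\Gamma$ are uniformly contracting, which is precisely the condition invoked in Lemma~\ref{prop:lipschitz}. Hyperbolicity also makes the asymptotic-phase (isochron) map well defined and smooth away from the exceptional repelling set, identifying $\varphi^\infty$ with a submersion onto $\mathbb{S}^1$; hence the Jacobian of the limiting flow has rank exactly one, fixing $K=1$ and the right singular vector $v$.

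The second task is to solve the collapsed maximization. By Corollary~\ref{cor:basis} with $K=1$, any optimal $u$ lies in the one-dimensional subspace $\mathrm{Span}\{(\munderbar{\Sigma}+\Lambda_\sigma)^{-1}\munderbar{\Sigma} v\}$; equivalently one can substitute $w=(\Lambda_\sigma+\munderbar{\Sigma})^{1/2}u$, note that $\langle v,u\rangle_{\munderbar{\Sigma}}=\langle (\Lambda_\sigma+\munderbar{\Sigma})^{-1/2}\munderbar{\Sigma} v,\,w\rangle$ with $\|w\|_2=1$, and apply Cauchy--Schwarz to conclude that the optimum is attained exactly when $w$ is parallel to $(\Lambda_\sigma+\munderbar{\Sigma})^{-1/2}\munderbar{\Sigma} v$, i.e. $u$ is parallel to $(\munderbar{\Sigma}+\Lambda_\sigma)^{-1}\munderbar{\Sigma} v$. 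Since the Fisher information recursion in Equation~\eqref{eq:update} requires $\munderbar{\Sigma}\succ 0$ and $\Lambda_\sigma+\munderbar{\Sigma}$ is therefore invertible, and $v\neq 0$, this direction vector is nonzero, so there is a genuine (unit-norm, up to sign) maximizer proportional to it. Finally, Corollary~\ref{cor:invariant} states that the Fisher information $uu^\top/(u^\top\Lambda_\sigma u)$—and hence the whole CRLB update and the average cost—is invariant to rescaling $u$. Consequently every nonzero scalar multiple of the maximizer is itself optimal, and in particular $u=(\munderbar{\Sigma}+\Lambda_\sigma)^{-1}\munderbar{\Sigma} v$ is an optimal measurement vector.

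The only remaining loose end is the degenerate sign case: if $\alpha_1=0$ the objective is constant and the claimed $u$ is trivially among the optimizers, while $\alpha_1<0$ is excluded by the construction of the coefficients in the proof of Theorem~\ref{thm:weakened}, where they arise as a nonnegative time-average of squared singular values along $\Gamma$. I expect the genuine obstacle to be not the Cauchy--Schwarz/eigenproblem step, which is elementary, but the dynamical-systems bookkeeping: confirming that "isolated limit cycle" delivers both the uniform normal contraction required by Lemma~\ref{prop:lipschitz} and the rank-one limiting Jacobian, so that Theorem~\ref{thm:weakened} may legitimately be applied with $K=1$.
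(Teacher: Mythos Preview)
Your proposal is correct and follows the same route as the paper: invoke Corollary~\ref{cor:basis} with $K=1$ to pin the optimizer to the one-dimensional span of $(\munderbar{\Sigma}+\Lambda_\sigma)^{-1}\munderbar{\Sigma} v$, then use the scale invariance of Corollary~\ref{cor:invariant} to drop the normalization. The paper's proof is two lines and does not revisit the dynamical-systems hypotheses at all---since the statement already assumes the conclusions of Theorem~\ref{thm:weakened}, your Floquet/isochron verification and the $\alpha_1$ sign analysis are extra rigor rather than missing ingredients.
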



At this point, it is worth considering practical issues which largely represent opportunities for future work.
For any given state, computing $\{v_i\}$ requires first-order local sensitivity analysis of the dynamical system.
As this is a well-studied problem with numerous existing techniques, we do not address this problem further in this manuscript \citep{BenchmarkTools.jl-2016, ma_comparision}.

The informative subspace is dependent on the state of the system. 
In Section \ref{sec:est}, we propose and discuss the usage of the state estimate produced by an extended Kalman filter due to the ability to reuse computation.

Furthermore, we have not addressed how to optimize Equation \eqref{eq:weakformmain} in the general case.
We believe this local dimensionality reduction may be useful to reduce scaling issues in general approaches, reducing the action space to the dimensionality of the limiting manifold.

\begin{figure*}[t]
    \centering
    \includegraphics[width=0.75\textwidth]{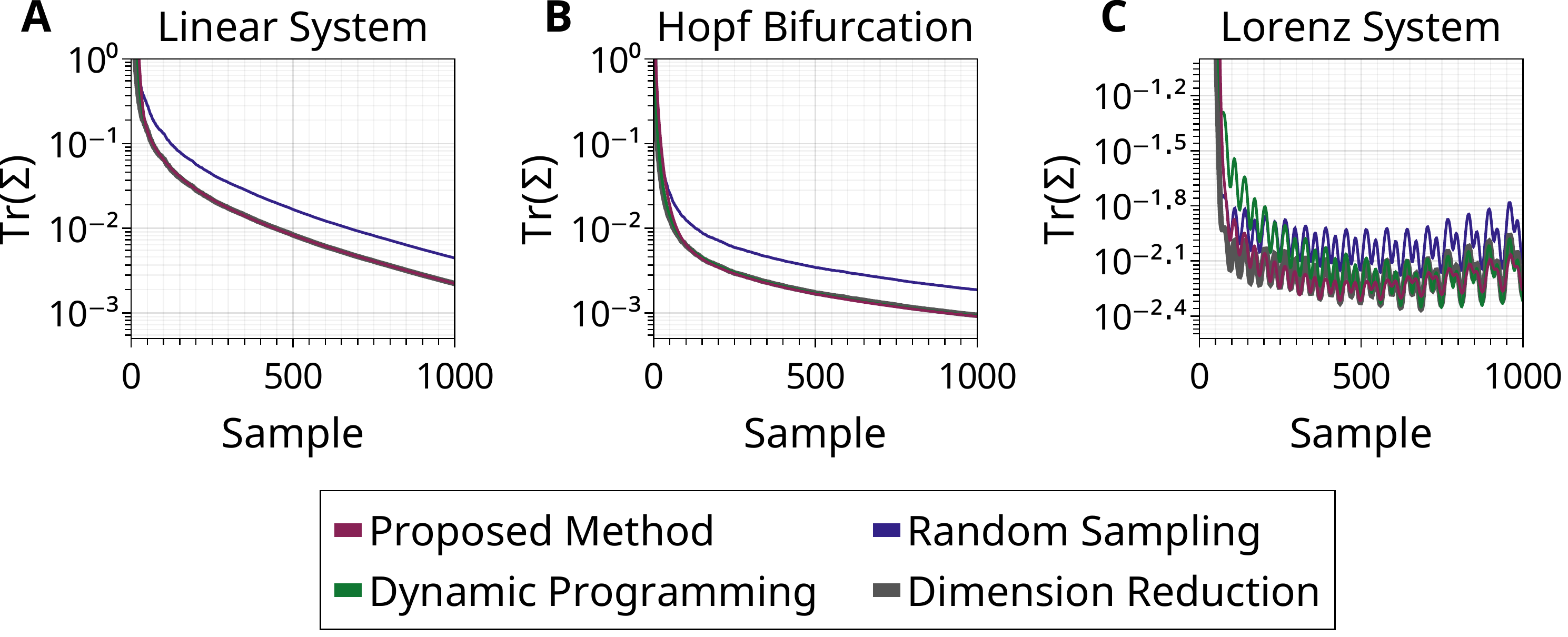}
    \caption{Using uniform random sampling as a baseline, the oracle solutions for both the computationally expensive dynamic programming method and our proposed method capture the dimensionality collapse on each of the three systems. A: Linear System; B: Hopf Bifurcation; C: Lorenz System}
    \label{fig:SystemSampling}
\end{figure*}

\section{STATE ESTIMATION}
\label{sec:est}
In order to practically apply either the Bellman equation based dynamic programming algorithm or the proposed algorithm, we must include some form of state estimation. 
Due to its close connection with the propagation of the CRLB, we illustrate our algorithms using the extended Kalman filter (EKF).

The EKF is a two-step state estimation procedure which involves a prediction step followed by an estimation step.
In its general form, the state vector is updated according to the system dynamics, then corrected based on the observation.
The covariance matrices are propagated through a linearization of state transition operator and measurement operator.
While the technique lacks the optimality guarantees of the original Kalman filter, it has remained a common method of state estimation in dynamical systems since its discovery in the 1960s \citep{ekf}.

In our application, the EKF is particularly efficient because the computationally expensive covariance update coincides with the CRLB update.
Furthermore, as our measurements are scalars, the typically required matrix inversion in the estimation step becomes scalar division.
When we incorporate our system model in Equations \eqref{eq:ssdyn} and \eqref{eq:ssmodel} and our noise model from Theorem \ref{thm:rank}, the EKF reduces to
\begin{align}
    \hat{x}_{i|i-1} &= \varphi^\tau\left(\hat{x}_{i-1|i-1}\right),\\
    \hat{x}_{i|i} &= \hat{x}_{i|i-1} +  \frac{\munderbar{\Sigma}_i u_i(y_i - u_i^\top \hat{x}_{i|i-1})}{u_i^\top (\Lambda_\sigma + \munderbar{\Sigma}_i)u},\label{eq:extra}\\
    \munderbar{\Sigma}_{i+1} &= \left[d\varphi_x^\tau\right] \left(\munderbar{\Sigma}_{i} - \frac{\munderbar{\Sigma}_{i}u_i u_i^\top \munderbar{\Sigma}_{i}}{u_i^\top (\Lambda_\sigma + \munderbar{\Sigma}_i)u}\right)\left[d\varphi_x^\tau\right]^\top,
\end{align}
where the covariance update is identical to the CRLB update, and can thus be reused.
The only additional computational cost to apply the EKF beyond the CRLB update is in Equation \eqref{eq:extra}, which is negligible compared to the already required costs.
A detailed description of operation counts is available in Appendix \ref{app:computation}.

\section{EXPERIMENTAL RESULTS}
The numerical simulations in this section were chosen to illustrate three key features of our proposed experimental design procedure.
We begin by illustrating that, when the state of the system is known exactly, both the proposed experimental design procedure and a computationally expensive dynamic programming baseline (Appendix \ref{app:approx}) achieve the expected CRLB reduction suggested by the dimensionality collapse.
We then demonstrate that the application of the EKF is sufficiently accurate to maintain this improvement after a short delay. 
Finally, we demonstrate that the benefit improves linearly with the ambient dimensionality of the system.
Simulations regarding computation time appear in Appendix \ref{app:computation} as further justification of the computational efficiency of the proposed algorithm.

Throughout this section, we simulate representative examples of three important classes of systems: linear systems, limit cycles, and chaotic systems.
As many real-world systems exhibit these fundamental behaviors in different operating regimes, our chosen examples help to elucidate how the proposed algorithm will perform in each behavior.
The three characteristic systems on which the observation policies were evaluated were a stable linear system defined with eigenvalues $\lambda_1 = -10$ and $\lambda_2 = -0.1$, a Hopf bifurcation with dynamics $\{\dot{x}_{(1)} = x_{(1)}(1 - \|x\|^2) - x_{(2)}; \dot{x}_(2) = x_{(2)}(1 - \|x\|^2) + x_{(1)}\}$, and a Lorenz system with dynamics
$\{ \dot{x}_{(1)} = 10(x_{(2)}-x_{(1)}); \;
    \dot{x}_{(2)} = x_{(1)}(28-x_{(3)}) - x_{(2)}; \;
    \dot{x}_{(3)} = x_{(1)}x_{(2)} - \frac{8}{3} x_{(3)}\}$, where $x_{(i)}$ indicates the $i$'th element of the state vector.
While chaotic systems do not meet our theoretical convergence assumption, we include the Lorenz system to characterize performance on this related behavior.
The specific parameters are available in Appendix \ref{app:details}.

\begin{figure*}
    \centering
    \includegraphics[width=0.85\textwidth]{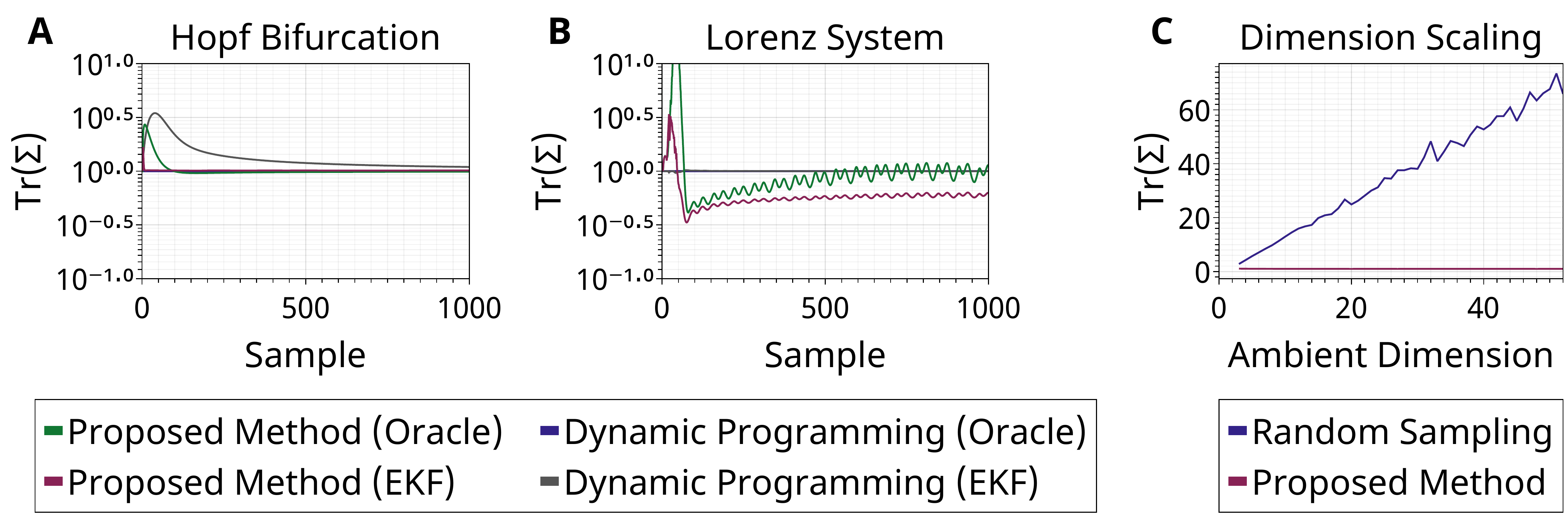}
    \caption{(A,B): Trace of the CRLB normalized to that of the dynamic programming oracle solution for the (A) Hopf Bifurcation system and (B) Chaotic Lorenz system; (C): Trace of the CRLB of estimating the state of a limit cycle system normalized to that of the proposed method as a function of ambient dimensionality. Note the linear growth of the baseline random sampling approach.}
    \label{fig:scaling}
\end{figure*}

In Figure \ref{fig:SystemSampling}, we illustrate that the oracle solution of the optimal sampling procedure works across representative examples of each class of systems in which the true state is known to the decision process.
A line corresponding the optimal CRLB reduction enabled by the loss of dimensionality is included for reference.
The key observation in this figure is that all techniques approximately capture the full improvement potential.
Furthermore, as the plot represents the CRLB for state estimation rather than forecasting, we observe the fundamental trade-off made in this work: while we attain an improvement for future time-horizons, our ability to estimate the current state is negatively impacted in the short term.
This is particularly visible in the dynamic programming plot in Panel C where the CRLB reduction does not materialize for about 250 samples, or 2.5 seconds of the system trajectory.

Next, we introduce the partially observed setting with the EKF in Panels A and B of Figure \ref{fig:scaling}, normalizing the trace of the CRLB to that of the oracle solution in the Dynamic Programming approximation.
Because the plots again illustrate the CRLB for the current state, we observe a short-term performance benefit for the EKF approach, as it is less aggressive in prioritizing the future.
As the Lorenz system is not truly 1D, we see a similar benefit in Panel B.
In these simulations, we observe empirically that the EKF based policy often converges to similar levels of performance to the oracle policy, particularly in the proposed method.

Finally, in Panel C of Figure \ref{fig:scaling}, we introduce a system based on expanding a Van der Pol oscillator, a common example of a system with a limit cycle, with additional dimensions governed by stable linear dynamics to illustrate the linear improvement as a function of the number of collapsing dimensions.
In this system, the dynamics of the first two state variables are defined as $\{\dot{x}_{(1)} = 3.5(x_{(1)} - \frac{1}{3}x_{(1)}^4 - x_{(2)});\quad \dot{x}_{(2)} = \frac{2}{7}x_{(1)}\}$, while the remaining dimensions all follow $\dot{x}_{(i)} = -x_{(i)}$.
Through these simulations, we remove any confounding issues and focus purely on the scaling with the dimensionality.
The blue curve represents a baseline approach in which, at each decision point, the measurement vector is chosen uniformly at random over all unit vectors.
This baseline represents an efficient method to collect measurements about the current state, particularly in the case when observations are made quickly relative to the rate of change of state in the system.
We see that, in this greedy baseline approach, the trace of the future CRLB increases linearly with dimensionality of the system while our proposed technique preserves a nearly constant value.

\section{CONCLUSION}
\label{sec:conclusion}
In this work, we introduced a dimensionality reduction method for the sequential selection of linear functional measurements of a vector-valued time series when the system converges to a low-dimensional manifold.
Based on theoretical properties of natural exponential families of probability distributions, we reformulated the average cost infinite horizon problem for systems which converge to low-dimensional sets, resulting in a computationally efficient alternative to a Bellman equation approach.

Beyond theoretical results, we demonstrated the performance on three different common classes of dynamical systems: linear systems, limit cycles, and chaotic systems.
Simulations were completed illustrating the performance of the dynamic programming formulation and the 1D reformulation using an extended Kalman filter for state estimation.
Simulations showed that the measurement selection performance remained constant irrespective of the ambient dimensionality of the space --- in contrast with i.i.d. uniform random measurements which grow linearly.


The contributions in this work enable the collection of significantly more informative data than would be acquired through uniform observations.
This improvement is attained irrespective of the eventual forecasting algorithm.
We believe this work opens new directions in forecasting time series, and anticipate follow-on work regarding the co-design of observation and forecasting algorithms.

\subsubsection*{Acknowledgements}
This work was supported in part by the National Science Foundation under Grant 1936663.

\nocite{chaloner_1995,Dette_designing}
\nocite{martinez_2006,ranieri_2014,xygkis_2018}
\nocite{fohring_adaptive,asprey_designing,Huan_sequential,titterington_aspects}
\nocite{letham_2016,Transtrum_2012,casey_optimal}
\nocite{penland_prediction_1993,Ehrendorfer_1994,ehrendorfer_2006}
\nocite{Abu-Mostafa_1996,Cao_2001}
\nocite{Bertsekas_2017_1}
\nocite{shi_2017,Ravuri_2021}
\nocite{Gilphi_2020}
\nocite{Brunton_2016,Kaiser_2018,Zhang_2019,messenger_2021}
\nocite{schmid_2010,williams_2015,proctor_2016,erichson_2019}
\nocite{raissi_2019}
\nocite{raissi_2018}
\nocite{chen_2018,dupont_2019,dandekar_2020,massaroli_2020,avelin_2021}
\nocite{greydanus_2019,cranmer_2020}
\nocite{bai_2019,bai_2020,pabbaraju_2021,gilton_2021}
\nocite{khalil_2002}
\nocite{khalil_2002}
\nocite{khalil_2002}
\nocite{hajek_2010}
\nocite{Morris_2006}
\nocite{Bertsekas_2017}
\nocite{mezzadri_2006}
\nocite{mezzadri_2006}
\nocite{sherman}
\nocite{trefethen_1997}
\nocite{sherman}
\nocite{rackauckas2017differentialequations}
\nocite{ma_comparision}
\nocite{RevelsLubinPapamarkou2016}
\nocite{BenchmarkTools.jl-2016}
\nocite{DanischKrumbiegel2021}

\bibliography{refs}
\bibliographystyle{plainnat}

\appendix
\onecolumn

\section*{APPENDIX}

The appendix is organized as follows.
Appendix \ref{app:table} describes the systems shown in Table \ref{tab:examples}.
Appendix \ref{app:approx} provides derivations of the performance of the dynamic programming base case.
Appendix \ref{app:oracle} provides the proofs for Section \ref{sec:oracle}.
Appendix \ref{app:computation} contains counts of multiplications required in the algorithms, as well as timing simulations and details regarding the gradient ascent procedure.
Appendix \ref{app:numerical} discusses potential numerical issues in the work relating to the requirement to invert approximately singular matrices, as well as the chosen solution to the problem.
Appendix \ref{app:code} describes the interface for incorporating new dynamical systems into the library produced in the course of this work. 
Finally, Appendix \ref{app:details} includes all parameter used in generating the plots in the main body of the manuscript.

\section{TABLE \ref{tab:examples} DETAILS}
\label{app:table}
Representative trajectories for the systems in Table \ref{tab:examples} are shown in Figure \ref{fig:examples}.
Initializations were chosen in an arbitrary, often randomized manner as the chosen systems are known to contain only a small number of stable global behaviors
In particular, all systems other than the gradient flow globally converge to a fixed behavior, while the chosen gradient flow contains only two stable equalibria.

\begin{figure}[ht]
    \centering
    \includegraphics[width=\textwidth]{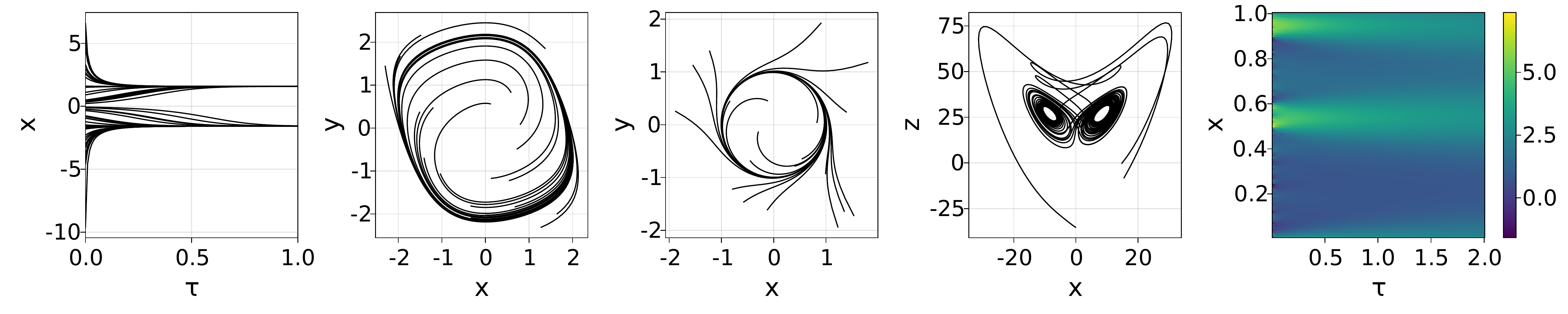}
    \caption{From left to right, Morse Flow, Van Der Pol, Hopf, Lorenz, Heat equation}
    \label{fig:examples}
\end{figure}

\fakesection{Gradient Flow}

Let $h: \mathbb{R}^M \rightarrow \mathbb{R}$ on some bounded interval be a Morse function.
The gradient flow is defined as
\begin{equation}
    \dot{x} = -\nabla h(x).
\end{equation}
The key property of a Morse function in this example is that all critical points occur in the interior of the manifold $\mathcal{M}/\partial\mathcal{M}$ and that no critical points are degenerate.
The lack of degeneracy is well-known to imply that critical points are isolated, thus resulting in a discrete-set of points.

In the particular simulation for Figure \ref{fig:examples}, $h(x) = x^4 - 5x^2 + 4$.

\fakesection{Van Der Pol Oscillator}

\begin{align}
    \dot{x} &= y\\
    \dot{y} &= \mu(1-x^2)y - x
\end{align}

The Van Der Pol Oscillator is well-known to have a single globally stable limit cycle, though there is no closed form.

In the Figure \ref{fig:examples}, $\mu = 1$.

\fakesection{Hopf Bifurcation}

\begin{align}
    \dot{x} &= x(\lambda + b (x^2 + y^2)) - y\\
    \dot{y} &= y(\lambda + b (x^2 + y^2)) + x
\end{align}

The parameters are chosen such that the system exhibits a stable limit cycle, for example $b = -1$ and $\lambda = 1$ in Figure \ref{fig:examples}.
Under the above choice of parameters, there is a stable limit cycle at $x^2 + y^2 = 1$.
By substituting in the constraint into the above expression, the system becomes
\begin{align}
    \dot{x} &= -y \\
    \dot{y} &= x
\end{align}
which is linear with purely imaginary eigenvalues.

\fakesection{Lorenz System}

\begin{align}
    \dot{x} &= \sigma(y-x) \\
    \dot{y} &= x(\rho-z) - y\\
    \dot{z} &= xy - \beta z
\end{align}

The almost canonical set of parameters for the system is $\sigma = 10$, $\rho = 28$, $\beta = 8/3$, which generates the butterfly-shaped attractor pictured in Figure \ref{fig:examples}.
In the case of this well-known chaotic dynamical system, the low-dimensional structure is less obvious.
In fact, the attractor is not embedded in a 2D manifold at all.
Instead, due to deep connections between chaos and fractals, chaotic attractors are often described by their fractal dimension.
While there are many definitions of fractal dimensions, a common definition is the Hausdorff dimension, defined based on covering the space with arbitrarily small balls.
In the case of the Lorenz system with the chosen parameters, the dimensionality is well-known to be approximately $2.06$.

\fakesection{Heat Equation}

The heat equation is one of the most fundamental linear partial differential equations, and represents spatial diffusion over time.

\begin{equation}
    \frac{\partial u}{\partial t} = \alpha \Delta u
\end{equation}

In Figure \ref{fig:examples}, the heat equation is simulated according to a finite-difference discretization with $\alpha=0.2$.
The only eigenfunction of the system which does not exponentially decay is the average value of $u$, and so the system converges only to a 1D space.

\section{CRAM\'{E}R--RAO LOWER BOUND}
\label{app:crlb}
In this appendix, we provide a brief overview of the well-known Cram\'{e}r--Rao Lower Bound (CRLB), with regularity conditions adapted from \cite{lehmann_1998}.

The multivariate CRLB, sometimes known as an information inequality, is a matrix inequality on the covariance matrix of all estimators of an unknown, deterministic parameter of a probability distribution.
Specifically, let $p(\xi; \theta)$ be a family of density functions of a random variable $\xi$ parameterized by a vector $\theta$ such that $p$ satisfies the regularity conditions:
\begin{enumerate}
    \item $\Theta$ is an open set of possible parameters $\theta$.
    \item The densities $p(\cdot; \theta)$ have common support. 
    \item For each $\xi \in \Xi, \theta \in \Theta$, and $i \in 1, \ldots, M,$ $\frac{\partial p(\xi; \theta)}{\partial \theta_i} < \infty$.
\end{enumerate}

Then, for any estimator of the parameter $\hat{\theta}$ such that $i \in 1, \ldots, M,$ $\frac{\partial}{\partial \theta_i} \mathbb{E}\left[\hat{\theta}\right] = \mathbb{E}\left[\frac{\partial}{\partial \theta_i} \hat{\theta}\right]$, 
\begin{equation}
    \Sigma \geq \left[\frac{d\mathbb{E}\left[\hat{\theta}\right]}{d\theta}\right]^\top J^{-1}(\theta) \left[\frac{d\mathbb{E}\left[\hat{\theta}\right]}{d\theta}\right],
\end{equation}
where $\Sigma$ is the covariance matrix of the estimator and $J^{-1}(\theta)$ is the inverse Fisher information matrix.
The matrix inequality is interpreted such that a matrix $A \geq 0$ is positive semi-definite.
When applied to unbiased estimators where $\mathbb{E}\left[\hat{\theta}\right] = \theta$, the CRLB reduces to the exceedingly common unbiased form
\begin{equation}
    \Sigma \geq  J^{-1}(\theta). 
\end{equation}

The Fisher information matrix is the covariance of the derivative of the log-likelihood function, i.e.
\begin{equation}
    J(\theta) = \mathbb{E}\left[\left(\frac{d}{d\theta}\log p(\xi; \theta)\right)\left(\frac{d}{d\theta}\log p(\xi; \theta)\right)^\top\right].\label{eq:fim}
\end{equation}

Of note in this work is the impact of diffeomorphic reparameterization on the Fisher information matrix, which can be seen immediately by applying the chain rule to Equation \eqref{eq:fim}.

\section{DYNAMIC PROGRAMMING BASELINE}
\label{app:approx}
While the dynamic programming problem maps directly into the stochastic control and reinforcement learning frameworks, there are still a number of application-specific details in all such algorithms.
In this section, we address the choice of an appropriate discretization scheme for the state space.

First, we need to consider the structure of the state space and the action space.
The policy is dependent both on the previous CRLB and on the previous system state.
But note that the CRLB is a positive semi-definite (PSD) matrix, and so if the system state is in $\mathbb{R}^M$, then the space of CRLB matrices is embedded in $\mathbb{R}^{(M^2 + M)/2}$.
Furthermore, the space of PSD matrices trivially forms a cone, which further reduces the space.
An equivalent representation of the space is through the diagonalization $A = U \Lambda U^\top$, where $\Lambda$ is a diagonal matrix and $U \in O(M)$ is an orthonormal matrix.

The space of PSD matrices has a non-trivial structure, yet we have strong heuristics for the problem.
Thus, we opt to approximate through the interpolation of random sample points.
We denote the set of sample points by $\mathcal{X} = \{x_1,...,x_N\}$ and our approximation of the value function to be $\hat{J}_{\mathcal{X}}(x, \munderbar{\Sigma})$.
We first describe the sampling distribution of $\mathcal{X}$, before discussing the interpolation methods.

\fakesection{Random Sampling}
A practical approach to generating random PSD matrices is to decompose the problem into sampling from the space of orthonormal matrices and sampling non-negative eigenvalues.
Not only is this approach straightforward, but as we have more intuition regarding the distribution of eigenvalues, it allows easy tuning of the approximation of the distribution.
The full sampling routine is shown in Algorithm \ref{alg:sampling}.

\begin{algorithm}[tb]
    \caption{Interpolation Point Sampling}
    \label{alg:sampling}
 \begin{algorithmic}
    \STATE {\bfseries Input:} Eigenvalue Distribution $\mu$
    \STATE Sample diagonal matrix $\Lambda \sim \mu$ \COMMENT{$\left(\Lambda \in \mathbb{R}^{M \times M}\right)$}
    \STATE Sample $G\sim \mathcal{N}(0,I)$ \COMMENT{$\left(G \in \mathbb{R}^{M \times M}\right)$}
     \STATE $Q, R \gets QR(G)$ \COMMENT{$\,\left(Q \sim\text{ Haar Measure}\right)$}
    \STATE $x_i \gets Q^\top \Lambda Q$ \COMMENT{$\left(x_i \in \mathbb{R}^{M \times M}\right)$}
 \end{algorithmic}
 \end{algorithm}

Using a naive prior to model the problem, we begin with sampling the orthonormal matrix components from a Haar measure.
The Haar measure can be thought of as a uniform distribution --- it is the unique measure that is invariant to the action of elements in the group \citep{mezzadri_2006}.
It is exceedingly easy to generate random orthonormal matrices from this measure.
It can be shown that due to the isotropic behavior, diagonalizing vectors of a matrix populated with i.i.d. Gaussian random variables results in such a distribution.
Practically, sampling is done using QR factorization, typically with an eigenvalue correction term that cancels in similarity transforms \citep{mezzadri_2006}.

Understanding that each measurement is optimized to minimize the eigenvalues of the CRLB, we follow the heuristic that smaller eigenvalues will be encountered more frequently and sample from an exponential distribution.
The parameter of the distribution is then a hyperparameter of the problem.

\fakesection{Interpolation Methods}
While one could choose any one of a large number of interpolation methods, in this work, we only analyze nearest neighbor and a locally averaged perturbation of nearest neighbor.
While alternative approaches may yield higher accuracy, the approximation quality for classification tasks is extremely well studied with strong guarantees, typically providing both upper and lower bounds due to the connection to density estimation techniques.

It is clear that the CRLB of our measurement system will be unevenly distributed in any practical setting: as we gather more data, the CRLB decreases.
Thus, we would like to weight our approximation quality to account for this phenomenon.
Here, we provide a bound for function approximation accuracy under a non-uniform measure defined by a probability distribution.

\begin{lemma}
    Assume for every $x \in \mathcal{X}$, the approximation of the value function is exact, i.e. $\hat{J}_\mathcal{X}(x) = J(x)$, where $\hat{J}_{\mathcal{X}}$ is our approximation.
    Furthermore, assume $J$ is Lipschitz with coefficient $L$.
    Then the mean absolute error of the function approximation is upper bounded by
    \begin{equation}
        \mathbb{E}_{\mathcal{X},x}\left[|\hat{J}_\mathcal{X}(x) - J(x)|\right] \leq L \mathbb{E}_{\mathcal{X}, x}\left[\min_{x' \in \mathcal{X}} \|x - x'\|\right]
    \end{equation}
    \label{lemma:lip}
\end{lemma}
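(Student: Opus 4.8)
The plan is to use the fact that, in the context established just above the statement, $\hat J_{\mathcal{X}}$ is the nearest-neighbor interpolant (or the locally-averaged perturbation thereof, which only changes constants): for a query point $x$ it returns the value $J$ assigned to a closest sample point. Concretely, fix a realization of $\mathcal{X} = \{x_1,\dots,x_N\}$ and write $x^{\ast}(x,\mathcal{X}) \in \argmin_{x' \in \mathcal{X}} \|x - x'\|$ for a nearest sample point to $x$, with ties broken by any fixed measurable rule. By the definition of the nearest-neighbor rule together with the exactness hypothesis $\hat J_{\mathcal{X}}(x') = J(x')$ for every $x' \in \mathcal{X}$, we have the identity $\hat J_{\mathcal{X}}(x) = J\!\left(x^{\ast}(x,\mathcal{X})\right)$ for all $x$.

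Next I would invoke the Lipschitz hypothesis on $J$ to obtain the pointwise estimate, valid for every realization of $\mathcal{X}$ and every $x$:
\[
  \bigl|\hat J_{\mathcal{X}}(x) - J(x)\bigr|
  = \bigl|J\!\left(x^{\ast}(x,\mathcal{X})\right) - J(x)\bigr|
  \le L\,\bigl\|x^{\ast}(x,\mathcal{X}) - x\bigr\|
  = L \min_{x' \in \mathcal{X}} \|x - x'\|.
\]
Taking the expectation of both sides over the joint law of $(\mathcal{X}, x)$ and using monotonicity of expectation — together with the fact that the deterministic constant $L$ factors out — yields
\[
  \mathbb{E}_{\mathcal{X},x}\!\left[\bigl|\hat J_{\mathcal{X}}(x) - J(x)\bigr|\right]
  \le L\,\mathbb{E}_{\mathcal{X},x}\!\left[\min_{x' \in \mathcal{X}} \|x - x'\|\right],
\]
which is exactly the claimed bound. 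Note that no independence between $\mathcal{X}$ and $x$ is needed, since the inequality is pointwise before integrating.

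The argument is essentially a one-line pointwise estimate followed by integration, so there is no genuine ``hard part''; the only thing requiring care is measurability, so that the two expectations are well defined. Here one checks that $x \mapsto \min_{x' \in \mathcal{X}} \|x-x'\|$ is continuous (a minimum of finitely many continuous functions) and that this quantity, as well as $\hat J_{\mathcal{X}}(x)$, depends jointly measurably on $(x_1,\dots,x_N,x)$; a measurable selection of $x^{\ast}$ exists because the argmin over a finite set is attained. Once measurability is in place, the chain of inequalities above closes the proof. For the locally-averaged variant, the same scheme applies after first bounding $|\hat J_{\mathcal{X}}(x) - J(x^\ast(x,\mathcal{X}))|$ by the averaging radius, which is absorbed into the right-hand side at the cost of an explicit additive/multiplicative constant.
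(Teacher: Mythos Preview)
Your proposal is correct and follows essentially the same approach as the paper: identify the nearest-neighbor point, apply the Lipschitz bound pointwise, then take expectations. Your treatment is in fact more careful about measurability and tie-breaking than the paper's, but the underlying argument is identical.
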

\begin{proof}
    By the Lipschitz assumption, $|J(x) - J(\tilde x)| \leq L\|x - \tilde x\|$, and so
    \begin{equation}
        \mathbb{E}_{\mathcal{X},x}\left[|J(x) - J(\tilde x)|\right] \leq L \mathbb{E}_{\mathcal{X}, x}\left[\|x - \tilde x\|\right].
    \end{equation}
    By construction of nearest neighbor regression, $\hat{J}_{\mathcal{X}}(x) = J(x')$ where $x' = \argmin_{\tilde x \in \mathcal{X}}\|x-\tilde x\|$.
    Thus
    \begin{equation}
        \mathbb{E}_{\mathcal{X},x}\left[|\hat{J}_\mathcal{X}(x) - J(x)|\right]  = \mathbb{E}_{\mathcal{X},x}\left[|J(x) - J(x')|\right] \leq L \mathbb{E}_{\mathcal{X}, x}\left[\min_{x' \in \mathcal{X}} \|x - x'\|\right]
    \end{equation}
\end{proof}

The expected minimum distance to the dataset is dependent on the distribution and is non-trivial to compute analytically.
We include a simulation in Figure \ref{fig:distances} illustrating the expected minimum distance as a function of the number of samples.
One important observation is the well-known curse of dimensionality, and so this particular function approximation would scale poorly to higher dimensions.

\begin{figure}
    \centering
    \includegraphics[width=0.47\textwidth]{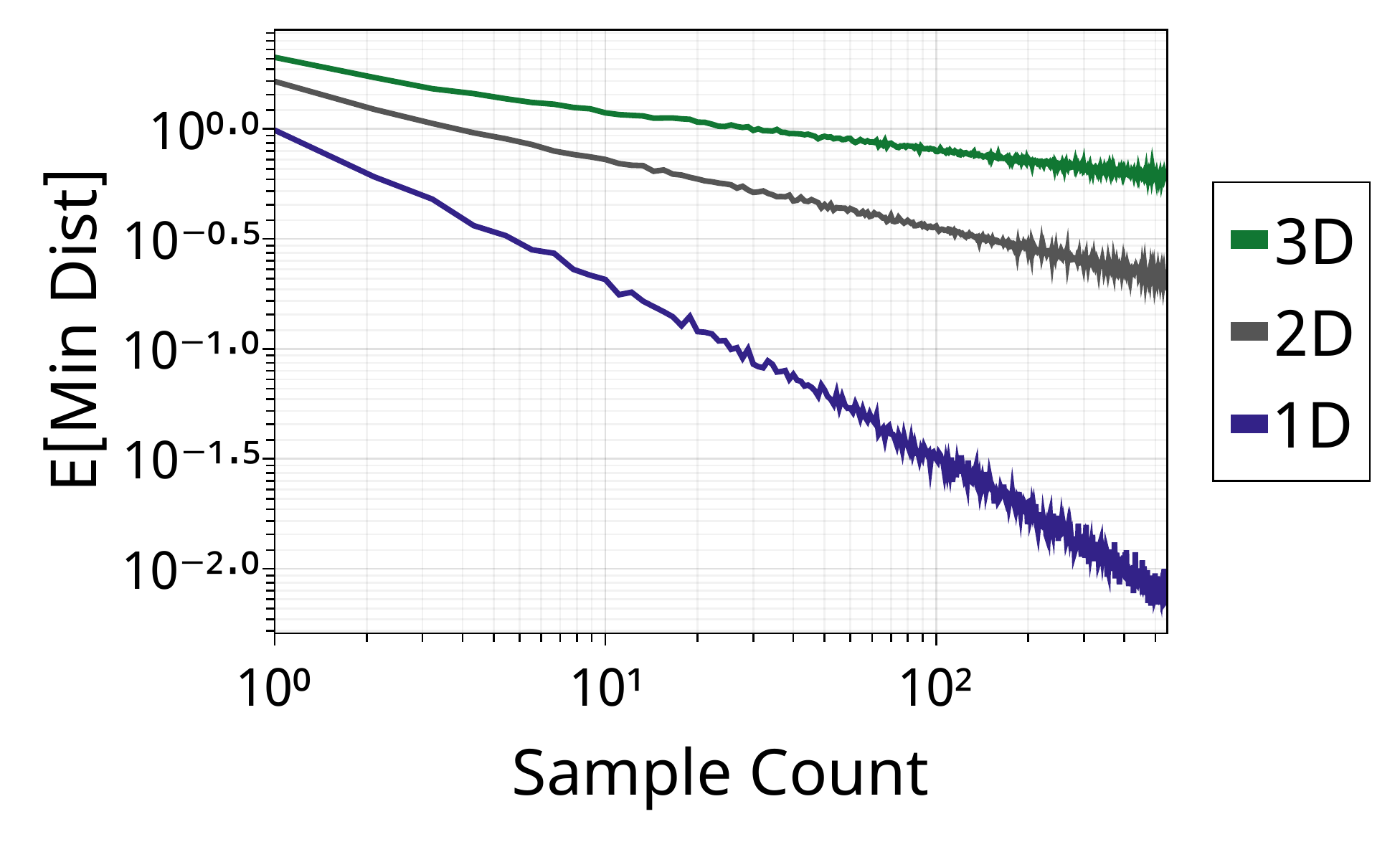}
    \caption{Numerically estimated expected minimum distance as a function of the size of the dataset.}
   \label{fig:distances} 
\end{figure}

Finally, the nearest neighbor model fails to produce reasonable decisions when the CRLB changes by less than twice the distance between sample points --- resulting in the state staying constant in value iteration steps.
To prevent this issue, we apply local averaging motivated by linear interpolation.
In particular, in our local averaging model, an approximate point is evaluated as
\begin{equation}
    \hat{J}_\mathcal{X}^{(loc)}(\hat{x}) = \frac{\sum_{x \in \mathcal{X}} J(x) \min\left(0,d_{max} - \|x-\hat{x}\|\right)}{\sum_{x \in \mathcal{X}} \min\left(0, d_{max} - \|x - \hat{x}\|\right)},
\end{equation}
where $d_{max}$ is a chosen maximum distance for averaging and $\hat{J}_\mathcal{X}(\hat{x})$ is replaced by a nearest neighbor approximation if the denominator becomes $0$.
We can readily bound the perturbation of our locally averaged interpolation from our nearest neighbor interpolation based on the Lipschitz assumption.
\begin{theorem}
    The mean absolute error of the locally averaged approximation is at most
    \begin{equation}
\mathbb{E}_{\mathcal{X},x}\left[|\hat{J}_\mathcal{X}^{(loc)}(x) - J(x)|\right] \leq L\left(\mathbb{E}_{\mathcal{X}, x}\left[\min_{x' \in \mathcal{X}} \|x - x'\|\right]+ 2 d_{max}\right).
    \end{equation}
\end{theorem}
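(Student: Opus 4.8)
The plan is to bound the error of the locally averaged approximation by the triangle inequality, splitting it into (i) the error of the plain nearest-neighbor approximation, which is already controlled by Lemma \ref{lemma:lip}, and (ii) the perturbation introduced by replacing the nearest-neighbor value with the local average, which I would control by a direct Lipschitz estimate. Throughout I assume, as in Lemma \ref{lemma:lip}, that the approximation is exact on $\mathcal{X}$ and that $J$ is $L$-Lipschitz.

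First I would fix a draw of the sample set $\mathcal{X}$ and a query point $x$, write $\hat{J}_{\mathcal{X}}(x) = J(x^\star)$ where $x^\star = \argmin_{x' \in \mathcal{X}} \|x - x'\|$ is the nearest sample, and observe that
\[
    |\hat{J}_\mathcal{X}^{(loc)}(x) - J(x)| \;\le\; |\hat{J}_\mathcal{X}^{(loc)}(x) - \hat{J}_\mathcal{X}(x)| \;+\; |\hat{J}_\mathcal{X}(x) - J(x)|.
\]
Taking $\mathbb{E}_{\mathcal{X},x}$ of both sides, the second term on the right is at most $L\,\mathbb{E}_{\mathcal{X},x}[\min_{x' \in \mathcal{X}}\|x - x'\|]$ by Lemma \ref{lemma:lip}. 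So the whole task reduces to showing the pointwise bound $|\hat{J}_\mathcal{X}^{(loc)}(x) - \hat{J}_\mathcal{X}(x)| \le 2 L d_{max}$.

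To get that, I would consider the two cases built into the definition of $\hat{J}_\mathcal{X}^{(loc)}$. If the normalizing denominator vanishes, then by definition $\hat{J}_\mathcal{X}^{(loc)}(x) = \hat{J}_\mathcal{X}(x)$ and the bound is trivial. Otherwise, at least one sample lies within distance $d_{max}$ of $x$, and hence so does the nearest sample $x^\star$; furthermore $\hat{J}_\mathcal{X}^{(loc)}(x)$ is, by construction, a convex combination of the values $\{\,J(x') : x' \in \mathcal{X},\ \|x' - x\| < d_{max}\,\}$, since exactly those samples carry positive weight. Thus $\hat{J}_\mathcal{X}(x) = J(x^\star)$ and every $J(x')$ entering the combination are values of $J$ at points lying in the ball of radius $d_{max}$ about $x$; since $J$ is $L$-Lipschitz, any two such values differ by at most $L$ times the diameter of that ball, i.e. by at most $2 L d_{max}$. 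A convex combination of numbers each within $2 L d_{max}$ of $J(x^\star)$ is itself within $2 L d_{max}$ of $J(x^\star)$, giving the claimed pointwise bound. Combining with the first paragraph and factoring out $L$ yields the theorem.

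I do not expect a genuine obstacle here; the only things to be careful about are the bookkeeping around the fallback-to-nearest-neighbor case (handled trivially) and the observation that the support of the averaging weights, together with the nearest neighbor $x^\star$, all sit inside the $d_{max}$-ball around $x$ — after which the result is just a stability estimate for convex combinations of Lipschitz function values.
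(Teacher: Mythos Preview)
Your proof is correct and follows the same overall architecture as the paper: both split via the triangle inequality into the nearest-neighbor error (handled by Lemma \ref{lemma:lip}) plus the perturbation $|\hat{J}_\mathcal{X}^{(loc)}(x) - \hat{J}_\mathcal{X}(x)|$, and both establish the pointwise bound $2Ld_{max}$ for the latter. The difference is only in how that perturbation bound is obtained: the paper proceeds by an explicit case analysis on the number of samples in the $d_{max}$-ball (one or fewer, then two, then $N$, with a ``worst-case'' heuristic about near-equidistant points), whereas you observe directly that $\hat{J}_\mathcal{X}^{(loc)}(x)$ is a convex combination of values $J(x')$ with $x'$ in the ball, and that any such value differs from $J(x^\star)$ by at most $L$ times the diameter $2d_{max}$. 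Your route is shorter and avoids the somewhat informal worst-case step in the paper's $N$-point case, at no loss of generality.
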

\begin{proof}
    The inequality comes from observing the maximum perturbation from the nearest neighbor solution.
    
    First, observe that if the $d_{max}$ ball around $x$ contains $1$ or fewer points, then there is no change to the approximation.
    Next, consider the case when there are exactly two representation points in the ball.
    Then $\hat{J}_{\mathcal{X}}^{(loc)}(x) = \frac{(d_{max} - d_1)J(x_1) + (d_{max}-d_2) J(x_2)}{2d_{max} - d_1 - d_2}$, where $d_i$ is the distance from $x$ to $x_i$.
    Now suppose $d_1 < d_2$, then $\hat{J}_{\mathcal{X}}(x) = J(x_1)$.
    \begin{equation}
        \hat{J}_\mathcal{X}(x) - \hat{J}_{\mathcal{X}}^{(loc)}(x) = (J(x_1) - J(x_2))\frac{d_{max} - d_2}{(d_{max} - d_1) + (d_{max} - d_2)}
    \end{equation}
    Now, $d_1 < d_2 \Rightarrow (d_{max} - d_1) > (d_{max} - d_2) \Rightarrow \frac{d_{max} - d_2}{(d_{max} - d_1) + (d_{max} - d_2)} \leq \frac{1}{2}$.
    It then follows that
    \begin{equation}
        \left|\hat{J}_\mathcal{X}(x) - \hat{J}_{\mathcal{X}}^{(loc)}(x)\right| \leq \frac{1}{2} \left|J(x_1) - J(x_2)\right|
    \end{equation}
    The distance between $x_1$ and $x_2$ is upper bounded by $2d_{max}$ which represents opposite ends of the ball.
    Thus, by the Lipschitz assumption
    \begin{equation}
        \left|\hat{J}_\mathcal{X}(x) - \hat{J}_{\mathcal{X}}^{(loc)}(x)\right| \leq L d_{max}.
    \end{equation}

    Consider the implication of adding additional points to the ball.
    Note that again, the worst-case scenario is when all points are nearly equidistant from the center, resulting in 
    \begin{equation}
        \left|\hat{J}_\mathcal{X}(x) - \hat{J}_{\mathcal{X}}^{(loc)}(x)\right| \leq \frac{1}{N} \left|J(x_1) - \sum_{i=2}^NJ(x_i)\right|.
    \end{equation}
    Once again, $J(x_1) - 2Ld_{max} \leq J(x_i) \leq J(x_1) + 2 Ld_{max}$, and thus
    \begin{equation}
        \left|\hat{J}_\mathcal{X}(x) - \hat{J}_{\mathcal{X}}^{(loc)}(x)\right| \leq \left(\frac{N-1}{N}\right)2 L d_{max} 
    \end{equation}
    By observing $(N-1)/N < 1$ for all $N>1$, we find one irrespective of the number of points in the ball.
    \begin{equation}
        \left|\hat{J}_\mathcal{X}(x) - \hat{J}_{\mathcal{X}}^{(loc)}(x)\right| \leq 2 L d_{max} 
    \end{equation}
    Finally, apply the triangle inequality to conclude the original statement, i.e.
    \begin{align}
        \left|\hat{J}_{\mathcal{X}}^{(loc)} - J(x)\right| &= \left|\hat{J}_{\mathcal{X}}^{(loc)} - \hat{J}_{\mathcal{X}}(x) + \hat{J}_{\mathcal{X}} - J(x)\right| \\
        &\leq \left|\hat{J}_{\mathcal{X}}^{(loc)} - \hat{J}_{\mathcal{X}}(x)\right| + \left|\hat{J}_{\mathcal{X}} - J(x)\right| \\
        &\leq L \left(2 d_{max} + \min_{x' \in \mathcal{X}}\|x - x'\|\right)
    \end{align}
    \begin{equation}
        \Rightarrow \mathbb{E}_{\mathcal{X},x}\left[|\hat{J}_\mathcal{X}^{(loc)}(x) - J(x)|\right] \leq L\left(\mathbb{E}_{\mathcal{X}, x}\left[\min_{x' \in \mathcal{X}} \|x - x'\|\right] + 2d_{max}\right).
    \end{equation}
\end{proof}
Conveniently, as $N$ increases, we can decrease $d_{max}$ while retaining the averaging behavior.
Thus, we choose $d_{max}$ proportional to expected minimum distance to preserve the behavior of the bound.

\section{PROOFS FOR SECTION \ref{sec:oracle}}
\label{app:oracle}

\begin{lemma}[Context: See Lemma \ref{lemma:nef}]
    \label{prf:nef}
    Given a finite set of independent random variables $\{\xi_{(i)}\}$ in NEFs with log-partition functions $\{A_i\}$, parameters $\{\theta_i\}$, and a set of real-valued weights $\{\alpha_i\}$, then $\bar \xi = \sum_i \alpha_i \xi_{(i)}$ is in a NEF with a log partition function
    \begin{equation}
        A(\theta) := \sum_i A_i\left(\theta +  \alpha_i\theta_i\right).
    \end{equation}
\end{lemma}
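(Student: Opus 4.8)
The plan is to exhibit the exponential family of the statement directly, by exponentially tilting the law of $\bar\xi$, rather than attempting to write down the (generally intractable) density of the convolution $\bar\xi=\sum_i\alpha_i\xi_{(i)}$. The only ingredient needed beyond the definitions is that the moment generating function of a member of a NEF is again an exponential of log-partition functions: if $\eta$ has density $p(\eta;\phi)=\exp\{\phi^{\top}\eta-A(\phi)+B(\eta)\}$ with respect to its base measure, then
\[
  \mathbb{E}\!\left[e^{t^{\top}\eta}\right]
  = e^{-A(\phi)}\!\int \exp\{(\phi+t)^{\top}\eta+B(\eta)\}\,d\eta
  = \exp\{A(\phi+t)-A(\phi)\},
\]
which is immediate from the characterization of $A$ as the normalizing constant, and is valid for every $t$ with $\phi+t$ in the parameter set; since the true parameter lies in that open set, it holds in particular for all $t$ in a neighborhood of $0$.

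I would then apply this identity to each $\xi_{(i)}$ at its parameter $\theta_i$ and invoke independence. For $\theta$ small enough that $\theta_i+\alpha_i\theta$ lies in the $i$-th parameter set for every $i$ — a finite intersection of neighborhoods of $0$, hence again a neighborhood of $0$ — this gives
\[
  \mathbb{E}\!\left[e^{\theta\bar\xi}\right]
  = \prod_i \mathbb{E}\!\left[e^{(\alpha_i\theta)\,\xi_{(i)}}\right]
  = \exp\!\left\{\sum_i\bigl(A_i(\theta_i+\alpha_i\theta)-A_i(\theta_i)\bigr)\right\}.
\]
Now let $q_0$ denote the density of $\bar\xi$ with respect to the appropriate base measure, and for such $\theta$ define the tilted density $q_\theta(b):=e^{\theta b}\,q_0(b)\big/\mathbb{E}[e^{\theta\bar\xi}]$. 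Substituting the displayed product and collecting the $b$-free pieces into $B(b):=\log q_0(b)+\sum_iA_i(\theta_i)$ yields $q_\theta(b)=\exp\{\theta b-A(\theta)+B(b)\}$, a NEF with new natural parameter $\theta$ and log-partition function $A(\theta)=\sum_iA_i(\theta_i+\alpha_i\theta)$ — the sum of the component log-partition functions evaluated at $\alpha_i$-weighted affine shifts of the new parameter, matching the log-partition function claimed in the statement. Since $q_\theta$ integrates to $1$ by construction and $q_0$ is recovered at $\theta=0$, the law of $\bar\xi$ is a member of this family; this also furnishes the exponential-family structure relied on downstream in Theorem \ref{thm:rank}.

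The algebra of the product and the bookkeeping of the constant $\sum_iA_i(\theta_i)$ into $B$ are routine. The point that deserves care — and the only real obstacle — is verifying that tilting genuinely lands in the standard NEF form: the law of a linear functional of an exponential-family random vector need not itself be an exponential family, and the construction works precisely because $\bar\xi$ is the sufficient statistic conjugate to the new parameter $\theta$, so that $e^{\theta b}$ and the normalizer are the sole $\theta$-dependent factors while the carrier $B$ is $\theta$-free. It is also worth noting why the generating-function route is essential rather than merely convenient: the explicit convolution of heterogeneous NEFs — for instance a Gaussian summand combined with a Poisson summand, as the measurement model permits — has no usable closed form, so any direct attack on $q_0$ itself is a dead end.
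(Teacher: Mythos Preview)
Your argument is correct and follows essentially the same route as the paper: compute the moment generating function of each summand via the NEF identity $\mathbb{E}[e^{t\eta}]=\exp\{A(\phi+t)-A(\phi)\}$, multiply over independent terms, and read off the new log-partition function. The paper stops at ``by the structure of the MGF, the distribution itself is in a NEF,'' whereas you make that step explicit by constructing the tilted family $q_\theta$ and verifying the canonical form; this is a welcome clarification but not a different idea.
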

\begin{proof}
Without loss of generality, we prove the case of two random variables. 
The result can then be trivially extended to any finite set, either by induction or through a modification of the below proof.

The moment generating function of random variable $X$ in a NEF is well-known to be
\begin{equation}
    M_X(s) = \exp\{A(\theta + s) - A(\theta)\}.
\end{equation}
If we denote the log-partition function of $\xi_i$ to be $A_i$, then
\begin{equation}
    M_{\xi_1 + \xi_2}(s) = \exp\{A_1(\theta_1 + s) + A_2(\theta_2 + s) - A_1(\theta_1) - A_2(\theta_2)\}.
\end{equation}
We then define the new log-partition function $A: s \mapsto A_1(\theta_1 + s) + A_2(\theta_2 + s)$.
Thus, by the structure of the MGF, the distribution itself is in a NEF.
\end{proof}

\begin{lemma}[Context: See Lemma \ref{lemma:fisher}]
    \label{prf:fisher}
The Fisher information of a random variable in a NEF is the variance of the random variable.
\end{lemma}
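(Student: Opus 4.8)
The plan is to compute the Fisher information directly from its definition in Equation~\eqref{eq:fim}, exploiting the special structure of the density in a NEF. First I would form the log-likelihood $\ell(\xi;\theta) = \log p(\xi;\theta) = \theta^\top \xi - A(\theta) + B(\xi)$ and differentiate with respect to $\theta$. Because $\theta^\top\xi$ is linear in $\theta$ and $B(\xi)$ is $\theta$-free, the score collapses to $\frac{\partial}{\partial\theta}\ell(\xi;\theta) = \xi - \nabla A(\theta)$, an affine function of the observation.

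Next I would identify $\nabla A(\theta)$ with the mean $\mathbb{E}_\theta[\xi]$. This is the classical exponential-family identity: differentiating the normalization constraint $\int \exp\{\theta^\top\xi - A(\theta) + B(\xi)\}\,d\xi = 1$ under the integral sign yields $\int (\xi - \nabla A(\theta))\,p(\xi;\theta)\,d\xi = 0$, i.e. $\mathbb{E}_\theta[\xi] = \nabla A(\theta)$ (equivalently, the score has zero mean, as it must under the regularity conditions of Appendix~\ref{app:crlb}). Hence the score equals the centered variable $\xi - \mathbb{E}_\theta[\xi]$, and substituting into Equation~\eqref{eq:fim} gives
\begin{equation}
    J(\theta) = \mathbb{E}_\theta\!\left[(\xi - \mathbb{E}_\theta[\xi])(\xi - \mathbb{E}_\theta[\xi])^\top\right] = \operatorname{Var}_\theta(\xi),
\end{equation}
which is exactly the claim. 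As an internal consistency check I would also note the Hessian form $J(\theta) = -\mathbb{E}_\theta[\nabla^2 \ell(\xi;\theta)] = \nabla^2 A(\theta)$, recovering the familiar fact that the Hessian of the log-partition function equals the covariance of the sufficient statistic.

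The main obstacle — really the only nontrivial point — is justifying the interchange of differentiation and integration used both to obtain the mean identity $\mathbb{E}_\theta[\xi] = \nabla A(\theta)$ and to pass between the two equivalent forms of $J(\theta)$. This is precisely what the standing regularity conditions of Appendix~\ref{app:crlb}, together with the hypotheses invoked elsewhere — twice differentiability of $A$ and existence of the second moment of $\xi$, cf.\ Lemma~\ref{lemma:inverse} — are in place to supply, so in the write-up I would cite those conditions rather than re-derive a dominated-convergence argument. Everything else is the short algebra above.
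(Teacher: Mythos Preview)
Your proposal is correct and follows essentially the same approach as the paper: write the log-likelihood $\theta\xi - A(\theta) + B(\xi)$, differentiate to get the score $\xi - A'(\theta)$, identify $A'(\theta)=\mathbb{E}[\xi]$, and conclude that the Fisher information is $\mathbb{E}[(\xi-\mathbb{E}[\xi])^2]$. The paper's version is terser---it simply asserts $A'(\theta)=\mathbb{E}[\xi]$ without the differentiation-under-the-integral justification you supply---but the argument is otherwise identical.
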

\begin{proof}
    By definition of the NEF, the log-likelihood takes the form
    \begin{equation}
        \theta \xi - A(\theta) + B(\xi).
    \end{equation}
    Recall that $A'(\theta) = \mathbb{E}[\xi]$.
    Thus, by computing the derivative, the Fisher information is 
    \begin{equation}
        \mathbb{E}[(\xi - A'(\theta))^2] = \mathbb{E}[(\xi - \mathbb{E}[x])^2]
    \end{equation}
\end{proof}

\begin{lemma}[Context: See Lemma \ref{lemma:diagram}]
    \label{prf:diagram}
    The diagram in Figure \ref{fig:commute} commutes for any vector $\xi$ of independent random variables from NEFs.
\end{lemma}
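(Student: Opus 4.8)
The plan is to prove commutativity by a direct diagram chase: unwind each of the four arrows and check that the two composite routes between $\mu$ and $\bar\theta$ return the same value. I would gather the soft ingredients first. The top arrow is fixed by linearity of expectation, $\mathbb{E}[\bar\xi]=\mathbb{E}[\langle u,\xi\rangle]=\langle u,\mathbb{E}[\xi]\rangle=\langle u,\mu\rangle$, so $h$ genuinely carries the mean vector of $\xi$ to the mean of $\bar\xi$. The vertical arrows are the mean-to-natural-parameter maps: invoking the standard NEF identity $\frac{d}{d\Theta}A(\Theta)=\mathbb{E}_\Theta[\,\cdot\,]$ (already used for Lemma~\ref{lemma:fisher}) gives $g_i(\theta_i)=A_i'(\theta_i)=\mu_i$ for the summand families and $g(\bar\theta)=A'(\bar\theta)=\bar\mu$ for the family produced by Lemma~\ref{lemma:nef}. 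Whether these maps admit inverses is a separate question, addressed in Lemma~\ref{lemma:inverse}; commutativity itself is a statement about $g$, the $g_i$ and $h$ and does not need the inverses.

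With this in place the chase is short. Reading the square from the natural-parameter tuple $\theta$, one route is $\theta\mapsto\mu=\{g_i(\theta_i)\}_i\mapsto\langle u,\mu\rangle$ and the other is $\theta\mapsto\bar\theta\mapsto g(\bar\theta)$, where the first step of the second route is the lower arrow, i.e. the rule by which the individual natural parameters combine into that of $\bar\xi$. Hence the square commutes exactly when $g(\bar\theta)=\langle u,\{g_i(\theta_i)\}_i\rangle$ --- equivalently, when the derivative of the convolved log-partition function evaluated at the natural parameter of $\bar\xi$ equals the $u$-weighted sum $\sum_i u_i A_i'(\theta_i)$ of the means of the summands. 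I would verify this by substituting the explicit additive form $A(\Theta)=\sum_i A_i(\Theta+\alpha_i\theta_i)$ from Lemma~\ref{lemma:nef} (with $\alpha_i=u_i$), differentiating termwise, and matching summand by summand; injectivity of $g$ then promotes the identity to full commutativity and exhibits $T$ as the common composite $\mu\mapsto\bar\theta$.

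The one genuine obstacle is this last identity. It is the single place where the two different ``linear'' structures appearing in Figure~\ref{fig:commute} must be reconciled --- the Euclidean inner product acting on mean vectors, against the additive law by which natural parameters combine under convolution of independent NEFs --- and it is the only step that uses the precise form of $A$ from Lemma~\ref{lemma:nef} rather than purely formal facts (linearity of expectation and the log-partition/mean correspondence). I would therefore isolate and prove that scalar identity first, then assemble the chase around it; the rest --- matching domains and codomains of the four maps, and existence of the drawn inverses --- is routine bookkeeping, the latter supplied by Lemma~\ref{lemma:inverse}.
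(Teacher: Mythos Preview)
Your chase handles the top and the two vertical arrows correctly; those are linearity of expectation and the NEF mean map, and need no more than you say. The gap is the bottom edge. In Figure~\ref{fig:commute} that arrow is also labelled $h$, so the square is asserting specifically that $\bar\theta=\langle u,\theta\rangle$, and this is the one non-formal claim. Your plan does not establish it. Differentiating the Lemma~\ref{lemma:nef} log-partition $A(\Theta)=\sum_i A_i(\Theta+u_i\theta_i)$ termwise gives $\sum_i A_i'(\Theta+u_i\theta_i)$, which carries no $u_i$ prefactor and cannot be ``matched summand by summand'' with $\sum_i u_iA_i'(\theta_i)$; and the natural parameter of $\bar\xi$ in the family that lemma actually constructs is $0$, not $\langle u,\theta\rangle$. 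The scalar identity you isolate as the obstruction, $g(\bar\theta)=\sum_i u_i A_i'(\theta_i)$, is in fact tautological once $\bar\theta$ is read as the natural parameter of $\bar\xi$ --- both sides equal $\bar\mu$, one by the definition of $g$ and the other by linearity of expectation --- so verifying it tells you nothing about what $\bar\theta$ is or why the lower arrow should be $h$.

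The paper's proof supplies the missing idea: the natural parameter of a NEF is determined only up to an additive constant (translate $\bar\theta$ and absorb the offset into the base function; the family is unchanged), so one is free to \emph{recenter} the parameterization of $\bar\xi$ so that its natural parameter sits at $\langle u,\theta\rangle$. That recentering freedom, not a termwise identity among the $A_i'$, is what forces the bottom arrow to be $h$ and makes the square commute.
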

\begin{proof}
    We must show that $\{g_i\} \circ h$ = $h \circ g$.
    It suffices to show that each connection as drawn is accurate.
    The application of $g$ and $\{g_i\}$ are well-known from the NEF.
    For completeness, recall that the expected value of the score function of a random variable is $0$, i.e. 
    \begin{equation}
        \mathbb{E}[\frac{\partial}{\partial \theta} \log p(x;\theta)|\theta] = \frac{\partial}{\partial \theta}\mathbb{E}[\log p(x;\theta)|\theta] = 0.
    \end{equation}
    Then
    \begin{align}
        \mathbb{E}[\frac{\partial}{\partial \theta}(x\theta - A(\theta) + B(x))] &= \mathbb{E}[x - \frac{\partial}{\partial \theta}A(\theta)] \\
        &= \mu - g(\theta) = 0 \\
        &\Rightarrow \mu = g(\theta)
    \end{align}

    Furthermore, $h: \{\mu_i\} \mapsto \mu$ follows by linearity of expectation.

    Thus, the only tricky connection is to show $\theta = h(\{\theta_i\})$.
    Fortunately, this follows by observing we have flexibility in the definition of the NEF of $\bar{\xi}$.
    In particular, adding a constant offset to our definition of $\theta$ does not impact the definition of the family.
    Thus, we are free to center $\theta$ and our NEF definition around $\theta = h(\{\theta_i\})$.
\end{proof}

\begin{lemma}[Context: See Lemma \ref{lemma:inverse}]
    \label{prf:inverse}
    Assume the log-partition function $A$ of a NEF is twice differentiable and that the second moment of the associated random variable exists.
    Then the inverse of $g(\theta) := \left.\frac{\partial}{\partial \Theta} A(\Theta) \right|_{\Theta = \theta}$ exists if and only if the random variable is non-degenerate for almost all $\theta$.
\end{lemma}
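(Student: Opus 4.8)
The plan is to reduce the statement to the elementary fact that a differentiable function of a single real variable is injective exactly when its derivative fails to vanish on any subinterval, applied to $g$. First I would record the two standard natural-exponential-family identities: $g(\theta) = A'(\theta) = \mathbb{E}_\theta[\xi]$ and, differentiating once more, $g'(\theta) = A''(\theta) = \mathrm{Var}_\theta(\xi)$, the latter being precisely Lemma~\ref{lemma:fisher} and legitimate here since $A$ is assumed twice differentiable and the second moment is assumed finite. This shows $g' \ge 0$, so $g$ is continuous and non-decreasing on the open parameter domain $\Theta$; consequently ``$g$ admits an inverse on its image'' is equivalent to ``$g$ is strictly increasing'', which by the fundamental theorem of calculus is equivalent to $\int_{\theta_1}^{\theta_2} A''(t)\,dt > 0$ for every $\theta_1 < \theta_2$ in $\Theta$.

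For the ``if'' direction: if $\xi$ is non-degenerate for Lebesgue-almost every $\theta$ then $A'' > 0$ almost everywhere, so the integral of this non-negative function over any genuine subinterval is strictly positive, $g$ is strictly increasing, and $g^{-1}$ is well defined on $g(\Theta)$. For the converse I would argue by contraposition: suppose the degeneracy set $E = \{\theta \in \Theta : \mathrm{Var}_\theta(\xi) = 0\}$ has positive Lebesgue measure. I would then invoke the real-analyticity of the log-partition function of an exponential family on the interior of its natural parameter space, which makes $A''$ real-analytic; since a real-analytic function vanishing on a set of positive measure (which necessarily has an accumulation point) must vanish identically, $A'' \equiv 0$, whence $g = A'$ is constant and certainly not injective, so no inverse exists.

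The main obstacle --- indeed the only step that is not one-variable calculus applied to $A'$ --- is this last one: upgrading ``degenerate on a set of positive measure'' to ``degenerate everywhere'' genuinely uses the analyticity of $A$, which I would either cite from the exponential-family literature or circumvent by reading the degeneracy hypothesis as holding on an open interval (in which case $g$ is literally constant there and the conclusion is immediate). Everything else --- the moment identities, the sign of $A''$, and the monotonicity argument --- is routine.
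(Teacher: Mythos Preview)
Your approach is essentially the paper's: both reduce to the identity $g'(\theta)=A''(\theta)=\mathrm{Var}_\theta(\xi)\ge 0$ (via Lemma~\ref{lemma:fisher}) and argue invertibility through strict monotonicity of $g$. The only difference is in the contrapositive direction: the paper simply assumes degeneracy holds on an interval --- exactly the shortcut you flag at the end --- and concludes $g'=0$ there, so $g$ is constant on that interval and non-injective. Your analyticity argument is a genuine strengthening that handles the literal measure-theoretic reading of ``almost all $\theta$'', which the paper does not address; conversely, the paper's version is shorter but tacitly identifies ``positive-measure degeneracy set'' with ``contains an interval''. In the ``if'' direction you are also slightly more careful, integrating $A''>0$ a.e.\ over subintervals rather than asserting $g'>0$ pointwise, but this is a cosmetic difference.
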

\begin{proof}
    First, assume that $\xi$ is non-degenerate.
    Then, we will show $g$ is strictly monotonic and thus invertible.
    Recall that $g := A'$, where $A'$ is the derivative of $A$ with respect to $\theta$.
    Furthermore, recall that under the above regularity conditions, $A''$ is the Fisher information.
    Finally, by Lemma \ref{lemma:fisher}.
    \begin{equation}
        \frac{\partial}{\partial \theta}g(\theta) = A''(\theta) = \mathbb{E}[(\xi - \mathbb{E}[\xi])^2] = \text{Var}(\xi) > 0.
    \end{equation}
    Thus, $g$ is strictly increasing and is therefore invertible.
    
    Now, by contrapositive, assume there exists an interval for which $\xi$ deterministically takes a constant value.
    Then by the above argument, $\frac{\partial}{\partial \theta}g(\theta) = 0$ for all $\theta$ on the interval, and thus $g$ is not invertible on the interval.
\end{proof}

\begin{theorem}[Context: See Theorem \ref{thm:rank}]
    \label{prf:rank}
    Suppose $\xi$ is a vector of independent random variables from NEFs with twice differentiable log-partition functions, mean vector $\mu$, and a diagonal covariance matrix $\Lambda_\sigma$.
    Then the Fisher information of the observation $\langle u, \xi\rangle$ is 
    \begin{equation}
        I(\mu) = \frac{u u^\top}{u^\top \Lambda_\sigma u}
    \end{equation}
\end{theorem}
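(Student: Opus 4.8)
The plan is to compute $I(\mu)$ by transporting a one-dimensional Fisher information through the reparameterization $T$ of Figure \ref{fig:commute}; since the intermediate natural parameter $\bar\theta$ is scalar, the rank-one form of the answer is then automatic. Concretely, the observation is $\langle u,\xi\rangle = \sum_i u_i \xi_{(i)}$, which is exactly the weighted sum of Lemma \ref{lemma:nef} with weights $\alpha_i = u_i$; hence it is a random variable in a one-parameter NEF, whose natural parameter and mean I denote $\bar\theta$ and $\bar\mu$. Independence of the $\xi_{(i)}$ together with diagonality of $\Lambda_\sigma$ gives $\mathrm{Var}(\langle u,\xi\rangle) = \sum_i u_i^2 (\Lambda_\sigma)_{ii} = u^\top\Lambda_\sigma u$, finite because the log-partition functions are twice differentiable, and by Lemma \ref{lemma:fisher} the Fisher information of the observation about $\bar\theta$ equals this variance: $I(\bar\theta) = u^\top\Lambda_\sigma u$. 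This is positive exactly when $\langle u,\xi\rangle$ is non-degenerate, the ``mild condition'' mentioned in Section \ref{sec:fish}; by Lemma \ref{lemma:inverse} that is precisely what makes $g^{-1}$, and therefore $T$, well defined, and it fails only in trivial cases such as $u=0$.

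Next I would reparameterize from $\bar\theta$ to $\mu$. By Lemma \ref{lemma:diagram} the diagram commutes, so $T = g^{-1}\circ h$ is a smooth map $\mu\mapsto\bar\theta$, and since a member of a NEF is determined by its natural parameter, the law of $\langle u,\xi\rangle$ depends on $\mu$ only through $\bar\theta = T(\mu)$. Applying the chain rule to the score --- the reparameterization identity for the Fisher information recalled in Appendix \ref{app:crlb} --- yields
\[
    I(\mu) = [dT_\mu]^\top \, I(\bar\theta) \, [dT_\mu] = (u^\top\Lambda_\sigma u)\,[dT_\mu]^\top [dT_\mu],
\]
where $dT_\mu \in \mathbb{R}^{1\times M}$ is a row vector, so $I(\mu)$ is rank one. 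It remains to evaluate $dT_\mu$, the only genuine calculation. With $h:\mu\mapsto\langle u,\mu\rangle$ and $g = A'$ the mean map of the NEF of $\langle u,\xi\rangle$, the chain rule gives $dT_\mu = (g^{-1})'(h(\mu))\,u^\top = u^\top / g'(\bar\theta)$; but $g'(\bar\theta) = A''(\bar\theta)$ is again the Fisher information of this one-parameter family, hence $= u^\top\Lambda_\sigma u$ by Lemma \ref{lemma:fisher}. Therefore $dT_\mu = u^\top/(u^\top\Lambda_\sigma u)$, and substituting back,
\[
    I(\mu) = (u^\top\Lambda_\sigma u)\,\frac{u}{u^\top\Lambda_\sigma u}\cdot\frac{u^\top}{u^\top\Lambda_\sigma u} = \frac{u u^\top}{u^\top\Lambda_\sigma u}.
\]

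The obstacle I anticipate is bookkeeping rather than depth: one must keep straight that the derivative entering $dT_\mu$ is that of $g^{-1}$ for the \emph{aggregated} variable $\langle u,\xi\rangle$ (not a coordinatewise composition of the per-coordinate $g_i^{-1}$), so that $dT_\mu$ contributes exactly the factor $1/\mathrm{Var}(\langle u,\xi\rangle)$ rather than some product of per-coordinate variances. Granting the lemmas of Section \ref{sec:fish}, the remaining computation is elementary and the rank-one structure is forced by $\bar\theta$ being scalar.
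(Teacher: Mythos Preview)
Your proof is correct and follows essentially the same route as the paper: both factor the reparameterization as $T=g^{-1}\circ h$, identify $I(\bar\theta)=\mathrm{Var}(\langle u,\xi\rangle)=u^\top\Lambda_\sigma u$ via Lemma~\ref{lemma:fisher}, compute $dT_\mu$ by the chain rule using $(g^{-1})'=1/\mathrm{Var}(\langle u,\xi\rangle)$ and $dh_\mu=u$, and then substitute into the reparameterized Fisher information. Your version is, if anything, a bit more explicit about the transpose conventions and about why the non-degeneracy condition of Lemma~\ref{lemma:inverse} is needed, but there is no substantive difference in the argument.
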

\begin{proof}
    The proof follows from the transformation to reparameterized Fisher information.
    That is, we can instead write $\bar \theta = T(\mu)$.
    \begin{align}
        I(\mu) &= \left[dT_{\mu}\right] I(\bar \theta) \left[dT_{\mu}\right]^\top \\
        &= \left[dh_{\mu}\right]dg^{-1}_{\bar \mu} I(\bar \theta) dg^{-1}_{\bar \mu} \left[dh_{\mu}\right]^\top
    \end{align}
    Now, recall that $I(\bar \theta) = dg_{\bar \mu} = \text{Var}(\bar \xi) = \sum_i u_{(i)}^2 \sigma_i^2 = u^\top \Lambda_\sigma u$, where $u_{(i)}$ represents the $i$'th element of $u$.
    Then by noting the inverse commutes with the derivative and that scalar multiplication commutes,
    \begin{equation}
        I(\mu) = \left[dh_{\mu}\right]\left[dh_{\mu}\right]^\top \frac{1}{u^\top \Lambda_\sigma u}.
    \end{equation}
    Finally, observe $\left[dh_{\mu}\right] = u$ to conclude the proof.
\end{proof}

\begin{corollary}[Context: See Corollary \ref{cor:invariant}]
    \label{prf:invariant}
    The Fisher information for state estimation under i.i.d. noise is invariant to the scaling of the linear functional.
\end{corollary}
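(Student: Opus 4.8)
The plan is to derive this directly from the closed form of the Fisher information matrix established in Theorem \ref{thm:rank}. That theorem states that for a measurement $\langle u, \xi \rangle$ with $\xi$ a vector of independent NEF random variables with diagonal covariance $\Lambda_\sigma$, the Fisher information about the mean vector $\mu$ is $I(\mu) = \frac{u u^\top}{u^\top \Lambda_\sigma u}$. Since scaling is the only operation in play, the proof reduces to substituting $u \mapsto cu$ for an arbitrary nonzero scalar $c$ and simplifying.

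Concretely, I would first invoke Theorem \ref{thm:rank} to write the Fisher information for the scaled functional $cu$ as $I_{cu}(\mu) = \frac{(cu)(cu)^\top}{(cu)^\top \Lambda_\sigma (cu)}$. Then I would pull the scalar $c$ out of each factor: the numerator becomes $c^2 u u^\top$ and the denominator becomes $c^2\, u^\top \Lambda_\sigma u$, using bilinearity of the quadratic form. The factors of $c^2$ cancel, leaving $I_{cu}(\mu) = \frac{u u^\top}{u^\top \Lambda_\sigma u} = I_u(\mu)$, which is exactly the claim. One might also note, as the surrounding text already does, that $\|u\|_{\Lambda_\sigma} = \sqrt{u^\top \Lambda_\sigma u}$ is a genuine norm (positive definiteness of $\Lambda_\sigma$ follows from non-degeneracy of each noise component, via Lemma \ref{lemma:inverse}), so the denominator is nonzero whenever $u \neq 0$ and the expression is well-defined on the punctured space.

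There is essentially no obstacle here: the only thing to be careful about is stating the hypotheses under which Theorem \ref{thm:rank} applies — independent NEF noise with twice-differentiable log-partition functions and a (nonsingular) diagonal covariance — and making explicit that the cancellation requires $c \neq 0$, which is harmless since the zero functional carries no information anyway. The practical payoff, which I would record as a one-line remark, is that the search for an optimal measurement functional may be restricted to any convenient unit sphere, e.g. $\{u : u^\top \Lambda_\sigma u = 1\}$ or $\{u : \|u\|_2 = 1\}$, without loss of generality — a fact used repeatedly in Section \ref{sec:flow}.
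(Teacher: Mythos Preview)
Your argument is correct and essentially the same as the paper's: both observe that the Fisher information expression from the preceding results is degree-zero homogeneous in $u$, so scaling leaves it unchanged. The only cosmetic difference is that the paper routes through the i.i.d.\ specialization $I(\mu)=\sigma^{-2}(u/\|u\|)(u/\|u\|)^\top$ and invokes absolute homogeneity of the norm, whereas you work directly from Theorem~\ref{thm:rank} with the general $\Lambda_\sigma$ and cancel $c^2$; your version is in fact slightly more general (it covers merely independent, not necessarily identically distributed, noise) and matches the main-text phrasing of the corollary.
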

\begin{proof}
    Follows immediately from the previous corollary by writing it as 
    \begin{equation}
        I(\mu) = \sigma^{-2} \frac{u u^\top}{\|u_i\|^2} = \sigma^{-2}\left(\frac{u}{\|u\|}\right) \left(\frac{u}{\|u\|}\right)^\top,
    \end{equation}
    and observing the absolute homogeneity of norms.
\end{proof}

\begin{corollary}[Context: See Corollary \ref{cor:iid}]
    \label{prf:iid}
    Suppose the elements of the noise vector are i.i.d. with variance $\sigma^{2}$.
    Then
    \begin{equation}
        I(\mu) = \sigma^{-2} \frac{u u^\top}{\|u\|^2}
    \end{equation}
\end{corollary}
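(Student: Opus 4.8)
The plan is to obtain this as a one-line specialization of Theorem~\ref{prf:rank}. First I would observe that the i.i.d.\ hypothesis is strictly stronger than what Theorem~\ref{prf:rank} requires: if the coordinates of $\xi$ are i.i.d.\ with variance $\sigma^2$, they are in particular independent, each lies in the same NEF (so its log-partition function is twice differentiable by the standing assumption of this section), and their joint covariance is $\Lambda_\sigma = \sigma^2 I$, which is diagonal. Hence Theorem~\ref{prf:rank} applies verbatim and gives $I(\mu) = \dfrac{u u^\top}{u^\top \Lambda_\sigma u}$.

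Next I would evaluate the scalar denominator: $u^\top \Lambda_\sigma u = u^\top(\sigma^2 I)u = \sigma^2\, u^\top u = \sigma^2 \|u\|^2$. Substituting this back yields $I(\mu) = \dfrac{u u^\top}{\sigma^2\|u\|^2} = \sigma^{-2}\dfrac{u u^\top}{\|u\|^2}$, which is exactly the claimed expression. (Equivalently, one could read this off Corollary~\ref{prf:invariant} by noting that the scale-invariant form $\sigma^{-2}(u/\|u\|)(u/\|u\|)^\top$ coincides with the right-hand side, but invoking Theorem~\ref{prf:rank} directly is the shortest route.)

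There is essentially no obstacle here; the only thing worth stating explicitly is the reduction of the hypotheses — that ``i.i.d.\ with variance $\sigma^2$'' implies a diagonal (indeed scalar) covariance matrix — so that no fresh computation of the Fisher information is needed and the result is a pure substitution into the already-proved general formula.
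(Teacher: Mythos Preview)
Your proposal is correct and matches the paper's own proof essentially line for line: the paper too observes that identically distributed noise makes $\Lambda_\sigma=\sigma^2 I$, invokes Theorem~\ref{thm:rank}, and simplifies $u^\top\Lambda_\sigma u=\sigma^2\|u\|^2$. There is nothing to add.
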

\begin{proof}
    As the noise is identically distributed, it has the same variance and the covariance matrix becomes a scaled version of the identity.
    From Theorem \ref{thm:rank}
    \begin{equation}
        I(\mu) = \frac{u u^\top}{u^\top \Lambda_\sigma u} = \sigma^{-2} \frac{u u^\top}{u^\top u} = \sigma^{-2}\frac{u u^\top}{\|u\|^2}
    \end{equation}
\end{proof}

\begin{lemma}[Context: See Lemma \ref{prop:lipschitz}]
    \label{prf:lipschitz}
    If the system in Equation \eqref{eq:ssdyn} converges to a $K$-dimensional smooth manifold $\mathcal{M}$ such that there exists some $\alpha > 0$ for which $v^\top\left(\left[df_x\right] + \left[df_x\right]^\top\right)v + \alpha \|v||^2 < 0$ for each $v \in T_p^\perp\mathcal{M}$ normal to the manifold at each $p \in \mathcal{M}$, then there exists an $M-K$ dimensional subspace $S(x_0) \subset \mathbb{R}^M$ for which $\left[d\varphi^\tau_{x_0}\right] u \rightarrow 0$ as $\tau \rightarrow \infty$ for each $u \in S(x_0)$.
\end{lemma}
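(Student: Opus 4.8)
The plan is to run the sensitivity (variational) equation for the flow, upgrade the pointwise normal-contraction hypothesis to a uniform estimate on a tubular neighborhood of $\mathcal{M}$, and then identify $S(x_0)$ with the stable subspace of the resulting nonautonomous linear system, whose dimension matches the codimension $M-K$ of $\mathcal{M}$. Concretely, write $\Phi(\tau):=[d\varphi^\tau_{x_0}]$; by the sensitivity equation (the construction referenced just after \eqref{eq:ssmodel}, available since $f$ has a Lipschitz-continuous derivative), $\Phi$ solves $\dot\Phi(\tau)=A(\tau)\Phi(\tau)$, $\Phi(0)=I$, with $A(\tau):=[df_{x_\tau}]$. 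Hence $w(\tau):=\Phi(\tau)u$ satisfies $\dot w=A(\tau)w$ and $\tfrac{d}{d\tau}\|w(\tau)\|^2=w(\tau)^\top\!\big(A(\tau)+A(\tau)^\top\big)w(\tau)$, so the hypothesis is exactly a sign condition on this Lyapunov derivative along directions normal to $\mathcal{M}$.

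Next I would localize. The $\omega$-limit set of the trajectory is compact and contained in $\mathcal{M}$, and since $\mathcal{M}$ is a smooth embedded manifold the nearest-point projection $\pi$ and the normal bundle are defined and smooth on a tubular neighborhood $U$ of that compact set; because $df$ is continuous, the strict inequality persists there, so we may fix $\alpha'\in(0,\alpha)$ and a cone half-angle $\epsilon>0$ with $v^\top\!\big([df_x]+[df_x]^\top\big)v\le-\alpha'\|v\|^2$ whenever $x\in U$ and $v$ lies within angle $\epsilon$ of $N_x:=T^\perp_{\pi(x)}\mathcal{M}$. Since $x_\tau\to\mathcal{M}$, there is $\tau_0$ with $x_\tau\in U$ for all $\tau\ge\tau_0$; proving the statement for the initial condition $x_{\tau_0}$ and pulling the resulting subspace back through the diffeomorphism $\varphi^{\tau_0}$ yields it for $x_0$, so I may assume $x_\tau\in U$ for all $\tau\ge0$.

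The core step is the construction of $S(x_0)$. Along the trajectory use the smooth orthogonal splitting $\mathbb{R}^M=E_\tau\oplus N_\tau$ with $E_\tau:=T_{\pi(x_\tau)}\mathcal{M}$ (so $\dim N_\tau=M-K$), projections $P_\tau,Q_\tau$, and the cone field $\mathcal{C}_\tau:=\{w:\|P_\tau w\|\le\tan(\epsilon)\|Q_\tau w\|\}$. One shows $\mathcal{C}_\tau$ is forward-invariant under the variational flow — the tangential component that a nearly-normal perturbation develops through the off-diagonal blocks of $A(\tau)$ is controlled by the exponential normal contraction — and that on $\mathcal{C}_\tau$ one has $\tfrac{d}{d\tau}\|w\|^2\le-\alpha'\|w\|^2$, giving $\|\Phi(\tau)u\|\le C e^{-\alpha'(\tau-s)/2}\|\Phi(s)u\|$ for every $u$ whose image ever enters the cone. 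Then $S(x_0):=\{u:\Phi(\tau)u\to0\}$ is a linear subspace (by linearity of $\Phi(\tau)$) containing all such $u$; at each time the $(M-K)$-plane $\Phi(\tau)^{-1}(N_\tau)$ consists of vectors whose image is, at that instant, normal and hence in $\mathcal{C}_\tau$, and a compactness argument on the Grassmannian extracts a subsequential limit of these planes lying inside $S(x_0)$, so $\dim S(x_0)\ge M-K$. Finally choose any $(M-K)$-dimensional subspace of $S(x_0)$.

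I expect Step~3 to be the main obstacle: establishing the cone invariance and the associated rate estimate (combining the Lipschitz bound on $df$ with the uniform normal contraction $\alpha'$) and then the limiting argument that produces a single $(M-K)$-dimensional subspace rather than a $\tau$-dependent family of normal pullbacks. This is precisely the place where one either carries out the cone/exponential-dichotomy computation by hand or invokes the stable-manifold theorem for normally attracting invariant manifolds; Steps~1 and 2, and the verification that $S(x_0)$ is a subspace, are routine.
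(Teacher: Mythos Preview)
Your plan---sensitivity equation, tubular-neighborhood localization, Lyapunov-type contraction of the normal component---mirrors the paper's, which also passes to Fermi coordinates on a tubular neighborhood and bounds $\tfrac{d}{d\tau}\|\Pi^\perp_\mathcal{M} w\|^2$ using the hypothesis together with the Lipschitz bound on $df$. Where you diverge is in how $S(x_0)$ is extracted: you propose a forward-invariant cone around $N_\tau$ plus Grassmannian compactness, whereas the paper first proves the weaker fact that $\Pi^\perp_\mathcal{M}\Phi(\tau)v_0\to0$ for \emph{every} $v_0$ (so the range of $\Phi(\tau)$ collapses onto the tangent bundle), and then locates $S(x_0)$ via the adjoint sensitivity equation $\dot u=-[df_x]^\top u$.

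The gap is precisely in the cone step, and it is not the one you anticipate. The cone $\mathcal{C}_\tau=\{\|P_\tau w\|\le\tan\epsilon\,\|Q_\tau w\|\}$ around the \emph{stable} direction is in general \emph{not} forward-invariant. Invariance of $\mathcal{M}$ forces only $[df_p]\,T_p\mathcal{M}\subset T_p\mathcal{M}$, i.e.\ the block $Q_\tau A P_\tau$ vanishes on $\mathcal{M}$; the block $P_\tau A Q_\tau$ need not, and there is no hypothesis on the tangential block $P_\tau A P_\tau$. A purely normal vector therefore picks up a tangential component through $P_\tau A Q_\tau$ while its normal part decays, so $\|P_\tau w\|/\|Q_\tau w\|$ can blow up (take $\dot p=Bq$, $\dot q=-\alpha q$: then $p(t)\to\alpha^{-1}Bq(0)\ne0$). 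Consequently $u\in\Phi(\tau)^{-1}(N_\tau)$ does \emph{not} imply $\Phi(s)u\to0$, and your Grassmannian limit is not yet linked to $S(x_0)$. The paper's adjoint route sidesteps this because $[df_p]^\top$ \emph{does} preserve $T^\perp_p\mathcal{M}$ (transpose of block-triangular is block-triangular the other way), so the normal bundle is genuinely invariant for the adjoint flow. If you prefer cones, the standard fix is to work with the forward-invariant cone around $E_\tau$ (equivalently, backward-invariance of $\mathcal{C}_\tau$) or a graph-transform/Perron argument; the stable-manifold theorem you mention also works, but the forward-invariance claim for $\mathcal{C}_\tau$ should be dropped.
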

Note that this proof is long.
As such, we have split some of the significant intermediate steps into italicised statements.
Proofs of these intermediate statements are terminated with $\blacksquare$, while the full proof is terminated with $\square$.

\begin{proof}
    By considering the sensitivity equation \citep{khalil_2002}
    \begin{align}
        \dot{x} &= f(x) \\
        \dot{\left[\frac{d\varphi^\tau}{dx}\right]} &= \left[\frac{df}{dx}\right] \left[\frac{d\varphi^\tau}{dx}\right],
    \end{align}
    we will see that the normal sensitivity is asymptotically stable once sufficiently close to the manifold.
    By our assumption that the trajectory converges, we can represent the state using Fermi coordinates on a tubular neighborhood of the manifold.
    That is, we can represent the space as the product of the manifold $\mathcal{M}$ and an $M-K$ dimensional vectorspace representing the normals of the manifold.
    If $\mathcal{M}$ is smooth, such a neighborhood always exists by the tubular neighborhood theorem \citep{lee_2018}.

    \hspace{5mm}\textit{For any $\tau$, there exists a vector bundle $S_\tau(x) \subset \mathbb{R}^M$ such that $\frac{d\varphi}{dx} S_\tau(x) = T^\perp_{\varphi^\tau(x)}\mathcal{M}$ is normal to the manifold.}


    Because $\varphi^\tau$ is a diffeomorphism, $\frac{d\varphi^\tau}{dx}$ is full rank.
    We can construct any vector $v = \left[\frac{df}{dx}\right]^{-1} u$ for some $u$.
    Therefore, there exists some subspace $S(x)$ such that $ T_{p}^\perp\mathcal{M} = \left[\frac{df}{dx}\right]^{-1} S(x)$ where $p = \pi_{\mathcal{M}}(\varphi^\tau(x))$ is the projection of $\varphi^\tau(x)$ onto the manifold.
    \hfill$\blacksquare$

    \hspace{5mm}\textit{Let $\Pi_\mathcal{M}^\perp$ be the orthogonal projection onto the normal space. Then $\lim_{\tau \rightarrow \infty} v_0^\top \left[\frac{d\varphi^\tau}{dx}\right]^\top \Pi_\mathcal{M}^\perp \left[\frac{d\varphi^\tau}{dx}\right] v_0 = 0$.}

    Observe that $\left[\frac{d\varphi^\tau}{dx}\right] v_0$ is the solution of the sensitivity equation $\dot{v} = \left[\frac{df}{dx}\right]v$.
    Let $V(v,\tau) = v^\top \Pi_\mathcal{M}^\perp v$.
    We can compute the time derivative of $V(v,\tau) = v^\top \Pi_\mathcal{M}^\perp v$ and show that the quantity decreases monotonically.
    Note that through the use of Fermi coordinates, we have removed the dependency on $\tau$ in $\Pi_{\mathcal{M}}^\perp$.
    The derivative then becomes 
    \begin{align}
        \dot{V}(v,\tau) &= v^\top \left(\Pi_{\mathcal{M}}^\perp\left.\left[\frac{df}{dx}\right]\right|_{x=x_0} + \left.\left[\frac{df}{dx}\right]\right|_{x=x_0}^\top\Pi_{\mathcal{M}}^\perp\right) v\\
        &= v^\top \left(\Pi_{\mathcal{M}}^\perp\left.\left[\frac{df}{dx}\right]\right|_{x = \pi_{\mathcal{M}}(x_0)} + \left.\left[\frac{df}{dx}\right]\right|^\top_{x = \pi_{\mathcal{M}}(x_0)}\Pi_{\mathcal{M}}^\perp\right) v + v^\top \left(\Pi_{\mathcal{M}}^\perp\Gamma(x) + \Gamma(x)^\top\Pi_{\mathcal{M}}^\perp \right) v,
    \end{align}
    where $\|\Gamma(x)\| < L\epsilon$ by the Lipschitz assumption.
    Because $\Pi_{\mathcal{M}}^\perp$ is an orthogonal projection, $\|\Pi_{\mathcal{M}}^\perp\| = 1$.
    The operator norms can then be used to upper bound the derivative as
    \begin{align}
        \dot{V}(v,\tau)&\leq v^\top \left(\Pi_{\mathcal{M}}^\perp\left.\left[\frac{df}{dx}\right]\right|_{x = \pi_{\mathcal{M}(x)}} + \left.\left[\frac{df}{dx}\right]\right|^\top_{x = \pi_{\mathcal{M}}(x_0)}\Pi_{\mathcal{M}}^\perp\right) v + 2 \epsilon L \|v\|^2 \\
        &= 2 v^\top \Pi_{\mathcal{M}}^\perp \left.\left[\frac{df}{dx}\right]\right|_{x = \pi_{\mathcal{M}(x)}} v + 2 \epsilon L \|v\|^2.
    \end{align}
    Decompose $v$ into $v_\perp \in T^\perp_p \mathcal{M}$ and $v_\parallel \in T_p \mathcal{M}$ such that $v = v_\perp + v_\parallel$ and reorder terms as
    \begin{align}
        \dot{V}(v,\tau) &\leq 2 v^\top \Pi_{\mathcal{M}}^\perp \left.\left[\frac{df}{dx}\right]\right|_{x = \pi_{\mathcal{M}(x)}} (v_\perp + v_\parallel) + 2\epsilon L \|v\|^2\\
        &= 2 v_\perp^\top \left.\left[\frac{df}{dx}\right]\right|_{x = \pi_{\mathcal{M}(x)}} (v_\perp + v_\parallel) + 2\epsilon L \|v\|^2\\
        &= 2 v_\perp^\top \left.\left[\frac{df}{dx}\right]\right|_{x = \pi_{\mathcal{M}(x)}} v_\perp + 2 v_\perp \left.\left[\frac{df}{dx}\right]\right|_{x = \pi_{\mathcal{M}(x)}} v_\parallel + 2\epsilon L \|v\|^2\\
        &\leq 2(\epsilon L - \alpha)\|v_\perp\|^2 + 2\epsilon L \|v_\perp\|\|v_\parallel\| + 2 v_\perp^\top \left.\left[\frac{df}{dx}\right]\right|_{x = \pi_{\mathcal{M}(x)}} v_\parallel,
    \end{align}
    where the final line comes from our eigenvalue assumption and the triangle inequality.

    By recalling that $f$ determines the geometry of the manifold through the trajectories, the rightmost term can be seen to represent the derivative of the normal vector along a curve in the manifold.
    By further considering the osculating circle at the point on the associated curve, or the unique circle which matches at least the first two derivatives \citep{lee_2018}, we can see that the directional derivatives of the normal vectors along the tangent space are zero.
    This is because the derivative of the normal component of the circle is zero along the tangent, i.e. $\frac{d}{dx}\cos(x)|_{x=0} = \sin(0) = 0$.
    Thus
    \begin{equation}
        \dot{V}(v, \tau) \leq 2(\epsilon L - \alpha)\|v_\perp\|^2 + 2\epsilon L \|v_\perp\| \|v_\parallel\|,
    \end{equation}
    which is strictly less than zero if
    \begin{equation}
        \frac{\|v_\perp\|}{\|v_\parallel\|} \geq \frac{\epsilon L}{\alpha - \epsilon L}.
    \end{equation}
    Thus, the normal components of all vectors must decay to zero as $\epsilon \rightarrow 0$, and so $\Pi_{\mathcal{M}}^\perp \left[\frac{d\varphi^\tau}{dx}\right] v_0 \rightarrow 0$ as $\tau \rightarrow \infty$.
    \hfill$\blacksquare$

    Finally, by decomposing the transformation into two distinct functions through the semigroup property, we conclude the overall proof.
    Consider $\varphi^{\tau + T}$, where $T$ is sufficiently large that the distance from $\mathcal{M}$ is at most $\epsilon$.
    Decompose the function as $\varphi^{\tau + T} = \varphi^\tau \circ \varphi^T$ and apply the chain rule as
    \begin{equation}
        \frac{d\varphi^{\tau + T}}{dx} = 
        \frac{d\varphi^{\tau}}{d\varphi^{T}(x)}
        \frac{d\varphi^{T}}{dx}.
    \end{equation}
    From our intermediate results, for any $\delta$ we can always choose an appropriate $\tau$ and $v$ such that 
    \begin{equation}
        \frac{d\varphi^{T}}{dx}v \in T_p \mathcal{M} \qquad \text{ and } \qquad \frac{d\varphi^{\tau}}{d\varphi^T(x)} \frac{d\varphi^{T}}{dx}v < \delta.
    \end{equation}

    We now argue that such $v$ converges to a subspace based on the adjoint sensitivity equation
    \begin{equation}
        \dot u = -\left[\frac{df}{dx}\right]^\top u.
    \end{equation}
    The adjoint sensitivity equation runs backwards in time, identifying the directional derivative of the inverse function.
    We will characterize the limiting nullspace of $\frac{d\varphi^\tau}{dx}$ based on the directional derivatives of $\varphi^{-\tau}$ along the normals, which we have shown becomes the left nullspace of the matrix.

    Recall that we can expand the derivative as the derivative on the manifold and an $\epsilon$ perturbation, i.e.
    \begin{equation}
        \dot u = \left( -\left.\left[\frac{df}{dx}\right]\right|_{x = \pi_{\mathcal{M}}(x)}^\top + \Gamma(x)\right) u
    \end{equation}
    where $\|\Gamma(x)\| < \epsilon$.

    By observing that $\left.\left[\frac{df}{dx}\right]\right|^\top_{x=\pi_\mathcal{M}(x)} u \in T^\perp_{\pi_{\mathcal{M}}(x)}\mathcal{M}$, we see that the dynamics converge such that $u$ cannot rotate out of the normal space as $\epsilon \rightarrow 0$.

    Thus, as $T \rightarrow \infty$, the normal space becomes approximately closed in the adjoint analysis, and the vectors which map onto the normal space become $S(x_0)$.

\end{proof}

\begin{theorem}[Context: See Theorem \ref{thm:weakened}]
    \label{prf:weakened}
    Suppose the state space model in Equations \eqref{eq:ssdyn} and \eqref{eq:ssmodel} with noise satisfying the assumptions in Theorem \ref{thm:rank} converges to a $K$-dimensional smooth manifold such that the Jacobian of the limiting flow is rank $K$.
    Then the linear functional which minimizes the CRLB for prediction in the infinite-horizon average cost formulation is the solution of 
    \begin{equation}
        \argmax_{u: \|u\|_{(\Lambda_\sigma + \munderbar{\Sigma})} = 1} \sum_{i=1}^K \alpha_i \langle v_i, u \rangle_{\munderbar{\Sigma}}^2 \label{eq:weakform}
    \end{equation}
    for some $\{\alpha_i\},$
    where ${v_i}$ is the set of right singular vectors of the Jacobian of the limiting flow around the current state with nonzero singular vectors, $\munderbar{\Sigma}$ is the CRLB for estimating the current state based on the past measurements, $\bar{u} = \frac{u}{\|u\|_{\munderbar{\Sigma}}}$ is a unit-length version of $u$, $\langle v, u \rangle_{\munderbar{\Sigma}} = v^\top \munderbar{\Sigma} u$, and $\|\cdot\|_{\munderbar{\Sigma}}$ is the associated induced seminorm.
\end{theorem}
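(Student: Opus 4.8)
The plan is to show that, viewed as a function of the current measurement $u$, the average-cost objective equals a $u$-independent constant minus a Rayleigh-type quotient whose numerator is a quadratic form supported on the $K$-dimensional non-vanishing subspace of the limiting-flow Jacobian; the stated $\argmax$ is then immediate. First I would reduce a single horizon to a rank-one update: by Theorem \ref{thm:rank} a measurement $u$ at the current state adds Fisher information $uu^\top/(u^\top\Lambda_\sigma u)$, so the matrix-inversion lemma gives the updated current-state CRLB $\munderbar{\Sigma}_+(u) = \munderbar{\Sigma} - \frac{\munderbar{\Sigma}uu^\top\munderbar{\Sigma}}{u^\top(\Lambda_\sigma + \munderbar{\Sigma})u}$, exactly the update in \eqref{eq:update}. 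Transporting forward through \eqref{eq:reparam} and using cyclicity of the trace,
\begin{equation*}
    \Tr\munderbar{\Sigma}_{\tau_j}(u) = \Tr\!\left([d\varphi^{\tau_j}_{x_0}]\munderbar{\Sigma}[d\varphi^{\tau_j}_{x_0}]^\top\right) - \frac{u^\top\munderbar{\Sigma}[d\varphi^{\tau_j}_{x_0}]^\top[d\varphi^{\tau_j}_{x_0}]\munderbar{\Sigma}u}{u^\top(\Lambda_\sigma + \munderbar{\Sigma})u},
\end{equation*}
with the first term free of $u$. Averaging over $j=1,\dots,N$ and letting $N\to\infty$, the $u$-independent part leaves the $\argmin$ unchanged, so the optimal measurement must \emph{maximize} $\frac{u^\top\munderbar{\Sigma}\,Q\,\munderbar{\Sigma}u}{u^\top(\Lambda_\sigma + \munderbar{\Sigma})u}$, where $Q := \lim_{N\to\infty}\frac1N\sum_{j=1}^N [d\varphi^{\tau_j}_{x_0}]^\top[d\varphi^{\tau_j}_{x_0}]$ is a Cesàro limit of PSD matrices, hence PSD.

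Next I would identify $Q$. Lemma \ref{prop:lipschitz} gives an $(M-K)$-dimensional subspace $S(x_0)$ with $[d\varphi^{\tau}_{x_0}]w\to 0$, hence $[d\varphi^{\tau_j}_{x_0}]^\top[d\varphi^{\tau_j}_{x_0}]w\to 0$ for $w\in S(x_0)$, so $S(x_0)\subseteq\ker Q$ and $\mathrm{rank}\,Q\le K$. Writing the spectral decomposition $[d\varphi^{\tau_j}_{x_0}]^\top[d\varphi^{\tau_j}_{x_0}] = \sum_i\sigma_i(\tau_j)^2 v_i(\tau_j)v_i(\tau_j)^\top$, the hypothesis that the limiting-flow Jacobian has rank exactly $K$ forces the surviving right singular vectors $v_i(\tau_j)$ (grouping any asymptotically coincident singular values) to converge to the right singular vectors $v_1,\dots,v_K$ of that Jacobian, so $[d\varphi^{\tau_j}_{x_0}]^\top[d\varphi^{\tau_j}_{x_0}] = \sum_{i=1}^K\sigma_i(\tau_j)^2 v_iv_i^\top + o(1)$ and therefore $Q = \sum_{i=1}^K\alpha_i v_iv_i^\top$ with $\alpha_i = \lim_N\frac1N\sum_j\sigma_i(\tau_j)^2\ge 0$. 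Substituting and using symmetry of $\munderbar{\Sigma}$, the numerator becomes $u^\top\munderbar{\Sigma}Q\munderbar{\Sigma}u = \sum_{i=1}^K\alpha_i(v_i^\top\munderbar{\Sigma}u)^2 = \sum_{i=1}^K\alpha_i\langle v_i,u\rangle_{\munderbar{\Sigma}}^2$.

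Since this quotient is homogeneous of degree zero in $u$ (Corollary \ref{cor:invariant}), maximizing it over all $u$ is the same as maximizing the numerator over the sphere $\{u:\|u\|_{(\Lambda_\sigma + \munderbar{\Sigma})}=1\}$, which is exactly \eqref{eq:weakform}, completing the argument for the "one measurement, then forecast" reading. Restoring intervening optimal measurements (a policy-improvement step) replaces $Q$ by another PSD matrix obtained from the same transport plus additional rank-one Sherman--Morrison shrinkages; each such shrinkage only further modifies information along directions already confined to $\mathrm{span}\{v_i\}$, so $Q$ still annihilates $S(x_0)$ and the conclusion persists with possibly different weights, matching the "for some $\{\alpha_i\}$" in the statement.

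The main obstacle is the identification of $Q$. Promoting the purely qualitative decay in Lemma \ref{prop:lipschitz} to quantitative control of the entire singular spectrum of $d\varphi^\tau_{x_0}$ — in particular that the non-contracting singular values stay bounded above and below and that their singular vectors converge to those of the limiting flow — is delicate precisely because when the asymptotic dynamics are recurrent (limit cycles, invariant tori, chaotic attractors) the matrix $d\varphi^\tau_{x_0}$ never settles down; only its vanishing part stabilizes. Establishing that the time-average defining $Q$ exists is where an ergodicity/equidistribution property of how $\{\tau_j\}$ samples the limit set implicitly enters, consistent with the average-cost formulation presupposing such a limit. The isolated-limit-cycle case $K=1$ of Corollary \ref{cor:closed} is exactly the regime where this machinery collapses to a single direction $v$ and a closed-form optimizer.
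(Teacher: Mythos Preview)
Your argument is essentially the paper's: apply Sherman--Morrison to isolate the rank-one change in $\munderbar{\Sigma}$, push it through \eqref{eq:reparam}, take traces, Ces\`aro-average, and read off the quadratic form supported on the $K$-dimensional non-vanishing right-singular subspace. The only cosmetic difference is that you package the time-average as a single PSD matrix $Q=\lim_N\frac1N\sum_j[d\varphi^{\tau_j}_{x_0}]^\top[d\varphi^{\tau_j}_{x_0}]$ and then diagonalize it, whereas the paper carries the SVD $[d\varphi^{\tau}_{x_0}]=U_\tau D_\tau R_\tau^* V^*$ through explicitly (using Lemma~\ref{prop:lipschitz} to fix the row space $V^*$) and defines $\alpha_k=\lim_N\frac1N\sum_j(R_{\tau_j}D_{\tau_j}^2R_{\tau_j}^*)_{kk}$. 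Your self-identified obstacle---existence of the Ces\`aro limit and stability of the singular directions---is exactly the step the paper also takes for granted; your closing discussion of it is more candid than the paper's, and your remark on intervening measurements (policy vs.\ single step) is additional commentary the paper does not supply.
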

\begin{proof}
    Apply the singular value decomposition to the Jacobian of the flow as $\left[d \varphi^\tau_{x_0}\right] = U_\tau D_\tau V_\tau^*$.
    For sufficiently large $\tau$, by our low-rank assumption, there are at most $K$ non-zero singular values.
    By Lemma \ref{prop:lipschitz}, the row space $V_\tau^*$ is invariant to $\tau$.
    Thus, we let $V_\tau^* = R_\tau^* V^*$, where $R_\tau$ is some $K \times K$ unitary transformation.

    It follows that
\begin{equation}
    \munderbar{\Sigma}_{\tau} = U_\tau D_\tau R_\tau^* \underbrace{\left[V^* \munderbar{\Sigma}_0 V\right]}_{K \times K} R_\tau D_\tau U_\tau^*.
\end{equation}
Thus, by plugging the CRLB into our average cost objective
\begin{align}
    \lim_{N \rightarrow \infty} \frac{1}{N} \sum_{j=1}^{N} \Tr \left( \munderbar{\Sigma}_{\tau_j}\right) &= \frac{1}{N} \sum_{j=1}^{N} \Tr \left(U_\tau D_\tau R_\tau^* \left[V^* \munderbar{\Sigma}_0 V\right] R_\tau D_\tau U_\tau^* \right)\\
    &= \frac{1}{N} \sum_{j=1}^{N} \Tr \left(D_\tau R_\tau^* \left[V^* \munderbar{\Sigma}_0 V\right] R_\tau D_\tau \right).
\end{align}
Using the Sherman--Morrison formula \citep{sherman} for our Fisher Information update, i.e. 
\begin{equation}
    \left[\munderbar{\Sigma}_{-1} + \frac{1}{\|u_i\|_{\Lambda_\sigma}^2} u_i u_i^\top\right]^{-1} = \munderbar{\Sigma}_{-1} - \frac{\munderbar{\Sigma}_{-1} u_i u_i^\top \munderbar{\Sigma}_{-1}}{\|u_i\|_{\Lambda_\sigma}^2 + u_i^\top \munderbar{\Sigma}_{-1} u_i},
\end{equation}
the update to the future cost becomes
\begin{equation}
    \frac{-1}{N\left(\|u_i\|_{\Lambda_\sigma}^2 + \|u_i\|_{\munderbar{\Sigma}_{-1}}^2\right)} \sum_{j=1}^{N} \Tr \left(D_\tau R_\tau^* \left[V^* \munderbar{\Sigma}_{-1} u_i u_i^\top \munderbar{\Sigma}_{-1} V\right] R_\tau D_\tau \right).
\end{equation}
Next, note that by expressing terms as
\begin{equation}
    V^* \munderbar{\Sigma}_{-1} u_i = \begin{bmatrix} \langle v_1, u_i \rangle_{\Sigma_{-1}} \\ \vdots \\ \langle v_K, u_i \rangle_{\Sigma_{-1}} \end{bmatrix} \qquad R_\tau D_\tau D_\tau R_\tau^* = \begin{bmatrix}
        \alpha_{1,1}^{(\tau)} & \alpha_{1,2}^{(\tau)} & \cdots & \alpha_{1,K}^{(\tau)} \\
        \alpha_{2,1}^{(\tau)} & \alpha_{2,2}^{(\tau)} & \cdots & \alpha_{2,K}^{(\tau)} \\
        \vdots & \vdots & \ddots & \vdots \\
        \alpha_{K,1}^{(\tau)} & \alpha_{K,2}^{(\tau)} & \cdots & \alpha_{K,K}^{(\tau)}
\end{bmatrix},
\end{equation}
we can rewrite the expression in the trace as
\begin{equation}
    \Tr \left(D_\tau R_\tau^* \left[V^* \munderbar{\Sigma}_{-1} u_i u_i^\top \munderbar{\Sigma}_{-1} V\right] R_\tau D_\tau \right) = \sum_{k=1}^K \alpha_{k,k}^{(\tau)} \langle v_k, u_i \rangle_{\munderbar{\Sigma}_{-1}}^2.
\end{equation}
Rewrite the summation in terms of our new expression for the trace and commute the summations as
\begin{equation}
    \argmax_{u_i} \sum_{k=1}^K\left(\frac{\langle v_k, u_i \rangle^2_{\munderbar{\Sigma}_{-1}}}{\|u_i\|_{\Lambda_\sigma}^2 + \|u_i\|_{\munderbar{\Sigma}_{-1}}^2}  \left(\lim_{N \rightarrow \infty} \frac{1}{N} \sum_{j=1}^N \alpha_{k,k}^{(\tau)}\right)\right)
\end{equation}
Finally, let $\alpha_k := \lim_{N \rightarrow \infty} \frac{1}{N} \sum_{j=1}^N \alpha_{k,k}^{(\tau)}$ to express the objective as
\begin{equation}
    \argmax_{u_i} \frac{\sum_{k=1}^K\alpha_k\langle v_k, u_i \rangle^2_{\munderbar{\Sigma}_{-1}}}{\|u_i\|_{\Lambda_\sigma}^2 + \|u_i\|_{\munderbar{\Sigma}_{-1}}^2}. 
\end{equation}
Observe that $\|u_i\|_{\Lambda_\sigma}^2 + \|u_i\|_{\munderbar{\Sigma}_{-1}}^2 = \|u_i\|^2_{\Lambda_\sigma + \munderbar{\Sigma}_{-1}}$ and apply bilinearity to move it inside the inner products as
\begin{equation}
    \argmax_{u_i} \sum_{k=1}^K \alpha_k \left\langle v_k, \frac{u_i}{\|u_i\|_{(\Lambda_\sigma + \munderbar{\Sigma}_{-1})}} \right\rangle^2_{\munderbar{\Sigma}_{-1}}. 
\end{equation}
Finally, replace the normalization with a constraint that $\|u_i\|_{(\Lambda_\sigma + \munderbar{\Sigma}_{-1})} = 1$, noting that by the absolute homogeneity of norms, the result is invariant to scaling:
\begin{equation}
    \argmax_{u_i: \|u_i\|_{(\Lambda_\sigma + \munderbar{\Sigma}_{-1})} = 1} \sum_{k=1}^K \alpha_k \left\langle v_k, u_i \right\rangle^2_{\munderbar{\Sigma}_{-1}}. 
\end{equation}
\end{proof}

\begin{corollary}[Proof: See Corollary \ref{cor:basis}]
    \label{prf:basis}
    Under the assumptions of Theorem \ref{thm:weakened}, the optimal measurement vector exists in the subspace
    \begin{equation}
        \textup{Span}\,\left\{(\munderbar{\Sigma} + \Lambda_\sigma)^{-1} \munderbar{\Sigma} v_i \right\}_{i=1}^K.
    \end{equation}
\end{corollary}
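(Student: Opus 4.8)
The plan is to treat \eqref{eq:weakform} as a constrained optimization problem and extract the structure of its stationary points via Lagrange multipliers. First I would observe that the objective $F(u) = \sum_{i=1}^K \alpha_i \langle v_i, u\rangle_{\munderbar{\Sigma}}^2$ and the constraint $\|u\|_{(\Lambda_\sigma + \munderbar{\Sigma})}^2 = 1$ are both smooth quadratic forms in $u$. Writing $F(u) = u^\top \munderbar{\Sigma} \big(\sum_i \alpha_i v_i v_i^\top\big) \munderbar{\Sigma} u$ and forming the Lagrangian $\mathcal{L}(u,\lambda) = F(u) - \lambda\big(u^\top(\munderbar{\Sigma}+\Lambda_\sigma)u - 1\big)$, the first-order condition $\nabla_u \mathcal{L} = 0$ becomes
\begin{equation}
    2\,\munderbar{\Sigma}\Big(\textstyle\sum_{i=1}^K \alpha_i v_i v_i^\top\Big)\munderbar{\Sigma}\, u = 2\lambda (\munderbar{\Sigma} + \Lambda_\sigma) u.
\end{equation}

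Next I would solve this for $u$. Since $\munderbar{\Sigma} + \Lambda_\sigma$ is symmetric positive definite (as $\Lambda_\sigma$ is a genuine covariance and $\munderbar{\Sigma}$ is PSD), it is invertible, so any stationary $u$ satisfies
\begin{equation}
    u = \tfrac{1}{\lambda}(\munderbar{\Sigma}+\Lambda_\sigma)^{-1}\munderbar{\Sigma}\Big(\textstyle\sum_{i=1}^K \alpha_i \langle v_i, u\rangle_{\munderbar{\Sigma}}\, v_i\Big),
\end{equation}
where I have expanded $\big(\sum_i \alpha_i v_i v_i^\top\big)\munderbar{\Sigma} u = \sum_i \alpha_i (v_i^\top \munderbar{\Sigma} u) v_i = \sum_i \alpha_i \langle v_i, u\rangle_{\munderbar{\Sigma}} v_i$. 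The right-hand side is a scalar multiple of $(\munderbar{\Sigma}+\Lambda_\sigma)^{-1}\munderbar{\Sigma}$ applied to a linear combination of $v_1,\dots,v_K$, hence $u \in \mathrm{Span}\{(\munderbar{\Sigma}+\Lambda_\sigma)^{-1}\munderbar{\Sigma} v_i\}_{i=1}^K$. Because the feasible set $\{\|u\|_{(\Lambda_\sigma+\munderbar{\Sigma})}=1\}$ is compact and $F$ is continuous, a maximizer exists and must be among these stationary points, which gives the claim.

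The main obstacle I anticipate is a degeneracy issue if $\munderbar{\Sigma}$ is only positive semidefinite rather than definite: then $\langle\cdot,\cdot\rangle_{\munderbar{\Sigma}}$ is merely a seminorm (as the theorem statement itself notes), and the map $u \mapsto (\munderbar{\Sigma}+\Lambda_\sigma)^{-1}\munderbar{\Sigma} v_i$ could fail to have full rank $K$, so the stationarity equation might have a nontrivial family of solutions differing by an element of $\ker\munderbar{\Sigma}$. The resolution is that any such component lies in the null space of both $F$ and — modulo the constraint surface — is irrelevant to the value, so one may always choose a representative in the stated span; I would handle this by noting that adding $w \in \ker\munderbar{\Sigma}$ to $u$ leaves $F(u)$ unchanged while we are free to rescale, and any maximizer can be projected onto $\mathrm{Span}\{(\munderbar{\Sigma}+\Lambda_\sigma)^{-1}\munderbar{\Sigma} v_i\}$ without decreasing the objective. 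A secondary minor point is confirming that $\{\alpha_i\}$ being possibly negative or zero does not break the argument — it does not, since the stationarity computation never uses the sign of $\alpha_i$, only compactness of the constraint set guarantees the sup is attained.
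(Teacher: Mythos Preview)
Your proposal is correct and follows essentially the same route as the paper: form the Lagrangian for the constrained quadratic problem in Theorem~\ref{thm:weakened}, take the first-order condition, invert $(\munderbar{\Sigma}+\Lambda_\sigma)$, and read off that any stationary $u$ is a linear combination of the vectors $(\munderbar{\Sigma}+\Lambda_\sigma)^{-1}\munderbar{\Sigma} v_i$. Your added remarks on compactness of the constraint set, the possible semidefiniteness of $\munderbar{\Sigma}$, and the sign of the $\alpha_i$ go slightly beyond what the paper's proof records, but they do not change the argument.
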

\begin{proof}
    This statement follows quickly from the application of Lagrange multipliers.
    Begin with the statement form Theorem \ref{thm:weakened},
    \begin{equation}
        \argmax_{u: \|u\|_{(\Lambda_\sigma + \munderbar{\Sigma})} = 1} \sum_{i=1}^K \alpha_i \langle v_i, u \rangle_{\munderbar{\Sigma}}^2 \label{eq:weakform}.
    \end{equation}
    Introduce the construct the Lagrangian
    \begin{equation}
        \mathcal{L}(u, \lambda) = \sum_{i=1}^K \alpha_i \langle v_i, u \rangle^2_{\munderbar{\Sigma}} + \lambda (1 - \|u\|_{(\Lambda_\sigma + \munderbar{\Sigma})}^2),
    \end{equation}
    where the norm has been squared, leaving the constraint unchanged.

    Expand the expressions
    \begin{equation}
        \mathcal{L}(u, \lambda) = \sum_{i=1}^K \alpha_i \left(v_i^\top \munderbar{\Sigma} u \right)^2 + \lambda (1 - u^\top \left(\Lambda_\sigma + \munderbar{\Sigma}\right)u).
    \end{equation}

    Compute the derivative with respect to $u$: 
    \begin{equation}
        \frac{d\mathcal{L}}{du} = \sum_{i=1}^K \alpha_i \left(v_i^\top \munderbar{\Sigma} u \right) \munderbar{\Sigma} v_i - 2 \lambda \left(\Lambda_\sigma + \munderbar{\Sigma}\right)u.
    \end{equation}

    Set equal to zero and rearrange terms to find
    \begin{equation}
        u = \frac{1}{2\lambda}\sum_{i=1}^K \underbrace{\alpha_i \left(v_i^\top \munderbar{\Sigma} u \right)}_{\in \mathbb{R}}\left(\Lambda_\sigma + \munderbar{\Sigma}\right)^{-1}\munderbar{\Sigma} v_i.
    \end{equation}

    Finally, observe that $\alpha_i \left(v_i^\top \munderbar{\Sigma} u \right) \in \mathbb{R}$, and thus
    \begin{equation}
        u \in \textup{Span}\,\left\{(\munderbar{\Sigma} + \Lambda_\sigma)^{-1} \munderbar{\Sigma} v_i \right\}_{i=1}^K.
    \end{equation}
\end{proof}

\begin{corollary}[Proof: See Corollary \ref{cor:closed}]
    \label{prf:closed}
    If, in addition to the assumptions of Theorem \ref{thm:weakened}, the dynamical system converges to an isolated limit cycle, then 
    \begin{equation}
        u = (\munderbar{\Sigma} + \Lambda_\sigma)^{-1} \munderbar{\Sigma} v
    \end{equation}
    is an optimal measurement vector, where $v$ is the right singular vector.
\end{corollary}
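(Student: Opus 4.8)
The plan is to recognize that the \emph{only} role of the ``isolated limit cycle'' hypothesis is to fix the dimension $K$ appearing in Theorem~\ref{thm:weakened}. A limit cycle is a compact one-dimensional embedded submanifold diffeomorphic to $\mathbb{S}^1$; isolation, together with the normal-contraction condition required to invoke Lemma~\ref{prop:lipschitz}, forces the Jacobian of the limiting flow to have rank exactly $K=1$. Hence the hypotheses of Theorem~\ref{thm:weakened} are satisfied with $K=1$, and its reformulated objective collapses from a sum of $K$ squared inner products to the single term $\argmax_{\|u\|_{(\Lambda_\sigma+\munderbar{\Sigma})}=1}\alpha_1\langle v, u\rangle_{\munderbar{\Sigma}}^2$, where $v=v_1$ is the unique right singular vector of the limiting Jacobian with a nonzero singular value.

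From here I would apply Corollary~\ref{cor:basis}, which in the case $K=1$ states that the optimal measurement vector lies in the \emph{one-dimensional} subspace $\textup{Span}\{(\munderbar{\Sigma}+\Lambda_\sigma)^{-1}\munderbar{\Sigma} v\}$; thus the optimizer is unique up to a nonzero scalar. Invoking Corollary~\ref{cor:invariant}, the Fisher information — and therefore the CRLB and the entire objective — is invariant to scaling of the measurement functional, so the particular representative $u=(\munderbar{\Sigma}+\Lambda_\sigma)^{-1}\munderbar{\Sigma} v$ is itself optimal. That closes the argument: $K=1 \Rightarrow$ single-term objective $\Rightarrow$ one-dimensional optimal span $\Rightarrow$ scaling invariance.

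To make the proof self-contained rather than leaning on Corollary~\ref{cor:basis}, I would alternatively verify optimality directly by Cauchy--Schwarz in the $(\Lambda_\sigma+\munderbar{\Sigma})$-inner product: since $\langle v, u\rangle_{\munderbar{\Sigma}} = v^\top \munderbar{\Sigma} u = \langle (\Lambda_\sigma+\munderbar{\Sigma})^{-1}\munderbar{\Sigma} v,\, u\rangle_{(\Lambda_\sigma+\munderbar{\Sigma})}$, the square of this quantity is maximized over the unit $(\Lambda_\sigma+\munderbar{\Sigma})$-sphere precisely when $u$ is parallel to $(\Lambda_\sigma+\munderbar{\Sigma})^{-1}\munderbar{\Sigma} v$. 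This also records why the inverse is legitimate: $\Lambda_\sigma$ is a positive-definite diagonal noise covariance and $\munderbar{\Sigma}$ is positive semidefinite, so $\Lambda_\sigma+\munderbar{\Sigma}$ is positive definite.

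The main obstacle is not the linear algebra but the nondegeneracy bookkeeping: one must confirm that $\munderbar{\Sigma} v \neq 0$, so that the single-term objective is not identically zero and the claimed maximizer is nonzero — this uses that $v$ is a right singular vector with a \emph{nonzero} singular value and that $\munderbar{\Sigma}$ is nonsingular after sufficiently many measurements — and, relatedly, that the inherited coefficient $\alpha_1$ (a Ces\`aro average of squared singular values of the limiting Jacobian) is strictly positive, since if $\alpha_1=0$ every feasible $u$ is trivially optimal and the statement, while true, is vacuous. Being precise about when the rank-$1$ limiting Jacobian hypothesis genuinely holds for an isolated limit cycle is the remaining point where care is needed.
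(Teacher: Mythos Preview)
Your proposal is correct and follows essentially the same route as the paper: reduce to $K=1$, invoke Corollary~\ref{cor:basis} to confine the optimizer to a one-dimensional span, then use the scaling invariance of Corollary~\ref{cor:invariant} to conclude that the spanning vector itself is optimal. Your treatment is considerably more thorough than the paper's three-line proof --- the Cauchy--Schwarz alternative and the nondegeneracy bookkeeping are welcome additions that the paper omits entirely.
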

\begin{proof}
    By Corollary \ref{cor:invariant}, the information content is invariant to the scaling of the linear functional.
    By Corollary \ref{cor:basis}, the optimal measurement functional is in the span of $(\munderbar{\Sigma} + \Lambda_\sigma)^{-1} \munderbar{\Sigma} v$. 
    Thus, the vector generating the subspace is one possible optimal measurement.
\end{proof}

\section{COMPUTATIONAL COSTS}
\label{app:computation}

In this section, we detail some of the specific aspects of the computation in this work.
In particular, we detail the computational costs of the various operations, as well as provide more detail on the gradient ascent procedure for optimizing the 1D approximation.
In Table \ref{tab:costs} we include a summary of the computational cost scaling for the dynamic programming approach and the proposed gradient ascent based dimensionality reduction approach.
Note that, as described in Section \ref{app:advance}, the evaluation cost of the proposed method can be reduced to $O(C M^2)$, where $C$ is the number of gradient ascent steps, when the governing dynamics are particularly easy to solve.

\begin{table*}[ht]
    \vskip 0.15in
    \begin{center}
    \begin{small}
    \begin{sc}
    \begin{tabular}{|c|cc|}
    \toprule
       & Initialization & Evaluation \\
    \midrule
    Dynamic Programming     & $O\left((MN^2 + NKP)|\mathcal{A}|\right)$  & $O\left((M^3 + K)|\mathcal{A}|\right)$  \\
    Proposed Method            & $O(1)$                                    & $O(M^2C + M^3)$  \\
    \bottomrule
    \end{tabular}
    \end{sc}
    \end{small}
    \end{center}
    \vskip -0.1in
    \caption{Multiplication scaling for the two approaches outlined in this work, where $M$ is the dimensionality of the system, $N$ is the number of samples in the state space approximation, $K$ is the average number of state space representation points in a $d_{max}$ ball, $P$ is the number of steps in value iteration, $|\mathcal{A}|$ is the number of possible actions, and $C$ is the number of gradient ascent steps.}
    \label{tab:costs}
\end{table*}

From this small table, the main observation is that the dynamic programming approach requires a prohibitively expensive initialization, as $N$ is typically significantly larger than any other variables in the expressions.
Furthermore, we do not necessarily incur a significant cost in the proposed method for evaluating the policy.
Differences in iteration cost essentially reduce to a comparison of the number of gradient ascent steps $C$ to the product of the number of discrete actions and the dimensionality of the system $M|\mathcal{A}|$. 
As the surface area of the hypersphere grows exponentially with the dimensionality, the latter is often significantly larger.

\subsection{Sampling State Space}

$O(NM^3)$ multiplications, where $N$ is the total number of samples.

First, we must generate $M^2 + M$ scalar random variables.
Then, QR factorization requires $O(M^3)$ multiplications \citep{trefethen_1997}.
As $N$ samples are required, the entire cost is $O(NM^3)$ multiplications.

\subsection{Advancing State and CRLB in Time}
\label{app:advance}

Worst Case: $O(M^3)$ multiplications; Best Case: $O(M^2)$ multiplications

As evaluating $\varphi^\tau$ and $d\varphi^\tau$ are heavily dependent on the particular system, we treat these as some fixed constant cost, though it ranges from $O(M)$ to $O(M^3)$.

The remainder of the update is then $\left[d \varphi^\tau_x\right]\left(\munderbar{\Sigma}^{-1} + \sigma^{-2} u u^\top\right)^{-1}\left[d \varphi^\tau_x\right]^\top$, where the inverse is computed using the Sherman--Morrison Formula \citep{sherman} 
\begin{equation}
\left(\munderbar{\Sigma}^{-1} + \sigma^{-2} uu^\top \right)^{-1} = \munderbar{\Sigma} - \frac{\munderbar{\Sigma}uu^\top \munderbar{\Sigma}}{\sigma^2 + u^\top \munderbar{\Sigma}u}.
    \label{eq:update_app}
\end{equation}

The number of multiplications for equation \eqref{eq:update_app} can be limited through the following order of computations.
\begin{enumerate}
    \item Compute $\munderbar{\Sigma} u$ for $M^2$ multiplications
    \item Compute $u^\top (\munderbar{\Sigma} u)$ for $M$ multiplications
    \item Compute $(\munderbar{\Sigma} u)/(\sigma^2 + u^\top \munderbar{\Sigma} u)$ for $M$ multiplications
    \item Compute $\left[(\munderbar{\Sigma} u)/(\sigma^2 + u^\top \munderbar{\Sigma} u)\right]\left[\munderbar{\Sigma} u\right]^\top$ for $M^2$ multiplications.
\end{enumerate}

Thus, the update equation requires $2(M^2 + M)$ multiplications.

\subsection{Computation of Local Average Weights}

$O(MN^2|\mathcal{A}|)$ multiplications, where $|\mathcal{A}|$ is the cardinality of the finite action space.

This involves computing the distance between all potential states after the observation at a given sample to all samples.
Thus, as $\|x - x'\|$ requires $M+1$ multiplications, the computation of the entire set of distances requires $MN^2|\mathcal{A}|$ multiplications, where $|\mathcal{A}|$ is the cardinality of the finite action space.

It is worth noting that this represents the primary cost of the dynamic programming approach, as the required value of $N^2$ grows rapidly with the dimensionality of the system.

\subsection{Value Iteration}

$O(|\mathcal{A}|NKP)$ multiplications, where $K$ is the average number number of points in a $d_{max}$ ball, and $P$ is the number of steps in value iteration.

In this section, assume $d_{max}$ is appropriately scaled such that states have an average number of $K \ll N$ neighbors.
Then, each action for each state sample requires on average $K$ multiplications, resulting in a total of $|\mathcal{A}|NK$ multiplications.

\subsection{Dynamic Programming Optimal Action Evaluation}

$O((M^3 + K)|\mathcal{A}|)$ multiplications

For each possible action, we must advance the state and CRLB forward in time by one timestep, requiring $O(M^3)$ multiplications. 
We then must compute the local average of the sample value function for $K$ multiplications.

\subsection{Extended Kalman Filter}

$O(M^2)$ multiplications

\begin{align}
    \hat{x}_{i|i} &= \hat{x}_{i|i-1} +  \frac{\munderbar{\Sigma}_i u_i(y_i - u_i^\top \hat{x}_{i|i-1})}{u_i^\top \munderbar{\Sigma}_i u_i + \sigma^2},
\end{align}

\begin{enumerate}
\item Compute $\munderbar{\Sigma} u$ for $M^2$ multiplications.
\item Compute $u^\top \left(\munderbar{\Sigma} u\right)$ for $M$ multiplications.
\item Compute $u^\top \hat{x}_{i|i-1}$ for $M$ multiplications
\item Combine multiplicative constants for $1$ multiplication.
\item Remaining update for $M$ multiplications.
\end{enumerate}

Adding the above costs shows that the EKF update step requires $M^2 + 3M + 1$ multiplications.

\subsection{Gradient Ascent Optimization}

$O(M^2)$ multiplications per gradient step

We now address the details of the iterative optimization of
\begin{equation}
    \argmax_{u: \|u\|=1} \frac{(v^\top \munderbar{\Sigma} u)^2 }{u^\top \munderbar{\Sigma} u + \sigma^2u^\top u}.
\end{equation}
To compute the gradient of the function on the sphere, we first need to compute the gradient in the ambient Euclidean space, then project onto the tangent space.

By the application of standard differentiation rules
\begin{align}
    d \left(\frac{(v^\top \munderbar{\Sigma} u)^2 }{u^\top \munderbar{\Sigma} u + \sigma^2u^\top u}\right)_u &=
    2 \frac{\left(u^\top \munderbar{\Sigma} u + \sigma^2u^\top u\right) v^\top \munderbar{\Sigma} u v^\top \munderbar{\Sigma} - \left(v^\top \munderbar{\Sigma} u\right)^2 (u^\top \munderbar{\Sigma} + u^\top)}{\left(u^\top \munderbar{\Sigma} u + \sigma^2u^\top u\right)^2} \\
    &= 2 \frac{\left(u^\top \munderbar{\Sigma} u + \sigma^2\right) v^\top \munderbar{\Sigma} u v^\top \munderbar{\Sigma} - \left(v^\top \munderbar{\Sigma} u\right)^2 (u^\top \munderbar{\Sigma} + u^\top)}{\left(u^\top \munderbar{\Sigma} u + \sigma^2\right)^2},
\end{align}
where the second line comes from noting that $\|u\| = 1$.
Finally, we can slightly reduce the computation further by noting that we eventually orthogonalize the gradient to $u$. 
Thus, the following is an equivalent computation
\begin{equation}
     2 \frac{\left(u^\top \munderbar{\Sigma} u + \sigma^2\right) v^\top \munderbar{\Sigma} u v^\top \munderbar{\Sigma} - \left(v^\top \munderbar{\Sigma} u\right)^2 u^\top \munderbar{\Sigma} }{\left(u^\top \munderbar{\Sigma} u + \sigma^2\right)^2},
\end{equation}

By performing operations in the following order, the differential of the value in Euclidean space requires few operations.
Below, we count the number of multiplications.
\begin{enumerate}
\item Compute $\munderbar{\Sigma} u$ for $M^2$ multiplications.
\item Compute $u^\top \left(\munderbar{\Sigma} u\right)$ for $M$ multiplications.
\item Compute $v^\top \left(\munderbar{\Sigma} u\right)$ for $M$ multiplications.
\item Compute $v^\top \munderbar{\Sigma}$ for $M^2$ multiplications. 
\item Computing remaining terms in numerator for $2M + 2$ multiplications.
\item Divide by numerator for $M + 1$ multiplications.
\end{enumerate}

Finally, we must project the result onto the tangent space of the sphere.
Fortunately, the tangent space is straightforward for a sphere, in particular, it is the space orthogonal to $u$.
Thus, we project the vector $u$ out with $2M$ additional multiplications.

By adding all of the above, we find that computation of the gradient requires $2M^2 + 7M + 3$ multiplications.

Once the gradient has been computed, it defines the discrete-time approximation of the gradient flow by following the geodesic curve
\begin{equation}
    u_{i+1} = \cos(\alpha_i \|s_i\|)u_i + \sin(\alpha_i \|s_i\|)s_i/\|s_i\|,
\end{equation}
where $s_i$ is the gradient in the ambient space.
The above computation requires another $3M$ multiplications.

Thus, each gradient ascent step requires $2M^2 + 10M$ plus some constant number of multiplications from evaluating sine and cosine.

\subsubsection{Gradient Ascent Simulations}

To give a sense of the non-linear optimization problem, we include visualizations of two potential optimization surfaces for 3D systems, shown as two different panels in Figure \ref{fig:optsurface}.
Because the domain of the function is the unit sphere, we include a pair of heatmaps in each panel representing the front and back of the sphere by a z-coordinate threshold to show the occluded side.

\begin{figure}[ht]
    \centering
    \includegraphics[width=0.95\textwidth]{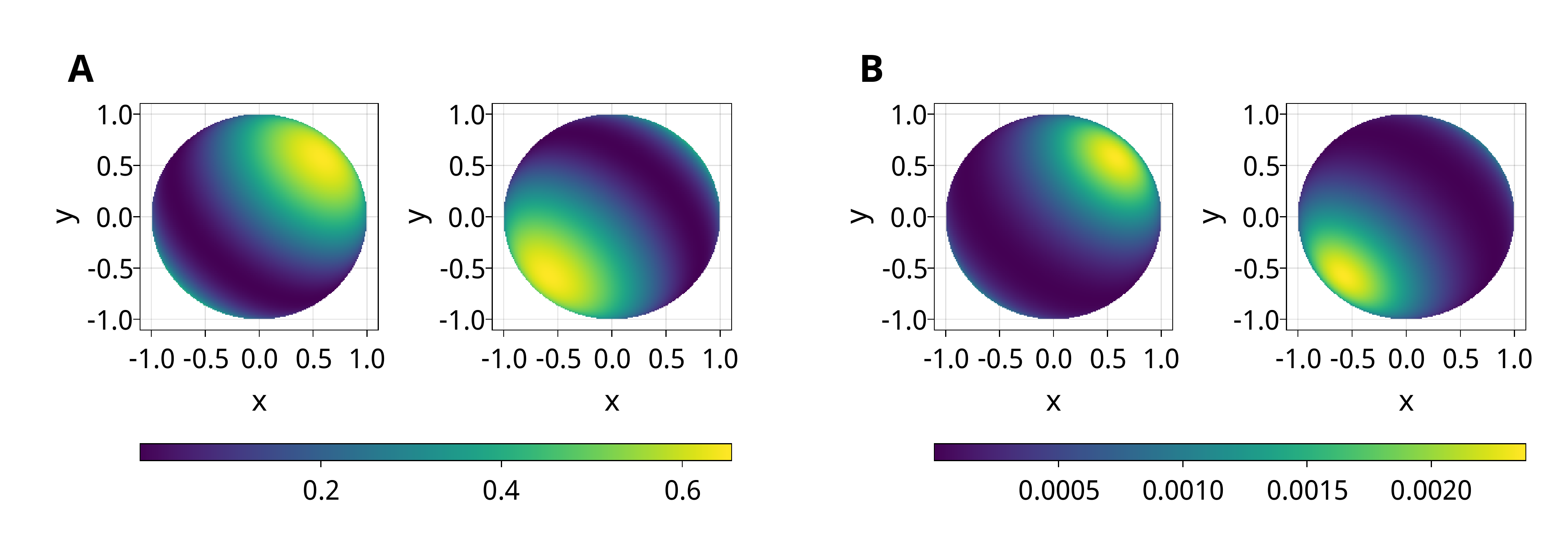}
    \caption{Optimization surface for gradient ascent for two different estimation CRLBs. In Panel B, the estimation CRLB is chosen to be poorly aligned with the limit vector.}
    \label{fig:optsurface}
\end{figure}

There are a two useful observations to be made.
First, as was clear from both intuition and the form of the optimization problem, the function is symmetrical.
Next, in these particular simulations, the function is relatively smooth with a ``unique'' maximum, only mirrored on the opposite side.
This second observation allows the gradient ascent to be rather straightforward and gives us some certainty that, given appropriate step sizes, we will identify the true optimum.
In these plots, $v = [3^{-1/2}, 3^{-1/2}, 3^{-1/2}]$ and $\sigma^2 = 1$.
The estimation CRLB took the form:
\begin{equation}
\text{\textbf{Panel A:} }\quad \munderbar{\Sigma} = \begin{bmatrix}
1.0 & 0.1 & 0.1 \\
0.1 & 1.0 & 0.1 \\
0.1 & 0.1 & 1.0 
\end{bmatrix}
\qquad\qquad 
\text{\textbf{Panel B:} }\quad \munderbar{\Sigma} = \begin{bmatrix}
1.0 & 0.0 & 0.0 \\
0.0 & 1.0 & 0.0 \\
0.0 & 0.0 & 1.0 
\end{bmatrix} - 0.95 v v^\top.
\end{equation}

Figure \ref{fig:trajectory} provides a representative trajectory of the gradient ascent operation over the previously illustrated objective.
\begin{figure}[ht]
    \centering
    \includegraphics[width=0.6\textwidth]{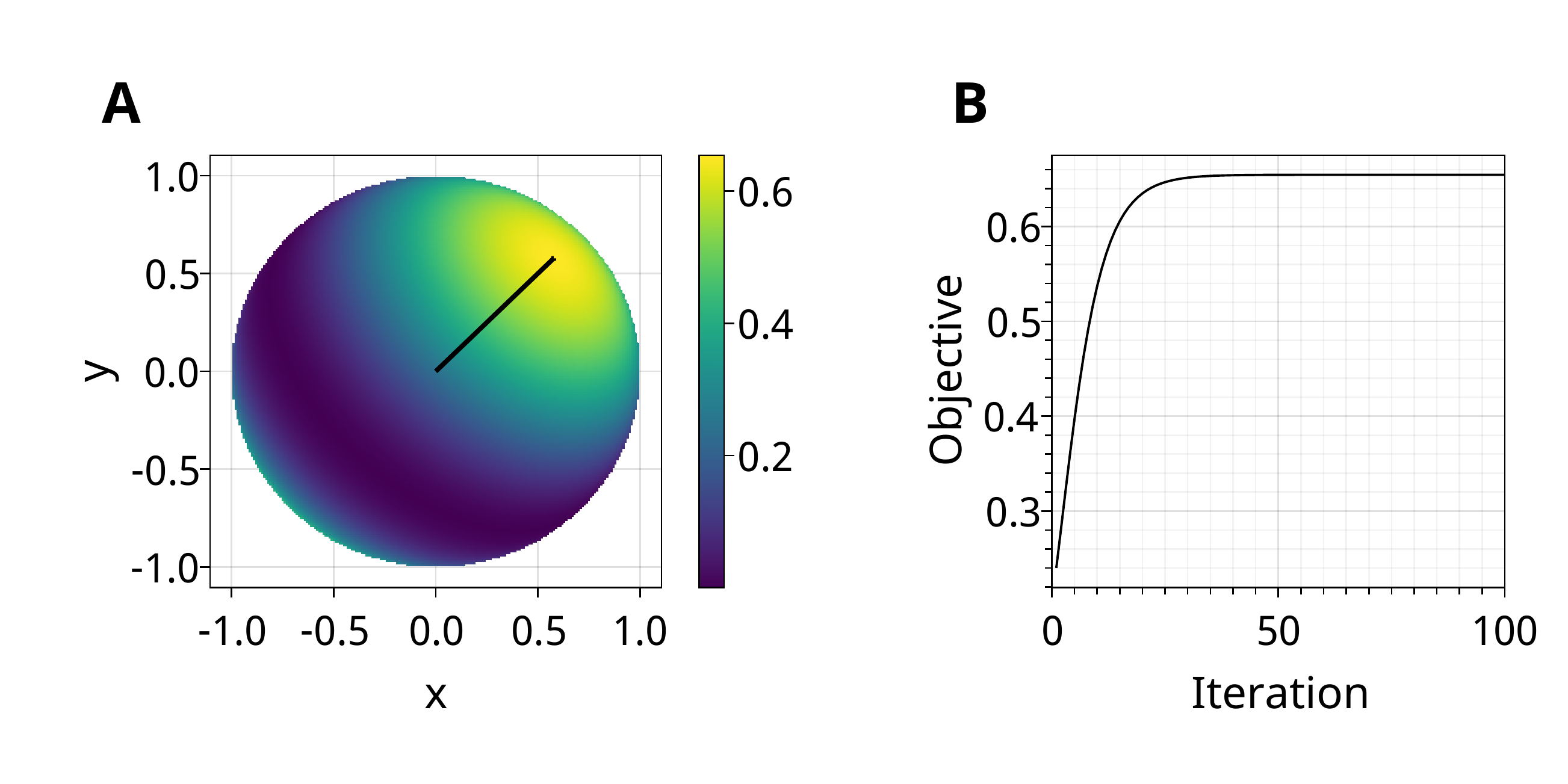}
    \caption{Example gradient ascent trajectory on our objective function. Panel A: Objective function on sphere with black line corresponding to the ascent trajectory; Panel B: Objective function value as a function of iteration.}
    \label{fig:trajectory}
\end{figure}

In Panel A of Figure \ref{fig:trajectory}, we plot the trajectory of the gradient ascent on top of the objective function.
The initialization was chosen to emphasize that the trajectory approximately follows a geodesic: when the projection of the sphere onto a plane is centered on the initial position, geodesic curves appear straight, similar to the behavior in this plot.
Panel B shows the evaluation of the objective function over time, illustrating the rapid convergence in this simulation.
In this simulation, a constant step size was chosen of $\alpha = 0.1$.

\subsection{Timing Simulations}

While the implementations in this work are not optimized, we include some timing plots below to give a sense of the rate at which the computational costs grow for the different approaches and provide a general order of magnitude for the performance.
In our timing plots, a Van der Pol system is being observed.
In order to best elucidate the scaling aspects, in the proposed method, the computation time is plotted as a function of dimensionality, while the dynamic programming computation time is plotted as a function of the number of state space representation points.
The timing plots are available in Figure \ref{fig:timingPlot}.

\begin{figure}[ht]
    \centering
    \includegraphics[width=0.6\textwidth]{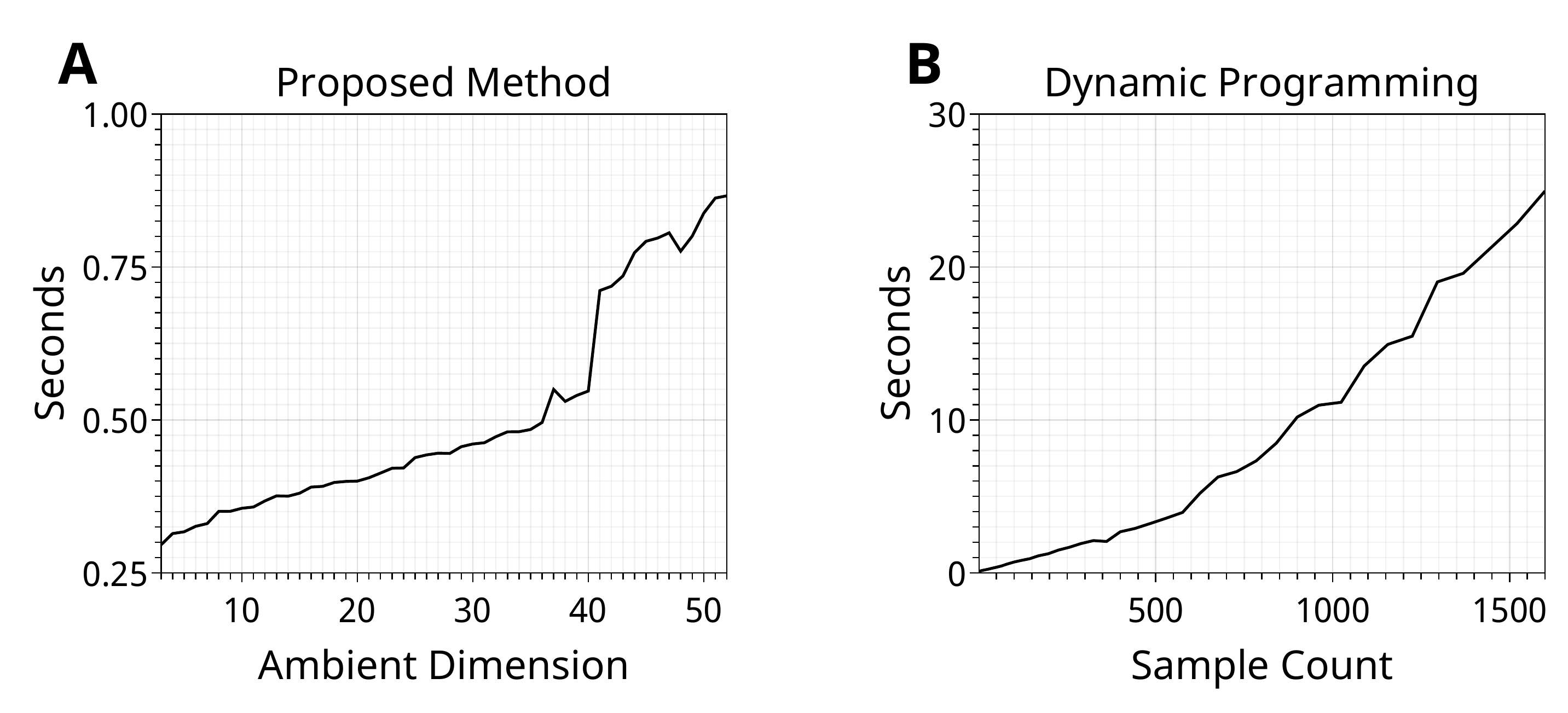}
    \caption{A: Timing of proposed method as a function of dimensions. B: Timing of the dynamic programming baseline as a function of the number of samples. Note that the required samples may grow exponentially with the dimensionality of the space, resulting in very fast growth of the computational costs.}
    \label{fig:timingPlot}
\end{figure}

To generate the plots, the time required to initialize the algorithms as well as compute 1000 decision steps was recorded.
Timing was done using the default number of runs in the BenchmarkTools library for Julia, which returns the minimum time of multiple runs to eliminate issues relating to just-in-time compilation and unrelated computer processes interfering with a run.

Timesteps were spaced by $0.05$ units of time in both simulations.
In the proposed method, the 1D approximation was made at $T=10$ units of time in the future, and $100$ gradient ascent steps were computed.
In the dynamic programming approach, the cardinality of the action space was 31, $d_{max} = 1$, and $100$ steps of value iteration were completed in the initialization.

In the simulation of the proposed method shown in Panel A of Figure \ref{fig:timingPlot}, we observe nearly linear growth in the computation time as a function of the dimensionality of the system with a sudden change around 40 dimensional systems potentially resulting in a change in caching at that point.
This suggests that for the chosen parameters, the cost is dominated by system simulation costs which grow linearly for the chosen system as a function of dimension.
In Panel B, the growth is clearly faster than linear.

It is additionally worth noting that, for reasonable performance, Panel B represents only a small subset of the domain of Panel A.
As previously shown in Figure \ref{fig:distances}, the required number of samples rapidly increases with the dimensionality of the system, resulting in the dynamic programming approach becoming intractable while the proposed method still requires less than 1 second of computation.

\subsection{System Specifications}
\label{sec:systemspec}

The code in this project was run on on an AMD Ryzen Threadripper 3960X with 64 GB of RAM and 8GB swap running Ubuntu 20.04.2 LTS.
All code was written in Julia 1.7.1, using the libraries
\begin{enumerate}
    \item DifferentialEquations v7.0.0, MIT "Expat" License, \citep{rackauckas2017differentialequations}
    \item DiffEqSensitivity v6.69.0, MIT "Expat" License, \citep{ma_comparision}
\item EllipsisNotation v1.1.3, MIT "Expat" License
\item ForwardDiff v0.10.25, MIT License, \citep{RevelsLubinPapamarkou2016}
\item SafeTestsets v0.0.1, MIT "Expat" License
\item Documenter v0.27.12, MIT License
\item CSV v0.10.2, MIT "Expat" License
\item BenchmarkTools v1.2.2, MIT "Expat" License, \citep{BenchmarkTools.jl-2016}
\item CairoMakie v0.6.4, MIT "Expat" License, \citep{DanischKrumbiegel2021}
\end{enumerate}

\section{NUMERICAL ISSUES}
\label{app:numerical}
While the approximately singular behavior in this work benefit the statistical properties, they can introduce numerical issues in the implementation.
While the algorithm introduced in the main paper functions in exact arithmetic, rounding errors in floating point arithmetic result in matrices becoming singular.
In this section, we discuss the required modifications to account for these numerical issues relating to inverting such matrices.

The main issue is from the CRLB update formula
\begin{equation}
    \munderbar{\Sigma}_{i+1} = \left[d\varphi_x^\tau\right] \left(\munderbar{\Sigma}^{-1}_{i} + \sigma^{-2}u u^\top\right)^{-1}\left[d\varphi_x^\tau\right]^\top.
\end{equation}
Recalling that $\munderbar{\Sigma}$ and $\munderbar{\Sigma}^{-1}$ both begin as positive definite matrices, consider the operations that could make the inverse impossible.
First, $\munderbar{\Sigma}^{-1} + \sigma^{-2}u u^\top$ may only become singular if $\sigma^{-2}uu^\top$ is significantly large compared to an eigenvalue of the Fisher information.
Fortunately, due to the dimensionality collapse and the continual collection of data, this issue does not appear.

The other, more likely, issue is that the application of the Jacobian creates a singular matrix.
In practice, this happens quite often and thus must be addressed by replacement with an equivalent computation.

To resolve these issues, we complete the updates through the Sherman--Morrison formula.
We include here a brief note on the behavior of the Sherman--Morrison formula when a matrix is singular.

\begin{equation}
    \left(\munderbar{\Sigma}^{-1} + \sigma^{-2} uu^\top \right)^{-1} = \munderbar{\Sigma} - \frac{\munderbar{\Sigma}uu^\top \munderbar{\Sigma}}{\sigma^2 + u^\top \munderbar{\Sigma}u}. \label{eq:inverse}
\end{equation}

We argue that, when $\munderbar{\Sigma}$ is singular, this update formula is equivalent to updating the subspace containing non-zero eigenvalues.
To see this, we simply replace $\munderbar{\Sigma}^{-1}$ with the Moore-Penrose pseudoinverse before multiplying the terms to verify the property.
Thus, replacement with the pseudo-inverse produces a new pseudo-inverse in the formula.
By recalling that the pseudoinverse can be computed by inverting all non-zero singular values, we can conclude that the formula operates correctly in the subspace.

\section{CODE USAGE}
\label{app:code}
The library produced for this manuscript is publicly available under the MIT license at \verb|github.com/Helmuthn/naumer_Dimensionality_2022.jl|.

The implementation is a Julia package, and while not registered, can be installed through the package manager in the REPL through
\verb|] add https://github.com/Helmuthn/naumer_Dimensionality_2022.jl|.
We include here a short summary of the interface which much be defined to apply the library to a new problem.

The functions in the library make use of the type \verb|AbstractSystem|, which is an abstract type with the following three functions:

\verb|dimension|

This function accepts an instance of \verb|AbstractSystem| and returns the ambient dimensionality.

\verb|flow|

This function accepts an instance of \verb|abstractSystem|, a state vector $x$, and a duration of time $\tau$ and returns the state advanced by $\tau$.

\verb|flowJacobian|

This function accepts an instance of \verb|abstractSystem|, a state vector $x$, and a duration of time $\tau$ and returns the Jacobian of the flow advancing by $\tau$ around initial condition $x$.

While trivial, we now include the complete implementation of \verb|LinearSystem|, which represents a linear system of ordinary differential equations, below as an example of an implementation of the interface.

\fakesection{LinearSystem}

\begin{lstlisting}[language=Julia]
"""
    LinearSystem{T} <: AbstractSystem{T}
Defines a linear system dx/dt = Ax
### Fields
 - `dynamics::Matrix{T}` - Matrix defining dynamics
"""
struct LinearSystem{T} <: AbstractSystem{T}
    dynamics::Matrix{T}
end

function flow(x::Vector, τ, system::LinearSystem)
    return exp(τ * system.dynamics) * x
end

function flowJacobian(x::Vector, τ, system::LinearSystem)
    return exp(τ * system.dynamics)
end

function dimension(system::LinearSystem)
    return size(system.dynamics)[1]
end
\end{lstlisting}

In typical applications, there will be no closed form solution for the dynamics.
When this is the case, one can use standard libraries such as \verb|DifferentialEquations.jl| and \verb|ForwardDiff.jl| to define the AbstractSystem.
An example is included below.

\fakesection{VanDerPolSystem}

\begin{lstlisting}[language=Julia]
using DifferentialEquations, ForwardDiff

struct VanDerPolSystem{T} <: AbstractSystem{T}
    μ::T
end

function dimension(system::VanDerPolSystem)
    return 2
end

function vanderpolDynamics!(du, u, p, t)
    x, y = u
    μ = p
    du[1] = y
    du[2] = μ * (1 - x^2) * y - x
end

function flow(x::Vector, τ, system::VanDerPolSystem)
    problem = ODEProblem(vanderpolDynamics!, x, (0.0, τ), system.μ)
    sol = solve(problem, Tsit5(), reltol=1e-8)

    return sol[end]
end

function flowJacobian(x::Vector, τ, system::VanDerPolSystem)
    problem = ODEProblem(hopfDynamics!, x, (0.0, τ), system.μ)

    function solvesystem(init)
        prob = remake(problem, u0=init)
        sol = solve(prob, Tsit5(), reltol=1e-8)
        return sol[end]
    end

    return ForwardDiff.jacobian(solvesystem, x)
end
\end{lstlisting}

Discrete-time systems can be defined by restricting $\tau$ in the function definitions, i.e.

\begin{lstlisting}[language=Julia]
flow(x::Vector, τ::Integer, system::AbstractSystem)
\end{lstlisting}

\fakesection{DiscreteSystem}

\begin{lstlisting}[language=Julia]
struct DiscreteSystem{T} <: AbstractSystem{T}
    dynamics::Matrix{T}
end

function flow(x::Vector, τ::Integer, system::DiscreteSystem)
    out = copy(x)
    for i in 1:τ
        out .= system.dynamics * out
    end
    return out
end

function flowJacobian(x::Vector, τ::Integer, system::DiscreteSystem)
    return system.dynamics^τ
end

function dimension(system::DiscreteSystem)
    return size(system.dynamics)[1]
end
\end{lstlisting}

\section{SIMULATION DETAILS}
\label{app:details}
In this section, we provide all parameters used in the generation of the figures in the main work.
All code used to generate the figures is additionally available in the previously mentioned repository.

\subsection{Figure \ref{fig:DimensionCollapse}}
Simulations regarding the Van der Pol oscillator with parameter $\mu=1$ were completed for this figure.

In Panel A, the direction of the flow was plotted on a grid with samples spaced by approximately $1/3$.

In Panel B, four trajectories initialized at
\begin{equation}
x_0 = \begin{bmatrix}
    0.0 \\ -4.0
\end{bmatrix} \qquad
x_0 = \begin{bmatrix}
    0.0 \\ -0.1
\end{bmatrix} \qquad
x_0 = \begin{bmatrix}
    0.0 \\ 0.1
\end{bmatrix} \qquad
x_0 = \begin{bmatrix}
    0.0 \\ 4.0
\end{bmatrix},
\end{equation}
and were simulated for 20 seconds of system time.

In Panel C, systems initialized at each of the grid points from Panel A were simulated for 20 seconds of system time with relative and absolute tolerance of $10^{-6}$.
We then numerically differentiated the solution using the ForwardDiff library, a common forward mode automatic differentiation library in Julia with good support for differential equations sensitivity analysis.
The singular value decomposition of the resulting Jacobian was computed, and the right singular vector associated with the largest singular value was plotted. 

In Panel D, to compute a representation of the limit cycle, the system was initialized at 
\begin{equation}
x_0 = \begin{bmatrix} 1.0 & 1.0 \end{bmatrix}^\top
\end{equation}
and was then simulated for 20 seconds of system time.
We then computed the squared distances between the final point of the trajectory and the previous 100 positions spaced by 0.01 seconds of system time.
As the period of the limit cycle is between 1 and 2 seconds of system time, we then computed the nearest position in the list.
The trajectory from that index until the end was used to represent the phase of the limit cycle.

At each of the grid points from Panel A, the system trajectory was simulated for 40 seconds of system time.
It was then assigned the index of the closest point in the phase representation of the limit cycle as its value and plotted as a heatmap with contours.

\subsection{Figure \ref{fig:SystemSampling}}
\subsubsection{Panel A}
Random sampling was completed through MersenneTwister in the Random library of Julia with the seed 1234, chosen arbitrarily.

The system took a constant value of $x = \begin{bmatrix}1 & 1 \end{bmatrix}^\top$.

As the system trajectory value does not impact the CRLB updates in this system, 1000 PSD matrix samples were used with eigenvalues following exponential distributions with parameter $\lambda=1$, while only one state space sample was chosen.
The maximum distance for averaging was chosen as $d_{max}=1$.

Observations of the system were made under i.i.d. Gaussian noise with variance $\sigma^2 = 1$.
The action space for both the dynamic programming baseline approach and the proposed method was the set of vectors along the unit circle spaced by $0.1$ radians between $0$ and $\pi$.

The CRLB was initialized as
\begin{equation}
    \munderbar{\Sigma} = \begin{bmatrix} 4 & 0 \\
    0 & 4 \end{bmatrix},
\end{equation}
while the state was randomly initialized by sampling two i.i.d. Gaussian random variables with variance $4$.

Value iteration was run for 1000 iterations with a discount factor of $\gamma = 0.99$ to ensure a focus on distant time horizons.

The proposed method was not run for this system, as it relies inherently on uneven singular values.

\subsubsection{Panel B}
Random sampling was completed through MersenneTwister in the Random library of Julia with the seed 1234, chosen arbitrarily.

The system followed the linear dynamics
\begin{equation}
\dot x = \begin{bmatrix}-10.0  & 0 \\ 0 & -0.1 \end{bmatrix}x,
\end{equation}
discretized into timesteps of duration 0.01.

As the system trajectory value does not impact the CRLB updates in this system, 100 PSD matrix samples were used with eigenvalues following exponential distributions with parameter $\lambda=1$, while only one state space sample was chosen.
The maximum distance for averaging was chosen as $d_{max}=2$.

Observations of the system were made under i.i.d. Gaussian noise with variance $\sigma^2 = 1$.
The action space for both the dynamic programming baseline approach and the proposed method was the set of vectors along the unit circle spaced by $0.05$ radians between $0$ and $\pi$.

The CRLB was initialized as
\begin{equation}
    \munderbar{\Sigma} = \begin{bmatrix} 4 & 0 \\
    0 & 4 \end{bmatrix},
\end{equation}
while the state was randomly initialized by sampling two i.i.d. Gaussian random variables with variance $4$.

Value iteration was run for 10000 iterations with a discount factor of $\gamma = 0.95$ to ensure a focus on distant time horizons.

For the proposed technique, the limiting singular vector was approximated at $10$ seconds of system time in the future.

\subsubsection{Panel C}
Random sampling was completed through MersenneTwister in the Random library of Julia with the seed 1234, chosen arbitrarily.

The system followed a Hopf bifurcation dynamic
\begin{equation}
\begin{split}
\dot x_{(1)} &= x_{(1)}(1 - \|x\|^2) - x_{(2)} \\
\dot x_{(2)} &= x_{(2)}(1 - \|x\|^2) + x_{(1)} 
\end{split}
\end{equation}
discretized into timesteps of duration $0.01$.

The dynamic programming space was approximated using the product space of 30 PSD matrix samples with eigenvalues following exponential distributions with parameter $\lambda=1$ and 120 state space samples.
The state space samples were chosen by selecting 30 i.i.d. Gaussian vectors with each element independent with variance 25, then advancing each of the points by 4 timesteps, retaining each sample in the trajectory.
The maximum distance for averaging was chosen as $d_{max}=2$.

Observations of the system were made under i.i.d. Gaussian noise with variance $\sigma^2 = 1$.
The action space for both the dynamic programming baseline approach and the proposed method was the set of vectors along the unit circle spaced by $0.1$ radians between $0$ and $\pi$.

The CRLB was initialized as
\begin{equation}
    \munderbar{\Sigma} = \begin{bmatrix} 4 & 0 \\
    0 & 4 \end{bmatrix},
\end{equation}
while the state was randomly initialized by sampling two i.i.d. Gaussian random variables with variance $4$.

Value iteration was run for 100 iterations with a discount factor of $\gamma = 0.99$ to ensure a focus on distant time horizons.

For the proposed technique, the limiting singular vector was approximated at $30$ seconds of system time in the future.

\subsubsection{Panel D}
Random sampling was completed through MersenneTwister in the Random library of Julia with the seed 1234, chosen arbitrarily.

The system followed a Hopf bifurcation dynamic
\begin{equation}
\begin{split}
\dot x_{(1)} &= 10(x_{(2)} - x_{(1)})\\
\dot x_{(2)} &= x_{(1)}(28 - x_{(3)}) - x_{(2)} \\
\dot x_{(3)} &= x_{(1)}x_{(2)} - \frac{8}{3}x_{(3)}
\end{split}
\end{equation}
discretized into timesteps of duration $0.01$.

The dynamic programming space was approximated using the product space of 30 PSD matrix samples with eigenvalues following exponential distributions with parameter $\lambda=1$ and 120 state space samples.
The state space samples were chosen by selecting 100 i.i.d. Gaussian vectors with each element independent with variance 25.
The maximum distance for averaging was chosen as $d_{max}=2$.

Observations of the system were made under i.i.d. Gaussian noise with variance $\sigma^2 = 1$.
The action space for both the dynamic programming baseline approach and the proposed method was a set of 50 i.i.d. uniformly chosen unit vectors.

The CRLB was initialized as
\begin{equation}
    \munderbar{\Sigma} = \begin{bmatrix} 1 & 0 & 0 \\
    0 & 1 & 0 \\
    0 & 0 & 1\end{bmatrix},
\end{equation}
while the state was randomly initialized by sampling two i.i.d. Gaussian random variables with variance $4$.

Value iteration was run for 100 iterations with a discount factor of $\gamma = 0.9$, intentionally lower due to the chaotic behavior of the system.

For the proposed technique, the limiting singular vector was approximated at $1$ second of system time in the future, lower than the others to account for the lack of actual convergence.
The time horizon was chosen based upon the time horizon being plotted.

\subsection{Figure \ref{fig:scaling}}

\subsubsection{Panels A and B}
Simulations were identical to those in Figure \ref{fig:SystemSampling} with the additional step of the extended Kalman filter.
Observation decisions were made according to the estimated state.

The cost was normalized to the oracle solution for the dynamic programming approach in order to emphasize the difference between the performance of the four techniques.

\subsubsection{Panel C}
Random sampling was completed through MersenneTwister in the Random library of Julia with the seed 1234, chosen arbitrarily.

The simulated system followed a Van Der Pol Oscillator with $\mu=1$ as the chosen parameter augmented with additional linear dimensions governed by $\dot{x}_{(i)} = -x_{(i)}$.

The CRLB was evaluated after 1000 observations, spaced by $0.05$ units of system time.
Observations were made under i.i.d. Gaussian noise with variance $\sigma^2 = 1$.

1000 steps of gradient ascent were completed for each observation with decaying step sizes $\alpha_i = 10^4 i^{-2/3}$, which empirically helped with ensuring quick convergence for all objective functions.

\subsection{Figure \ref{fig:distances}}
Random sampling was completed through MersenneTwister in the Random library of Julia with the seed 4321, chosen arbitrarily.

For each system dimensionality $M$, 10000 i.i.d. $M \times M$ positive definite matrices were generated with eigenvalues sampled from an exponential distribution with parameter $\lambda = 1$.

Then, for each given number of samples $N$, we choose $N+1$ points uniformly without replacement from the original dataset and compute the minimum distance from one of those selected points chosen randomly from the subset.
The process is averaged 5000 times.

\end{document}